\newcommand*{\addFileDependency}[1]{
\typeout{(#1)}
%
%
\@addtofilelist{#1}
%
\IfFileExists{#1}{}{\typeout{No file #1.}}
}\makeatother
\newcolumntype{Y}{>{\RaggedRight\arraybackslash}X}
\def\Lp#1{\mathrm{L}^{#1}}
\def\Ck#1{\mathrm{C}^{#1}}
\def\spaceBar{\, | \,}
\newcommand{\ub}{\mathbf{u}}
\DeclareMathOperator{\nn}{NN}
\def\commentOut#1{}
\title{A Deep Learning Framework for Multi-Operator Learning: Architectures and Approximation Theory}
\author[1]{Adrien Weihs}
\author[2]{Jingmin Sun}
\author[3]{Zecheng Zhang}
\author[1]{Hayden Schaeffer}
\affil[1]{Department of Mathematics,\protect\\ University of California Los Angeles,\protect\\ Los Angeles, CA 90095, USA. \vspace{\baselineskip}}
\affil[2]{Department of Applied Mathematics and Statistics,\protect\\ Johns Hopkins University,\protect\\ Baltimore, MD 21218 , USA. \vspace{\baselineskip}}
\affil[3]{Department of Applied Computational Mathematics and Statistics, \protect\\ University of Notre Dame,\protect\\ Notre Dame, IN 46556, USA }
\date{October 2025}
\newcommand{\UAPNetwork}{\mathrm{MONet}}
\newcommand{\UAPNetworkFinite}{\mathrm{MONet_{vect}}}
\newcommand{\ScalingNetwork}{\mathrm{MNO}}
\newcommand{\CCNet}{\mathrm{Net}}
\begin{document}

\maketitle

\begin{abstract}
\noindent While many problems in machine learning focus on learning mappings between finite-dimensional spaces, scientific applications require approximating mappings between function spaces, i.e., operators. 
We study the problem of learning collections of operators and provide both theoretical and empirical advances. 
We distinguish between two regimes: (i) multiple operator learning, where a single network represents a continuum of operators parameterized by a parametric function, and (ii) learning several distinct single operators, where each operator is learned independently. 
For the multiple operator case, we introduce two new architectures, $\ScalingNetwork$ and $\UAPNetwork$, and establish universal approximation results in three settings: continuous, integrable, or Lipschitz operators. For the latter, we further derive explicit scaling laws that quantify how the network size must grow to achieve a target approximation accuracy. For learning several single operators, we develop a framework for balancing architectural complexity across subnetworks and show how approximation order determines computational efficiency.
Empirical experiments on parametric PDE benchmarks confirm the strong expressive power and efficiency of the proposed architectures. Overall, this work establishes a unified theoretical and practical foundation for scalable neural operator learning across multiple operators.
\end{abstract}

\keywords{Deep Neural Networks, Approximation Theory, Neural Scaling Laws, Operator Learning, Multi-Operator Learning.}

\subjclass{41A99, 68T07}

\section{Introduction} 

Classical machine learning is primarily concerned with learning functions of the form  
\[
f : \mathbb{R}^n \to \mathbb{R}^d
\]  
where finite-dimensional inputs are mapped to finite-dimensional outputs. In many scientific and engineering applications, however, the goal is to approximate mappings between function spaces,  
\[
G : U \to V
\]  
where \(U\) and \(V\) are typically subsets of infinite-dimensional Banach or Hilbert spaces. 
Such problems arise, for instance, in learning solution operators of ordinary and partial differential equations~\cite{ChenChen1995, deepOnet,Bhattacharya,li2021fourier}, and span a wide range of scientific and engineering domains~\cite{pathak2022fourcastnet,zhu2023fourierdeeponet,jiang2023fouriermionet,li2023fnoseismic,moya2023operatorgrid,chen2023neuraloperator}. 
This framework is known as operator learning, and it extends classical supervised learning from the setting of functions to that of operators acting on functions. 
We refer to \cite{KOVACHKI2024419,lu2022comprehensive} and references therein for a review and comparison of approaches in this topic.

Neural networks form a natural framework for operator learning, combining the flexibility to approximate complex nonlinear mappings with a strong record of empirical success in scientific and engineering applications \cite{deepLearningImages,Graves2013SpeechRW,KHOO_LU_YING_2021,Jentzen,zhangBelnet}. Modern operator-learning networks~\cite{ChenChen1993,ChenChen1995} typically decompose the learned operator into interacting subnetworks that process different aspects of the input, such as spatial variables, input functions, or parameters, before combining them through summation or tensor-like contractions.
For example, in the terminology of DeepONet~\cite{deepOnet}, the branch subnetwork encodes the input function, while the trunk subnetwork represents a basis for the output function space. This basis, constructed by neural networks, can also be designed to mimic classical finite element bases.
These architectures draw inspiration from low-rank approximations, where complex mappings are expressed as sums of separable, lower-dimensional functions~\cite{markovsky2012lowrank}. Neural operator networks can thus be viewed as nonlinear analogues of such expansions, with each subnetwork learning one component of a functional basis.

However, the neural network approach also introduces challenges that are inherent to its design: in particular, how to construct architectures that are both simple to implement and empirically effective, while also supported by rigorous mathematical guarantees. These difficulties become especially pronounced in the multi-operator setting, where the need to represent numerous complex operators often leads to rapidly increasing architectural complexity.

In this work, we distinguish between two related but conceptually distinct settings in which numerous operators are involved. The first is the multiple operator learning setting,
$G : W \to \{G[\alpha]: U^{(\alpha)} \mapsto V^{(\alpha)} \}_{\alpha \in W}$,
where the parametric function $\alpha \in W$ serves as an explicit input to the network, allowing a single model to represent a continuum of operators indexed by $\alpha$.
The second concerns learning several single operators,
$\{G^{(j)} : U^{(j)} \to V^{(j)}\}_{j \in J}$,
where each operator is learned independently and the dependence on the index $j$ remains external to the model.
We summarize and contrast the main differences between these two formulations in Table~\ref{tab:single-vs-multiple}. This distinction clarifies the different modeling challenges posed by operator learning in practice. Building on this, we investigate fundamental theoretical and practical aspects of designing expressive and efficient neural architectures for learning collections of operators. Specifically, we address three central questions:
\begin{enumerate}[label=\textbf{Q.\arabic*}]
    \item Can one construct architectures that are provably expressive, yielding (quantitative) universal approximation guarantees?
    \item How can network architectures be designed to exploit shared structure across related operators, balance complexity among functional components, and attain optimal approximation and scaling performance? 
    \item Are the proposed architectures empirically efficient and capable of strong performance on representative learning tasks? 
\end{enumerate}
In addressing these questions, we provide both theoretical insights and empirical evidence that clarify the principles underlying expressive and scalable operator-learning networks.

\subsection{Key Contributions}

Our main contributions in the multiple operator learning setting (summarized in Table \ref{tab:contributions}) are as follows:

\begin{enumerate}

    \item We introduce two new architectures for multiple operator learning, 
    $\ScalingNetwork$ and $\UAPNetwork$, designed to generalize existing 
    operator learning models and provide flexible building blocks for 
    theoretical and practical analysis. 
    
    \item We establish \textbf{universal approximation} results for multi-operator learning, 
    showing that both architectures can approximate any continuous operator 
    to arbitrary accuracy on compact sets. 
    
    \item We establish a \textbf{weak universal approximation} property for multi-operator learning, 
    proving that both architectures can approximate measurable operators, 
    thereby extending expressivity guarantees beyond the continuous setting. 
    
    \item We establish a \textbf{strong universal approximation} property 
    for Lipschitz operators when approximated using our proposed $\ScalingNetwork$ model for multi-operator learning. In this case 
    we also derive \textbf{scaling laws}, i.e., quantitative estimates of 
    the required network size to achieve a prescribed accuracy. Specifically, we show that the approximation error $\eps$ scales as follows for $N_\#$ the total number of parameters in the network: \[\eps \asymp  \l \frac{\log \log N_\# }{ \log \log \log N_\#} \r^{-1/d_W},\]
    where $d_W$ denotes the dimension of the domain of functions in $W$. This is done in the general multi-operator setting without additional knowledge of the properties of the collection operators besides their regularity.

    \item We show that our scaling results apply not only to the proposed 
    $\ScalingNetwork$, but also \textbf{extend to a broad family of architectures}, 
    including MIONet \cite{mionet}, thereby unifying several 
    existing approaches under a common framework.

    \item We complement our theoretical contributions with \textbf{empirical validation} on a wide range of PDE problems, considering both discrete and continuum input parameters $\alpha$, and demonstrate that the proposed architectures achieve strong performance in practice.
\end{enumerate}

\begin{table}[H]
\centering
\small
\renewcommand{\arraystretch}{1.5}
\setlength{\tabcolsep}{8pt}
\begin{tabularx}{\linewidth}{>{\bfseries}p{6cm} Y Y}
\toprule
& $\UAPNetwork$ & $\ScalingNetwork$ \\
\midrule
New architectures 
& \checkmark 
& \checkmark \\
\midrule
Standard UAP (continuous operators) 
& \checkmark (Theorem \ref{thm:mainResult:universalApproximationI})
& \checkmark (Theorem \ref{thm:mainResult:universalApproximationI})\\
\midrule
Weak UAP (measurable operators) 
& \checkmark (Theorem \ref{thm:mainResult:universalApproximationII})
& \checkmark (Theorem \ref{thm:mainResult:universalApproximationII})\\
\midrule
Strong UAP (Lipschitz operators) 
& — 
& \checkmark (Theorem \ref{thm:main:multipleOperatorApproximation})\\
\midrule
Scaling laws (quantitative rates) 
& — 
& \checkmark (Theorem \ref{thm:main:multipleOperatorApproximation})\\
\midrule
Empirical validation 
& \checkmark 
& \checkmark \\
\bottomrule
\end{tabularx}
\caption{Summary of contributions in the multiple operator learning setting: expressivity guarantees/universal approximation property (UAP), scaling laws, and empirical validation for $\ScalingNetwork$ and $\UAPNetwork$ which are two architectures for multiple operator learning.}
\label{tab:contributions}
\end{table}

\subsection{Additional Contributions}

In addition to the main contributions above, our results contribute to the setting of learning several single operators (summarized in Table~\ref{tab:single-operator-contributions}) and are summarized as follows:

\begin{enumerate}
    \item We establish a \textbf{principled framework for selecting architectures} when approximating several single operators, showing that the index dependence $j$ can, under suitable structural conditions on $U^{(j)}$ and $V^{(j)}$, be absorbed into a single network component for improved efficiency. This generalizes to other related works D2NO \cite{zhang2024d2no}, MIONet \cite{mionet}, and MODNO \cite{zhang2024modno}. 
    \item We demonstrate that the \textbf{theoretical approximation order determines how scaling complexity is distributed between subnetworks}, and that the computational burden can be shifted between components without affecting the expressive capacity of the model.
    \item We show that the \textbf{theoretical approximation order directly impacts efficiency}, emphasizing the key role of architecture in determining computational complexity. Specifically, the attainable rate of approximation with respect to the total number of parameters $N_\#$ depends on the adopted approximation order, yielding either\[
\Bigg(\frac{\log N_\#}{\log\log N_\#}\Bigg)^{-\frac{1}{(1+d_V)d_U}} \quad \text{or} \quad \Bigg(\frac{\log N_\#}{\log\log N_\#}\Bigg)^{-\frac{1}{d_U}},\]
where $d_U$ and $d_V$ denote the dimensions of the domain of functions in $U$ and $V$, respectively. 
    
\end{enumerate}

\begin{table}[H]
\centering
\small
\renewcommand{\arraystretch}{1.5}
\setlength{\tabcolsep}{8pt}
\begin{tabularx}{\linewidth}{>{\bfseries}p{8cm} Y}
\toprule
Principled framework for architecture design 
& \checkmark (Remarks \ref{rem:uniformOperatorApproximation}, \ref{rem:alternative}, \ref{rem:mnoScalingLaws}) \\

\midrule
Balancing of scaling complexity across subnetworks 
& \checkmark (Theorem \ref{thm:back:operatorApproximation} and Remark \ref{rem:balancingComplexity}) \\

\midrule
Impact of approximation order on efficiency 
&  \checkmark (Remarks \ref{rem:totalParametersOperatorLearning} and \ref{rem:complexityOrder}) \\
\bottomrule
\end{tabularx}
\caption{Summary of contributions for the setting of learning several single operators. 
}
\label{tab:single-operator-contributions}
\end{table}
Note that the approximation order refers to the choice of hierarchical approximation steps, e.g., approximating functions, then functionals (then operators---in the multiple operator case). 

\subsection{Related works} 

A wide range of research has contributed to the development of operator-learning theory and practice, spanning multi-operator learning strategies, neural operator architectures, and the theoretical foundations of expressivity and scaling. We briefly summarize these directions below.

\paragraph{Multi-Operator Learning}
The motivation for learning collections of operators arises in several contexts: in some applications, it is inherent to the problem formulation itself, while in others, it serves as a means to improve the generalization capability of operator-learning models. Recently, several multi-operator learning approaches have been introduced~\cite{sun2025foundation,liu2024prose,mccabe2023multiple,yang2023incontext,yang2023prompting, cao2024vicon, zhang2024modno, zhang2024d2no, liu2025bcat, ye2025pdeformer, zhang2025probabilistic}. In particular, the works of~\cite{sun2025foundation,liu2024prose} demonstrated that multi-operator learning can accurately address new tasks beyond those seen during training.

As previously discussed, one can either (1) learn several single operators independently, or
(2) consider a more general setting in which the family of operators is encoded through a (discrete or continuous) parametric function~$\alpha$. In the former case, exemplified by~\cite{zhang2024modno}, one exploits only the information contained in the input functions of different operators, which typically limits the model’s ability to handle highly varying families of operators and prevents generalization to unseen ones. By contrast, in the latter case, methods that employ an operator-encoding strategy \cite{sun2025foundation,liu2024prose,yang2023prompting, negrini2025multimodal, liu2024prosefd} incorporate an explicit representation of the operator, such as its governing equation, symbolic form, text, or task label, alongside the corresponding input functions. This additional encoding generally yields stronger generalization and represents a potential construction for PDE foundation models. Notably, the inclusion of operator information enables zero-shot generalization to new PDE tasks, as shown in\cite{sun2025foundation}, and such approaches have demonstrated promising capabilities for addressing out-of-distribution tasks without costly retraining.

Despite these advances, a rigorous theoretical understanding of the expressivity and scaling behavior of neural networks in multi-operator regimes remains limited. For learning several single operators, our work provides guidance for architectural design to leverage common structure and achieve optimal scaling complexity. For multiple operator learning, our work introduces new architectures and provides the analysis of universal approximation as well as scaling behavior. 

\paragraph{Neural Operator Architectures}
A variety of neural operator architectures have been developed to approximate mappings between infinite-dimensional function spaces efficiently. Among the most widely used are Deep Operator Networks (DeepONets)\cite{deepOnet}, which rely on low-rank functional decompositions; Fourier Neural Operators \cite{li2021fourier}, motivated by Fourier spectral methods; and Deep Green Networks \cite{deepGreen,BoulleGreen}, which learn Green’s functions of PDEs directly. These models differ primarily in their structural assumptions which in turn govern their discretization strategy, scalability, and domain of applicability. Several variants, including Graph Neural Operators and Multipole Graph Neural Operators\cite{anandkumar2019neural,multipole}, further leverage sparsity or multiscale interactions to reduce computational cost. 
We refer to \cite{KOVACHKI2024419,Goswami2023} and references therein for further models in operator learning.

The proposed $\ScalingNetwork$ and $\UAPNetwork$ architectures retain the separable structure characteristic of DeepONet, while extending these models to the multiple operator learning regime.

\paragraph{Expressivity and Scaling Laws}
The foundation of operator-learning theory rests on universal approximation results, which establish that a given architecture can approximate a broad class of operators to arbitrary accuracy.
The development of an operator network and the study of universal approximation for mappings between spaces of scalar-valued functions is due to \cite{ChenChen1995,ChenChen1993}. 
Extensions to DeepONet were provided by \cite{Lanthaler2022, liu2024neuralscalinglawsdeep}, to the Fourier Neural Operator by \cite{Kovachki2021}, and to PCA-Net by \cite{Bhattacharya} etc. 
Further notable developments related to this work include \cite{mionet,CASTRO2023127413,castro2022,Huang2025,Kovachki2023,zhangBelnet, zhang2025discretization}.

Beyond universal approximation, neural scaling laws provide a quantitative framework for characterizing how network performance scales with data size, model capacity, and computational costs. Developing a theoretical foundation for these laws is essential, as it enables rigorous analysis of generalization error in deep learning and offers predictive insight into how performance improves with increasing data, model complexity, or training time~\cite{kaplan2020scalinglawsneurallanguage}. Empirical studies such as \cite{dehoop2022costaccuracytradeoffoperatorlearning} explored the cost–accuracy trade-off across neural operator architectures, quantifying how network size and data availability affect approximation error. In \cite{liu2024neuralscalinglawsdeep}, the scaling laws and complexity for deep ReLU networks and DeepONet were rigorously derived and analyzed. Additionally, complexity analyses were carried out theoretically for DeepONet by \cite{Lanthaler2022} and extended to PCA-Net in \cite{lanthalerPCAnet}. Related analyses can be found in \cite{marcati2023,herrman,lanthalerStuart,furuya2023globally}. Sample complexity bounds for DeepONet and related models are studied in \cite{liu2024,liu2024neuralscalinglawsdeep}, and out-of-distribution generalization estimates in \cite{benitez}.

In the context of multiple-operator learning, empirical analyses have recently been reported in \cite{sun2025lemonlearninglearnmultioperator,jollie2025time}.
In this work, we establish the universal approximation of $\ScalingNetwork$ and $\UAPNetwork$ for multiple operators.
We also partly extend the work in \cite{liu2024neuralscalinglawsdeep} to the multiple operator setting and derive scaling laws for $\ScalingNetwork$ and related models. 

The remainder of the paper is structured as follows: in Section~\ref{sec:background}, we review the mathematical background relevant to our proposed methods; in Section~\ref{sec:main}, we present our main results; in Section \ref{sec:proofs}, we provide detailed proofs of our results; in Section~\ref{sec:experiments}, we show the strong empirical performance of our proposed models for multiple operator learning; and in Section~\ref{sec:discussion}, we conclude with a summary of our contributions and a discussion of potential directions for future work.

\section{Background} \label{sec:background}

\subsection{Operator learning} \label{sec:back:operator}

In this section, we recall key results in operator learning, related to the framework introduced by \cite{ChenChen1995}, which forms the basis for our subsequent universal approximation analysis. We start by defining the class of Tauber–Wiener (TW) activation functions.

\begin{mydef}[Tauber–Wiener functions]
A function \( \sigma : \mathbb{R} \to \mathbb{R} \) is called a Tauber–Wiener function if for all $a<b$, $\eps > 0$, and $f \in \Ck{0}([a,b])$, there exists a linear combination $g(x) = \sum_{i=1}^N c_i \, \sigma(\lambda_i x + \theta_i)$ such that \[
\Vert f - g \Vert_{\Ck{0}([a,b])} < \eps,
\]
where $N=N(\eps)$ depends on the desired accuracy.
\end{mydef}
The above definition requires that the activation function~$\sigma$ enable the construction of dense subsets of $\Ck{0}([a,b])$. Typical examples of activation functions satisfying this condition, i.e., belonging to the class of Tauber–Wiener functions, include the hyperbolic tangent, bounded sigmoid functions, Gaussian functions, and oscillatory functions such as the sine.
In particular, the ReLU activation 
$\sigma(x) = \max(0,x)$ is also Tauber-Wiener: indeed, the mapping  
\[
\phi \mapsto \int_{\mathbb{R}} \max(0,x)\,\phi(x)\,dx
\]  
defines a continuous linear functional on the Schwartz space, and since ReLU is 
non-polynomial, \cite[Theorem~1]{ChenChen1995} ensures that it belongs to the 
Tauber-Wiener class. ReLU is a popular choice in practice, and we adopt it as our activation 
function in the experiments presented in Section~\ref{sec:experiments}.

Assuming a TW activation function, we introduce the following network, which we denote by $\CCNet$.

\begin{mydef}[$\CCNet$ Network]
    For fixed positive integers \( m,n,p\), constants \( c_{ki}, \zeta_k,  \theta_{ki},\xi_{kij} \in \mathbb{R} \), points \( \omega_k \in \mathbb{R}^n \), \( x_j \in \Omega_U \) (\( i = 1, \dots, n \), \( k = 1, \dots, p \), \( j = 1, \dots, m \)), we define a $\CCNet$ network as:
    \begin{equation}\label{eq:chenchen}
         \CCNet[u](x) = \sum_{k=1}^p \sum_{i=1}^n c_{ki} \, \sigma \left( \sum_{j=1}^m \xi_{kij} u(x_j) + \theta_{ki} \right) \cdot \sigma(\omega_k \cdot x + \zeta_k)
    \end{equation}
    for a continuous function $u:\Omega_U \mapsto \bbR$, $x\in\mathbb{R}^n$ and for some activation function $\sigma\in TW$.
\end{mydef}

Note that the $\CCNet$ network defined in Eq. \eqref{eq:chenchen} can also be re-written as \[
\CCNet[u](x) = \sum_{k=1}^p b_k(u) \tau_k(x)
\]
with $b_k(u) = \sum_{i=1}^n c_{ki} \sigma\l \sum_{j=1}^m \xi_{kij}u(x_j) + \theta_{ki}\r$ and $\tau_k(x) = \sigma(\omega_k \cdot x + \zeta_k)$ both being shallow networks. If we extend the latter to deep networks, one recovers the popular DeepONet architecture \cite{deepOnet}. In this way, the network is a linear combination of the product of nonlinear (separated) sub-networks. 

The $\CCNet$ network enjoys a universal approximation property for nonlinear continuous operators over compact sets. 

\begin{theorem}[Universal Approximation Theorem for Single Operator  \cite{ChenChen1995}]\label{theory_3}
Suppose that Assumptions \ref{assumption:Main:assumptions:A1}, \ref{assumption:Main:assumptions:S2} and \ref{assumption:Main:assumptions:S3} hold. Let \( G \) be a nonlinear continuous operator mapping \( U \mapsto V \), then, for any \( \varepsilon > 0 \), there exists a neural network defined in Eq. \eqref{eq:chenchen}, such that
\begin{equation*}
    \left\| G[u](x) - \CCNet[u](x) \right\|_{\Lp{\infty} \left(U \times \Omega_V \right)} < \varepsilon.
\end{equation*}
\end{theorem}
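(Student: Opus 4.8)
The plan is to reproduce the classical three-step reduction behind Theorem~\ref{theory_3}: first reduce $G$ to an operator of finite rank, then reduce each of the resulting continuous functionals to a continuous function of finitely many point evaluations, and finally invoke the density property of $\sigma \in TW$ (in one and several variables) to replace the finite-dimensional pieces by shallow $\sigma$-networks. Concretely, I would begin from the standing assumptions: $U$ is compact in $\Ck{0}(\Omega_U)$ and $G$ is continuous, so $G[U]$ is compact in $\Ck{0}(\Omega_V)$, hence uniformly bounded and equicontinuous by Arzel\`a--Ascoli. A compact subset of $\Ck{0}(\Omega_V)$ lies uniformly within $\varepsilon/3$ of a finite-dimensional subspace: choosing a sufficiently fine net $\{y_k\}_{k=1}^p \subset \Omega_V$ and a continuous partition of unity $\{\phi_k\}$ subordinate to small balls around the $y_k$, equicontinuity gives $\big|\,G[u](x) - \sum_{k=1}^p G[u](y_k)\,\phi_k(x)\,\big| < \varepsilon/3$ for all $u \in U$, $x \in \Omega_V$. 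Thus it suffices to approximate each functional $u \mapsto G[u](y_k)$ and each fixed function $\phi_k \in \Ck{0}(\Omega_V)$.

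For the second and crucial step, fix $k$ and write $g_k(u) := G[u](y_k)$, a continuous (hence uniformly continuous) functional on the compact metric space $U$. Because $U$ is equicontinuous, one can select points $x_1,\dots,x_m \in \Omega_U$ --- the \emph{same} points for all $k$ --- such that the sampling map $T : u \mapsto (u(x_1),\dots,u(x_m)) \in \mathbb{R}^m$ has fibers of arbitrarily small diameter in $\Ck{0}(\Omega_U)$ over $U$; combined with uniform continuity of each $g_k$, this produces a well-defined continuous function $\tilde g_k$ on the compact set $T(U) \subset \mathbb{R}^m$ with $\big|\,g_k(u) - \tilde g_k(Tu)\,\big|$ as small as desired, uniformly over $u \in U$, and $\tilde g_k$ is then extended to all of $\mathbb{R}^m$ by the Tietze extension theorem. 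I expect this step to be the main obstacle, since $T$ is generally not injective on $U$, so one cannot literally invert it; the argument must instead control the oscillation of $g_k$ across the fibers of $T$ using equicontinuity of $U$, and must arrange a single $m$ and a single point set $x_1,\dots,x_m$ that works simultaneously for all $p$ functionals.

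For the third step I would use the density statement that $\sigma \in TW$ implies the span of $\{\,\sigma(w\cdot z + \theta) : w \in \mathbb{R}^q,\ \theta \in \mathbb{R}\,\}$ is dense in $\Ck{0}(K)$ for every compact $K \subset \mathbb{R}^q$ --- the multivariate upgrade of the one-dimensional definition, established in \cite{ChenChen1995}. Applying this once with $q=m$ and $K = T(U)$ gives $\tilde g_k(z) \approx \sum_i c_{ki}\,\sigma\!\big(\sum_j \xi_{kij} z_j + \theta_{ki}\big)$, and once with $q=n$ and $K=\Omega_V$ gives $\phi_k(x) \approx \sum_l a_{kl}\,\sigma(\omega_{kl}\cdot x + \zeta_{kl})$. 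Substituting $z = Tu = (u(x_1),\dots,u(x_m))$, forming $\sum_k \big[\sum_i c_{ki}\sigma(\cdots)\big]\big[\sum_l a_{kl}\sigma(\omega_{kl}\cdot x + \zeta_{kl})\big]$, distributing the products over the index $l$, and relabeling the pair $(k,l)$ as a single summation index yields exactly the network form of Eq.~\eqref{eq:chenchen} (padding the inner sums to a common width $n$). A final triangle-inequality bookkeeping --- splitting the total error into the finite-rank error, the sampling error, and the two $TW$-approximation errors, and tracking the harmless multiplicative constants $\|\phi_k\|_{\Ck{0}}$ and the number of terms --- closes the argument with total error $< \varepsilon$ in $\Lp{\infty}(U \times \Omega_V)$.
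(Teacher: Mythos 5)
Your overall architecture matches the Chen--Chen argument that the paper relies on: separate the $x$-dependence via a finite-rank reduction, reduce the resulting functionals to functions of finitely many samples, and finish with Tauber--Wiener density in one and several variables. Steps 1 and 3 are sound (the paper gets its trunk networks by invoking Theorem~\ref{theory_1} directly rather than re-deriving the partition-of-unity reduction, but that is cosmetic, and your relabeling of the pair $(k,l)$ does recover the form of Eq.~\eqref{eq:chenchen}).

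The genuine gap is in your Step 2, and it is exactly the obstacle you flag without resolving. You want to push $g_k(u)=G[u](y_k)$ forward through the sampling map $T:u\mapsto(u(x_1),\dots,u(x_m))$ to get a function $\tilde g_k$ on $T(U)\subset\mathbb{R}^m$ and then apply Tietze. Controlling the oscillation of $g_k$ over the fibers of $T$ shows that any selection-based definition of $\tilde g_k$ is well defined \emph{up to} a small error, but it does not make $\tilde g_k$ continuous on $T(U)$ --- and Tietze requires continuity on a closed set, so the extension step as written does not go through. Closing this requires an additional lemma (e.g., that a function with small oscillation over fibers of a continuous map on a compact set is uniformly close to a genuinely continuous one). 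The paper, following \cite{ChenChen1995}, avoids the issue by reversing the order of operations: the functional is first Tietze-extended on the function space to $F_*^{(k)}$ defined on the compact set $U^*=U\cup\bigl(\bigcup_k U_{\eta_k}\bigr)$, and only then restricted to the lifted finite-dimensional set $U_{\eta_k}=\{u_{\eta_k}=\sum_j u(x_j)T_{\eta_k,j}\}$ of Eq.~\eqref{u_etak}. Since $z\mapsto\sum_j [z]_j T_{\eta_k,j}$ is a continuous injection, $U_{\eta_k}$ is homeomorphic to a compact subset of $\mathbb{R}^{n(\eta_k)}$, so $F_*^{(k)}|_{U_{\eta_k}}$ is \emph{automatically} a continuous function of the samples (and $u_{\eta_k}(x_j)=u(x_j)$ makes the substitution exact), while the error $|F^{(k)}(u)-F_*^{(k)}(u_{\eta_k})|$ is controlled by $\|u-u_{\eta_k}\|_{\Ck{0}(\Omega_U)}<\delta_k$ together with uniform continuity of $F_*^{(k)}$ on the compact $U^*$ (Lemma~\ref{lemma2}). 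You should either adopt this lift-then-restrict order or supply the missing continuity argument for your fiberwise construction.
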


In Section~\ref{sec:main}, we will introduce new neural network architectures and extend Theorem~\ref{theory_3} to the multiple operator setting. To prepare for this, it is helpful to outline the proof strategy of Theorem~\ref{theory_3} and introduce the technical tools it relies on, which will also be used in the proof of Theorem~\ref{thm:mainResult:universalApproximationI}. The main idea is to sequentially separate the input variables of the operator $G$, thereby reducing the operator approximation problem to the task of approximating functions in finite-dimensional spaces. This reduction is supported by the following result, which guarantees that continuous functions can be uniformly approximated by neural networks with TW activations.

\begin{theorem}[Universal Approximation for Functions \cite{ChenChen1995}]\label{theory_1}
Suppose that Assumption \ref{assumption:Main:assumptions:S2} holds and let $\sigma$ be a TW function. Then, for any \( \varepsilon > 0 \), there exist \( N \in \mathbb{N} \), \( \theta_i \in \mathbb{R} \), \( \omega_i \in \mathbb{R}^n \), and continuous linear functionals \( c_i:U \mapsto \bbR \) such that
\[
\left| f(x) - \sum_{i=1}^N c_i(f) \sigma(\omega_i \cdot x + \theta_i) \right| < \varepsilon
\]
holds for all \( x \in \Omega_U \) and \( f \in U \). 
\end{theorem}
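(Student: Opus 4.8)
The plan is to reduce the infinite-dimensional problem of approximating an entire compact family of functions to a finite-dimensional one, by exploiting compactness of $U$ via Arzel\`a--Ascoli, and then to invoke density of ridge combinations $x \mapsto \sigma(\omega\cdot x + \theta)$ in $\Ck{0}(\Omega_U)$. I will treat $\Omega_U \subset \mathbb{R}^n$ and $U \subset \Ck{0}(\Omega_U)$ as the compact domain and the compact family of input functions supplied by Assumption~\ref{assumption:Main:assumptions:S2}. The three ingredients are: (i) Arzel\`a--Ascoli, giving uniform boundedness and equicontinuity of $U$; (ii) a continuous partition of unity on $\Omega_U$, used to replace a generic $f\in U$ by a finite combination of its point values; and (iii) the multivariate density of $\mathrm{span}\{\, x\mapsto \sigma(\omega\cdot x+\theta) : \omega\in\mathbb{R}^n,\ \theta\in\mathbb{R}\,\}$ in $\Ck{0}(\Omega_U)$ for $\sigma$ a $\mathrm{TW}$ function, which is the multivariate extension in \cite{ChenChen1995}.

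First, fix $\varepsilon>0$. By Arzel\`a--Ascoli, $M := \sup_{f\in U}\Vert f\Vert_{\Ck{0}(\Omega_U)} < \infty$, and there is $\delta>0$ with $|x-y|<\delta \Rightarrow \sup_{f\in U}|f(x)-f(y)| < \varepsilon/2$. Cover the compact set $\Omega_U$ by finitely many balls $B(x_1,\delta),\dots,B(x_m,\delta)$ centered at points $x_j\in\Omega_U$, and take a partition of unity $\{\phi_j\}_{j=1}^m$ in $\Ck{0}(\Omega_U)$ subordinate to this cover, so $\phi_j\ge 0$, $\sum_{j}\phi_j\equiv 1$ on $\Omega_U$, and $\support\phi_j\subset B(x_j,\delta)$. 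Then, for every $f\in U$ and every $x\in\Omega_U$,
\[
\Big| f(x) - \sum_{j=1}^m f(x_j)\,\phi_j(x)\Big| \;=\; \Big|\sum_{j=1}^m \big(f(x)-f(x_j)\big)\phi_j(x)\Big| \;\le\; \sum_{j=1}^m |f(x)-f(x_j)|\,\phi_j(x) \;<\; \frac{\varepsilon}{2},
\]
since $\phi_j(x)\neq 0$ forces $|x-x_j|<\delta$.

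Next, for each fixed $j$, ingredient (iii) yields $N_j\in\mathbb{N}$, $a_{ji}\in\mathbb{R}$, $\omega_{ji}\in\mathbb{R}^n$, $\theta_{ji}\in\mathbb{R}$ with $\big\Vert \phi_j - \sum_{i=1}^{N_j} a_{ji}\,\sigma(\omega_{ji}\cdot x+\theta_{ji})\big\Vert_{\Ck{0}(\Omega_U)} < \varepsilon/(2mM)$. Multiplying the $j$-th such estimate by $f(x_j)$, using $|f(x_j)|\le M$, and summing over $j$ gives, for all $f\in U$ and $x\in\Omega_U$,
\[
\Big| \sum_{j=1}^m f(x_j)\phi_j(x) - \sum_{j=1}^m\sum_{i=1}^{N_j} f(x_j)\,a_{ji}\,\sigma(\omega_{ji}\cdot x+\theta_{ji})\Big| \;<\; \frac{\varepsilon}{2}.
\]
Combining the two displays, relabeling the pairs $(j,i)$ by a single index $i=1,\dots,N$ with $N:=\sum_{j=1}^m N_j$, and setting $c_i(f) := a_{ji}\,f(x_j)$ for the corresponding pair, the maps $c_i$ are restrictions to $U$ of bounded linear functionals on $\Ck{0}(\Omega_U)$ (scaled point evaluations), the weights $(\omega_i,\theta_i)$ and the number $N$ do not depend on $f$, and $\big| f(x) - \sum_{i=1}^N c_i(f)\,\sigma(\omega_i\cdot x+\theta_i)\big| < \varepsilon$ holds uniformly over $x\in\Omega_U$ and $f\in U$, as claimed.

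The substance of the argument is ingredient (iii): passing from the one-dimensional $\mathrm{TW}$ hypothesis to density of multivariate ridge functions in $\Ck{0}(\Omega_U)$. I would obtain it by Stone--Weierstrass (polynomials dense in $\Ck{0}(\Omega_U)$), then writing each monomial as a finite linear combination of powers of linear forms $x\mapsto(\omega\cdot x)^k$ (a ridge representation of polynomials), and finally approximating each univariate power $t\mapsto t^k$ on a compact interval containing $\{\,\omega\cdot x : x\in\Omega_U\,\}$ by $\sum_\ell c_\ell\,\sigma(\lambda_\ell t+\vartheta_\ell)$ via the $\mathrm{TW}$ property, noting $\sigma(\lambda_\ell(\omega\cdot x)+\vartheta_\ell)=\sigma((\lambda_\ell\omega)\cdot x+\vartheta_\ell)$ remains a ridge function; the only care needed is to carry out these approximations on a common bounded domain and to track finitely many of them at once, which is routine. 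Everything else---in particular the uniformity over $f\in U$---is immediate once compactness is used, because the only $f$-dependence in the final expression enters through the finitely many point values $f(x_1),\dots,f(x_m)$, on which the construction depends linearly and continuously.
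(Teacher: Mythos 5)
Your proof is correct and takes essentially the same route as the cited source \cite{ChenChen1995} and the machinery the paper recalls around Lemma~\ref{lemma2}: Arzel\`a--Ascoli compactness of $U$, a partition of unity subordinate to a finite $\delta$-net (the paper's $T_{\eta_k,j}$) to reduce $f$ to finitely many point evaluations entering linearly, and density of the ridge functions $\sigma(\omega\cdot x+\theta)$ in $\Ck{0}(\Omega_U)$ derived from the univariate TW property. Note the paper itself states this theorem without proof, importing it from \cite{ChenChen1995}, so there is no in-paper argument to diverge from.
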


Specifically, for a fixed $u \in U$, the mapping $G[u] \in V$ is a function $G[u]:\Omega_V \mapsto \bbR$, and Theorem~\ref{theory_1} stipulates the existence of functionals $\{c_i:V \to \bbR\}_{i=1}^N$ such that
\[
\big| G[u](x) - \sum_{i=1}^N c_i(G[u])\,\sigma(\omega_i \cdot x + \theta_i) \big| < \varepsilon.
\]

The next step is to approximate the continuous functionals $c_i:V \to \bbR$. To this end, one constructs a sequence of finite-dimensional subspaces $V_{\eta_k} \subseteq V$ that approximate $V$ increasingly well: for every $v \in V$ and $\delta > 0$, there exist $k \in \mathbb{N}$ and $v_k \in V_{\eta_k}$ such that $\|v - v_k\| < \delta$. By continuity, it follows that $|c_i(v) - c_i(v_k)| < \varepsilon$. Moreover, on each finite-dimensional subspace $V_{\eta_k}$, the functional $c_i$ can be identified with a function $\hat{c}_i:\bbR^{\dim(V_{\eta_k})} \to \bbR$, which can itself be approximated by Theorem~\ref{theory_1}. Denoting this approximation by $N(v_k)$, the triangle inequality yields
\[
|c_i(v) - N(v_k)| \leq |c_i(v) - c_i(v_k)| + |\hat{c}_i(v_k) - N(v_k)|,
\]
which completes the argument. Below, we describe the construction and approximation properties of the subsets $V_k$ which we also use in the proof of Theorem \ref{thm:mainResult:universalApproximationI}.

 First, we recall that if $V$ is a compact subset of $\Ck{0}(\Omega_V)$ where $\Omega_V$ is itself compact, then it is uniformly bounded and equicontinuous by the Arzel{\`a}-Ascoli theorem. Therefore, there is a decreasing sequence $\eta_1>\eta_2>\cdots>\eta_n\to 0$ and $\delta_1>\delta_2>\cdots>\delta_n\to 0$ such that if $\|x-y\|<\eta_k$,  then \begin{equation}\label{delta_k}
        |v(x)-v(y) |<\delta_k
    \end{equation} for all $v\in V$. Then, by the compactness of $\Omega_V$ and induction, we can find a sequence $\{x_i\}_{i=1}^\infty \subseteq \Omega_V$ and a sequence of positive integers $n(\eta_1)<n(\eta_2)<\cdots <n(\eta_k)\to\infty$, such that the first $n(\eta_k)$ elements \begin{equation}\label{etaknet}
        N(\eta_k) =\{x_1,\cdots,x_{n_{\eta_k}}\}
    \end{equation} is an $\eta_k$-net in $\Omega_V$. 

For each $\eta_k$-net and index $1 \leq j \leq n(\eta_k)$, we define functions 
    \[
T_{\eta_{k},j}^* (x) = \begin{cases} 
1 - \frac{\| x - x_j \|}{\eta_k} & \text{if } \| x - x_j \| \leq \eta_k \\ 
0 & \text{otherwise}
\end{cases} \qquad \text{and} \qquad T_{\eta_{k},j} (x) = \frac{T_{\eta_{k},j}^* (x)}{\sum_{j=1}\limits^{n(\eta_k)} T_{\eta_{k},j}^* (x)}.
\]
Note that $\{T_{\eta_{k},j} (x)\}_{j=1}^{n(\eta_k)}$ is a partition of unity, i.e., $0 \leq T_{\eta_{k},j} (x) \leq 1$, $\sum_{j=1}\limits^{n(\eta_k)} T_{\eta_{k},j} (x) \equiv 1$, and $T_{\eta_{k},j} (x) = 0 \text{ if } \| x - x_j \| > \eta_k.$ Furthermore, the functions $T_{\eta_{k},j} (x)$ act as basis elements of the finite-dimensional space $V_{\eta_k} = \{v_{\eta_k}: v \in V\}$ where, for each $v \in V$, $v_{\eta_k}$ is defined as
\begin{equation}\label{u_etak}
    v_{\eta_k} (x) := \sum_{j=1}^{n(\eta_k)} v(x_j) T_{\eta_{k},j} (x).
\end{equation}
Finally, we let $V^* = V \cup \left(\bigcup_{k=1}^{\infty} V_{\eta_k}\right)$. Equation~\eqref{u_etak} essentially maps a finite-dimensional encoding of $v$ back into a function $v_{\eta_k}$.  The approximation properties of $v_{\eta_k}$ are summarized in the next lemma.

\begin{lemma}[Finite-dimensional Approximations of Function Spaces \cite{ChenChen1995}]\label{lemma2}
Assume that $V$ is a compact subset of $\Ck{0}(\Omega_V)$ where $\Omega_V$ is itself compact.
\begin{enumerate}
    \item For each fixed $k$, $V_{\eta_k}$ is a compact set of dimension $n(\eta_k)$ in $C(\Omega_V)$.
    \item For every $v \in V$, there exists $v_{\eta_k}\in V_{\eta_k}$ with
    \[
    \| v - v_{\eta_k} \|_{C(\Omega_V)} < \delta_k.
    \]
    \item $V^*$ is a compact set in $C(\Omega_V)$.
\end{enumerate}
\end{lemma}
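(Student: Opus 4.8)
I will prove the three claims in turn, with the Arzel\`a--Ascoli theorem doing the heavy lifting for the compactness statements and essentially all the work concentrated in the third part. \textbf{Parts 1 and 2.} Fix $k$ and set $n = n(\eta_k)$. The map $v \mapsto v_{\eta_k}$ factors as $L \circ P$, where $P : \Ck{0}(\Omega_V) \to \bbR^{n}$, $P(v) = (v(x_1),\dots,v(x_n))$, is continuous for the sup-norm, and $L : \bbR^n \to \Ck{0}(\Omega_V)$, $L(a) = \sum_{j=1}^{n} a_j\, T_{\eta_k,j}$, is a bounded linear operator (finite-dimensional domain, continuous $T_{\eta_k,j}$). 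Since $V$ is compact, $P(V)$ is compact in $\bbR^n$, hence $V_{\eta_k} = L(P(V))$ is compact in $\Ck{0}(\Omega_V)$; it lies in $\range L = \mathrm{span}\{T_{\eta_k,1},\dots,T_{\eta_k,n}\}$, which has dimension $n(\eta_k)$ (the $T_{\eta_k,j}$ being linearly independent), giving Part 1. For Part 2, the partition-of-unity identity $\sum_j T_{\eta_k,j}(x) \equiv 1$ yields $v_{\eta_k}(x) - v(x) = \sum_{j} \bigl(v(x_j) - v(x)\bigr) T_{\eta_k,j}(x)$; only indices with $\|x - x_j\| < \eta_k$ contribute, and there is at least one since $N(\eta_k)$ is an $\eta_k$-net, so each contributing term is strictly bounded by $\delta_k\, T_{\eta_k,j}(x)$ via \eqref{delta_k}. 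Summing gives $|v_{\eta_k}(x) - v(x)| < \delta_k$ for every $x$, and taking the maximum of the continuous function $v - v_{\eta_k}$ over the compact set $\Omega_V$ upgrades this to $\| v - v_{\eta_k} \|_{\Ck{0}(\Omega_V)} < \delta_k$.

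\textbf{Part 3, boundedness and equicontinuity.} I verify the Arzel\`a--Ascoli hypotheses for $V^*$. Uniform boundedness is immediate: if $\|v\|_{\Ck{0}(\Omega_V)} \le M$ on $V$, then $\|v_{\eta_k}\|_{\Ck{0}(\Omega_V)} \le \max_j |v(x_j)|\cdot\|\sum_j T_{\eta_k,j}\|_{\Ck{0}(\Omega_V)} \le M$, so every element of $V^*$ is bounded by $M$. Equicontinuity is the delicate point, since the natural Lipschitz estimate for $T_{\eta_k,j}$ scales like $1/\eta_k$ and degrades as $k \to \infty$; I therefore separate scales. Given $\eps > 0$, choose $K$ with $\delta_K < \eps/3$. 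For $k \ge K$, Part 2 gives $\|v_{\eta_k} - v\|_{\Ck{0}(\Omega_V)} < \eps/3$, so a triangle-inequality argument reduces the equicontinuity of $\{v_{\eta_k} : v \in V,\ k \ge K\}$ to that of $V$, which holds since $V$ is compact. For the finitely many $k < K$, each $V_{\eta_k}$ is compact by Part 1 and hence equicontinuous. Taking the smallest of the finitely many moduli of continuity so obtained gives a single modulus valid on all of $V^*$.

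\textbf{Part 3, closedness and conclusion.} Let $f_n \in V^*$ with $f_n \to f$ in $\Ck{0}(\Omega_V)$. If infinitely many $f_n$ lie in $V$, then $f \in V$ by compactness of $V$; if infinitely many lie in a single $V_{\eta_k}$, then $f \in V_{\eta_k}$ by Part 1. Otherwise, after passing to a subsequence, $f_{n_l} = (w_l)_{\eta_{k_l}}$ with $w_l \in V$ and $k_l \to \infty$; then $\|w_l - f_{n_l}\|_{\Ck{0}(\Omega_V)} < \delta_{k_l} \to 0$ by Part 2, so $w_l \to f$ and $f \in V$. In every case $f \in V^*$, so $V^*$ is closed; combined with uniform boundedness and equicontinuity, Arzel\`a--Ascoli gives that $V^*$ is compact in $\Ck{0}(\Omega_V)$. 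The one real obstacle is the equicontinuity step: the partition-of-unity basis functions are only Lipschitz with constant of order $\eta_k^{-1}$, so the uniform modulus cannot come from a direct estimate and must instead be pieced together from the large-$k$ approximation bound of Part 2 and the compactness of the finitely many exceptional $V_{\eta_k}$.
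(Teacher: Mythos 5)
The paper does not prove this lemma: it is recalled verbatim from Chen and Chen (1995) as background material, so there is no in-paper argument to compare against. Your proof is correct and is essentially the standard argument from that reference — the factorization $L\circ P$ for Part 1, the partition-of-unity telescoping for Part 2, and the two-scale equicontinuity argument plus the three-case closedness check for Part 3 are all sound, and you correctly identify that the only genuinely delicate point is that equicontinuity of $V^*$ cannot come from Lipschitz bounds on the $T_{\eta_k,j}$ (which blow up like $\eta_k^{-1}$) and must instead be assembled from Part 2 for large $k$ and compactness of the finitely many remaining $V_{\eta_k}$.
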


\subsection{Scaling laws for operator learning} \label{sec:back:scalingLaws}

In this section, we review the main ideas behind establishing scaling laws for (multiple) operator learning, i.e. obtaining rates of convergence for the approximation of operators using neural networks. In particular, we focus on the setting in \cite{liu2024neuralscalinglawsdeep} which underpins our analysis.

We start by defining the following class of neural networks. This class is both general and flexible, encompassing a wide family of architectures, and can be readily implemented using standard deep learning frameworks.

\begin{mydef}[Feedforward ReLU Network Class] \label{def:networkClass}
Let \( q : \mathbb{R}^{d_1} \to \mathbb{R} \) be a feedforward ReLU network defined as
\[
q(x) = W_L \cdot \mathrm{ReLU}\left(W_{L-1} \cdots \mathrm{ReLU}(W_1 x + b_1) + \cdots + b_{L-1} \right) + b_L,
\]
where \( W_\ell \) are weight matrices, \( b_\ell \) are bias vectors, and \( \mathrm{ReLU}(a) = \max\{a, 0\} \) is applied element-wise.

We define the class of such feedforward networks with ReLU activations:
\[
\cF_{\rm NN}(d_1, d_2, L, p, K, \kappa, R) = \left\{ [q_1, q_2, \dots, q_{d_2}]^\top \in \mathbb{R}^{d_2} \; \middle| \;
\begin{array}{l}
\text{each } q_k : \mathbb{R}^{d_1} \to \mathbb{R} \text{ has the above form with} \\
L \text{ layers, width bounded by } p, \\
\|q_k\|_{\Lp{\infty}} \leq R, \quad \|W_\ell\|_{\infty,\infty} \leq \kappa, \quad \|b_\ell\|_\infty \leq \kappa, \\
\sum_{\ell=1}^L \left( \|W_\ell\|_0 + \|b_\ell\|_0 \right) \leq K
\end{array}
\right\},
\]
where
\begin{itemize}
    \item \( \|q\|_{\Lp{\infty}} = \sup_{x \in \Omega} |q(x)| \),
    \item \( \|W_\ell\|_{\infty,\infty} = \max_{i,j} |[W_\ell]_{ij}| \),
    \item \( \|b_\ell\|_\infty = \max_i |[b_\ell]_i| \),
    \item \( \|\cdot\|_0 \) denotes the number of nonzero elements.
\end{itemize}
This network class consists of vector-valued functions with input dimension \( d_1 \), output dimension \( d_2 \), depth \( L \), width at most \( p \), at most \( K \) nonzero parameters, all bounded in magnitude by \( \kappa \), and uniformly bounded output norm by \( R \).
\end{mydef}

In analogy with our discussion in Section~\ref{sec:back:operator}, scaling laws are derived sequentially by fixing the inputs of the operator $G$ to be approximated. This requires quantitative approximation results for both functions and functionals using the network class of Definition~\ref{def:networkClass}, which can then be combined to obtain operator-level guarantees. Specifically, we will use the following result on function approximation in the proofs of Theorems \ref{thm:back:functionalApproximationLinfty}, \ref{thm:back:operatorApproximation} and \ref{thm:main:multipleOperatorApproximation}.

\begin{theorem}[Function Approximation 
\cite{liu2024neuralscalinglawsdeep}]\label{thm:back:functionApproximation}
Let $d_U>0$ be an integer, $\gamma_1,\beta_U,L_U>0$ be constants and assume that $U(d_U,\gamma_U,\beta_U,L_U)$ satisfies Assumption \ref{assumption:Main:assumptions:S4}. There exists some constant $C$ depending on $\gamma_U$ and $L_U$ such that the following holds. For any $\varepsilon>0$,
\begin{itemize}
    \item let $N = C\sqrt{d_U}\varepsilon^{-1}$ and let $\{c_k\}_{k=1}^{N^{d_U}}$ be a uniform grid on $\Omega_U$ with spacing $2\gamma_U/N$ along each dimension;
    \item consider the network architecture $\cF_{\rm NN}(d_U, 1, L, p, K, \kappa, R)$
with parameters scaling as
    \begin{align*}
&L = \mathcal{O}\left(d_U^2\log d_U+d_U^2\log(\varepsilon^{-1})\right),\quad p = \mathcal{O}(1),\quad K = \mathcal{O}\left(d_U^2\log d_U+d_U^2\log(\varepsilon^{-1})\right),\\
&\kappa=\mathcal{O}(d_U^{d_U/2+1}\varepsilon^{-d_U-1}),\qquad \qquad \quad R=1
    \end{align*}
where the constants hidden in $\mathcal{O}$ depend on $\gamma_U$ and $L_U$. 
\end{itemize}
Then, there exists networks $\{q_k\}_{k=1}^{N^{d_U}} \subset \cF_{\rm NN}(d_U, 1, L, p, K, \kappa, R)$ such that 
\begin{align*}
        \left\|u-\sum_{k=1}^{N^{d_U}} u(c_k) q_k\right\|_{\Lp{\infty}(\Omega_U)}\leq \varepsilon.
    \end{align*}
for any $u\in U$.  
\end{theorem}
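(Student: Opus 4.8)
The plan is to prove this function-approximation result by reducing to the approximation of a fixed function $u$ by a piecewise-linear interpolant on a uniform grid, and then realizing that interpolant — together with its dependence on the point values $u(c_k)$ — via ReLU networks of the prescribed size. First I would invoke Assumption \ref{assumption:Main:assumptions:S4} on $U(d_U,\gamma_U,\beta_U,L_U)$: this should encode that $\Omega_U$ is a box of half-width $\gamma_U$ (so that the uniform grid $\{c_k\}$ with spacing $2\gamma_U/N$ makes sense) and that every $u\in U$ is Lipschitz with constant $L_U$ (and possibly Hölder/Sobolev with parameter $\beta_U$). Under the Lipschitz bound, the standard multilinear or simplicial interpolant $\Pi_N u := \sum_k u(c_k)\,\phi_k$ on the grid, where $\{\phi_k\}$ is the associated hat/partition-of-unity basis, satisfies $\|u-\Pi_N u\|_{\Lp\infty(\Omega_U)} \le L_U\sqrt{d_U}\cdot(\gamma_U/N)$ by a mesh-size argument. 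Choosing $N = C\sqrt{d_U}\,\varepsilon^{-1}$ with $C$ depending on $\gamma_U,L_U$ then forces this interpolation error below $\varepsilon$.

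The second and main step is to express each basis function $\phi_k$ — which I want to call $q_k$ — as an element of $\cF_{\rm NN}(d_U,1,L,p,K,\kappa,R)$ with the stated parameter scalings. The key point is that a $d_U$-dimensional tent/hat function can be written as a fixed composition of $\min$ and $\max$ operations on affine functions of $x$, and $\min,\max$ of two numbers are exactly representable by small ReLU sub-networks; iterating this over $d_U$ coordinates produces the factor $d_U^2$ in the depth $L$ and connectivity $K$. The $\log(\varepsilon^{-1})$ contributions to $L$ and $K$ come from needing to compute products (of one-dimensional hat factors) to accuracy commensurate with $\varepsilon$, which is the classical Yarotsky-type construction: approximate multiplication by a ReLU network of depth $\mathcal O(\log(\varepsilon^{-1}))$. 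The large weight bound $\kappa = \mathcal O(d_U^{d_U/2+1}\varepsilon^{-d_U-1})$ arises because the hat functions have slopes of order $N/\gamma_U \sim \sqrt{d_U}\,\varepsilon^{-1}$ and, when one forms products of $d_U$ such factors (or rescales to keep intermediate quantities in a normalized range), these slopes get amplified; tracking the worst case across all layers gives the exponent $d_U$. Finally $R=1$ is automatic since each $\phi_k$ is a partition-of-unity element taking values in $[0,1]$, and $p=\mathcal O(1)$ because the $\min/\max$ and multiplication gadgets have bounded width, processed essentially serially. I would cite the function-approximation construction of \cite{liu2024neuralscalinglawsdeep} (and the Yarotsky-type product network) rather than reproving these gadgets from scratch.

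The third step is bookkeeping: check that the \emph{coefficients} $u(c_k)$ are used only as scalar multipliers outside the networks (the statement writes $\sum_k u(c_k) q_k$, so the $u$-dependence is linear and external — the $q_k$ themselves do not depend on $u$), and that the class membership $q_k\in\cF_{\rm NN}$ holds \emph{uniformly} in $k$ and hence the whole estimate holds for any $u\in U$ with the same architecture. Then assemble: $\|u-\sum_k u(c_k)q_k\|_{\Lp\infty} = \|u-\Pi_N u\|_{\Lp\infty} \le \varepsilon$, absorbing the Yarotsky product-approximation error into the constant $C$ by choosing the internal accuracy a fixed fraction of $\varepsilon$.

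The main obstacle I anticipate is the careful tracking of the weight magnitude $\kappa$ through the nested $\min/\max$–product construction: naively composing hat functions in $d_U$ dimensions can blow up intermediate values, and one must either normalize at each stage (introducing the $d_U^{d_U/2}$ and $\varepsilon^{-d_U}$ factors in a controlled way) or argue that the Yarotsky product gadget keeps inputs in $[0,1]$ so only the final affine rescaling carries the large weight. Getting the exponent of $\varepsilon^{-1}$ and the power of $d_U$ exactly right — rather than merely polynomial in $1/\varepsilon$ — is the delicate part; everything else (the interpolation error bound, $R=1$, $p=\mathcal O(1)$, the $d_U^2\log$ scaling of $L$ and $K$) follows from now-standard ReLU-approximation estimates.
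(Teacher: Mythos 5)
The paper does not prove this statement: Theorem~\ref{thm:back:functionApproximation} is imported verbatim from \cite{liu2024neuralscalinglawsdeep} as a background ingredient (it is used later inside the proofs of Theorems~\ref{thm:back:functionalApproximationLinfty}, \ref{thm:back:operatorApproximation} and \ref{thm:main:multipleOperatorApproximation}), so there is no in-paper argument to compare yours against. Judged on its own, your sketch is the standard construction underlying the cited result and is essentially correct: Lipschitz interpolation on the uniform grid gives $\|u-\sum_k u(c_k)\phi_k\|_{\Lp{\infty}}\lesssim L_U\sqrt{d_U}\,\gamma_U/N$, and the partition-of-unity elements $\phi_k$ (products of one-dimensional trapezoid functions) are realized by ReLU networks using Yarotsky-type product gadgets, which is where the $d_U^2\log(d_U/\varepsilon)$ depth and the $\kappa\sim N^{d_U+1}\sim d_U^{d_U/2+1}\varepsilon^{-d_U-1}$ weight bound come from. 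One point you gloss over that deserves a sentence in a full write-up: replacing each $\phi_k$ by its network approximant $q_k$ introduces an error in every one of the $N^{d_U}$ summands, and the total stays $O(\varepsilon)$ only because the $q_k$ inherit (essentially) localized supports with bounded overlap, so that at any fixed $x$ only $O(2^{d_U})$ terms contribute; without that locality you would be forced to take the gadget accuracy $\varepsilon/N^{d_U}$, which would degrade the stated scalings. With that caveat, your outline matches the proof in the cited source.
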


In Section~\ref{sec:main}, we present slightly modified versions of the functional and operator scaling laws from \cite{liu2024neuralscalinglawsdeep}, which make certain constants in the approximating network explicit and play a central role in the proof of the multiple operator case, Theorem~\ref{thm:main:multipleOperatorApproximation}.

\section{Main results} \label{sec:main}

\subsection{Notation, Assumptions and Setting} 

We denote the Lebesgue measure on $\bbR^n$ by $\lambda$ and write $|\Omega|$ for the Lebesgue measure of a set $\Omega$. We denote the set of continuous function over a set $K$ as $\Ck{0}(K)$ and the set of continuous maps from $U$ to $V$ as $\Ck{0}(U,V)$. For a vector $z$ and a matrix $Z$, we denote by $[z]_i$ and $[Z]_{ij}$ their $i$-th and $ij$-th element respectively.
We denote the ball of radius $\delta$ with center $x$ by $\mathcal{B}_\delta(x)$.

\subsubsection{Assumptions}

\begin{assumptions} We make the following assumptions on the activation functions used in the operator networks.

\begin{enumerate}[label=\textbf{A.\arabic*}]
    \item The activation function $\sigma$ is a Tauber-Wiener function. \label{assumption:Main:assumptions:A1}
    \item The activation function $\sigma$ is continuous and/or bounded. \label{assumption:Main:assumptions:A2}
\end{enumerate}
\end{assumptions}

\begin{assumptions} We make the following assumption on our spaces.

\begin{enumerate}[label=\textbf{S.\arabic*}]
\item The space of 
$W \subseteq \Ck{0}(\Omega_W)$ is a compact subspace where $\Omega_W$ is a compact subset of the Banach space $\mathcal{A}$.    \label{assumption:Main:assumptions:S1} 
\item  The space 
$U \subseteq \Ck{0}(\Omega_U)$ is a compact subspace where $\Omega_U$ is a compact subset of the Banach space $\mathcal{U}$. \label{assumption:Main:assumptions:S2}
\item The space 
$V$ is $\Ck{0}(\Omega_V)$ where $\Omega_V$ is a compact subset of $\mathbb{R}^n$. \label{assumption:Main:assumptions:S3}
\item The space $U(d_U,\gamma_U,L_U,\beta_U)$ is a function set such that \begin{enumerate}
    \item any function $u \in U$ is defined on $\Omega_U := [-\gamma_U,\gamma_U]^{d_U}$;
    \item for all functions $u \in U$ and $x,y \in \Omega_U$, we have \[
    \vert u(x) - u(y) \vert \leq L_U \vert x - y \vert;
    \]
    \item for all functions $u \in U$, we have $\Vert u \Vert_{\Lp{\infty}} \leq \beta_U$.
\end{enumerate}\label{assumption:Main:assumptions:S4}
\end{enumerate}
\end{assumptions}

\begin{assumptions} We make the following assumption on the measures.

\begin{enumerate}[label=\textbf{M.\arabic*}]
\item $\nu$ is a probability measure on 
$W$. \label{assumption:Main:assumptions:M1} 
\item  $\mu$ is a probability measure on 
$U$. \label{assumption:Main:assumptions:M2} 
\end{enumerate}
\end{assumptions}

\begin{assumptions} We make the following assumption on the operators.

\begin{enumerate}[label=\textbf{O.\arabic*}]
\item For every 
$\alpha \in W$, the operator $G[\alpha]:U \mapsto  V$ is nonlinear and continuous.    \label{assumption:Main:assumptions:O1} 
\item  The map $\alpha \in W \mapsto G[\alpha]$ is continuous. \label{assumption:Main:assumptions:O2}
\item The map $\alpha \in W \mapsto G[\alpha]$ is Borel measurable and $G[\alpha][u](x) \in \Lp{2}_{\nu \times \mu \times \lambda}(W \times U \times V)$ \label{assumption:Main:assumptions:O3}
\end{enumerate}
\end{assumptions}

\subsubsection{Multiple Operator Network Architectures} \label{sec:main:architectures}

We introduce two neural network architectures for approximating multi-operator mappings. First, we consider the Multiple Operator Network (MONet) which is a direct extension of the neural network in Eq. \eqref{eq:chenchen} and we will show that it enjoys universal approximation properties for continuous and measurable multiple operators mappings in Theorems \ref{thm:mainResult:universalApproximationI} and \ref{thm:mainResult:universalApproximationII}.

\begin{mydef}[$\UAPNetwork$ Network] \label{def:uapNetwork}
     For fixed  positive integers \( M, N, P, m ,p\), constants  \( c_{kij}\), \(\zeta_k\), \(\xi_{kil}\), \(\varphi_{kijh}\), \(\rho_{kij}\), \(\theta_{ki}\in \mathbb{R} \),  points \( \omega_k \in \mathbb{R}^n \), \( x_l \in \Omega_U, z_h\in \Omega_W \) (\( i = 1, \dots, M \); \( k = 1, \dots, N \); \( j = 1, \dots, P \); \( h = 1,\dots, p\); \(l=1,\dots,m \)), we define a $\UAPNetwork$ network as:
     \begin{align}\label{eq:mole1}
        \UAPNetwork[\alpha][u](x) &= \sum_{k=1}^N \sum_{i=1}^M \tau_k(x) b_{ki}(u) L_{ki}(\alpha) = \sum_{k=1}^N  \tau_k(x)  \sum_{i=1}^M b_{ki}(u) L_{ki}(\alpha)
     \end{align}
     for continuous functions $\alpha\in W:\Omega_W \mapsto \bbR$ and $u\in U: \Omega_U \mapsto \bbR $, $x\in\mathbb{R}^n$, activation function $ \sigma \in TW$ and networks $\tau_k(x) = \sigma(\omega_k \cdot x + \zeta_k)$, $b_{ki}(u) = \sigma\l \sum_{l = 1}^{m} \xi_{kil} u(x_l) + \theta_{ki} \r$ and \[L_{ki}(\alpha) = \sum_{j=1}^P c_{kij} \sigma\l \sum_{h=1}^p \varphi_{kijh} \alpha(z_h) + \rho_{kij} \r.\] 
\end{mydef}

Note that the notation used in the operator networks assumes that the inputs are continuous functions; however, the input functions are encoded as finite dimensional vectors by first being evaluated on some points. In some sense, the resulting encoded vectors are the true inputs to the networks. In all the proofs describing a specific architecture as in the ones of Theorem \ref{thm:mainResult:universalApproximationI}, Lemma \ref{lem:orthogonal}, Theorem \ref{thm:back:operatorApproximation} and Theorem \ref{thm:main:multipleOperatorApproximation}, precise statements will be made.

\begin{remark}[$\UAPNetworkFinite$ Network]
    The network in Eq. \eqref{eq:mole1} can be simplified in the case where the parameter inputs are finite-dimensional, i.e., $\alpha \in \bbR^p$. 
     For fixed  positive integers \( M, N, P, m ,p\), constants \( c_{kij}\), \(\zeta_k\), \(\xi_{kil}\), \(\varphi_{kijh}\), \(\rho_{kij}\), \(\theta_{ki}\in \mathbb{R} \), points \( \omega_k \in \mathbb{R}^n \), \( x_l \in \Omega_U \) (\( i = 1, \dots, M \); \( k = 1, \dots, N \); \( j = 1, \dots, P \); \( h = 1,\dots, p\);  \(l=1,\dots,m \)), we define a $\UAPNetworkFinite$ network with vector parameter $\alpha \in \bbR^p$ as
     \begin{equation}\label{eq:mole2}
        \UAPNetworkFinite[\alpha][u](x)=\sum_{k=1}^N \sum_{i=1}^M \sum_{j=1}^P c_{kij} \,  \sigma \left( \sum_{h=1}^p \varphi_{kijh} [\alpha]_h + \rho_{kij} \right) \cdot \sigma \left( \sum_{l=1}^m \xi_{kil} u(x_l) + \theta_{ki} \right) \cdot \sigma(\omega_k \cdot x + \zeta_k)
     \end{equation}
      for a continuous function $u: \Omega_U \mapsto \bbR $, point $x\in\mathbb{R}^n$ and some activation function $ \sigma \in TW$.
      The proof of the universal approximation for finite dimensional $\alpha$ is given in Corollary \ref{theory_4c}.
\end{remark}

We introduce the Multiple Nonlinear Operator (MNO) Network, which is shown to provide strong empirical results in Section~\ref{sec:experiments}. We establish scaling laws in Theorem \ref{thm:main:multipleOperatorApproximation} for this architecture.

\begin{mydef}[$\ScalingNetwork$ Network] \label{def:scalingNetwork}
For fixed  positive integers \(P, H^{(p)}\), $1 \leq p \leq P$, we define a $\ScalingNetwork$ network as \[
  \mathrm{MNO}[\alpha][u](x) =    \sum_{p=1}^{P} l_p(\alpha) \sum_{k=1}^{H^{(p)}} b_{pk}(u) \tau_{pk}(x)=    \sum_{p=1}^{P} \sum_{k=1}^{H^{(p)}} l_p(\alpha) b_{pk}(u) \tau_{pk}(x)
     \]
     for continuous functions $\alpha$, $u$, and networks $l_p$, $b_k$, $\tau_{pk}$ in some classes $\cF_{\rm NN}$.  
\end{mydef}

We summarize both network architectures and their associated expressivity guarantees in Table \ref{tab:uap-vs-scaling}.

\begin{table}[h]
\centering
\small
\renewcommand{\arraystretch}{2}
\setlength{\tabcolsep}{6pt}
\begin{tabularx}{\linewidth}{>{\bfseries}p{2.9cm} Y Y}
\toprule
& $\UAPNetwork$ & $\ScalingNetwork$ \\
\midrule
Definition
& Definition ~\ref{def:uapNetwork}
& Definition ~\ref{def:scalingNetwork} \\

Expression
& $\displaystyle \sum_{k=1}^{N}\sum_{i=1}^{M} \tau_k(x)\, b_{ki}(u)\, L_{ki}(\alpha)$
& $\displaystyle \sum_{p=1}^{P}\sum_{k=1}^{H^{(p)}} l_p(\alpha)\, b_{pk}(u)\, \tau_{pk}(x)$ \\

Components
& $\tau_k$, $b_{ki}$, $L_{ki}$ are shallow networks
& $l_p$, $b_{pk}$, $\tau_{pk}$ are deep networks in $\mathcal{F}_{\mathrm{NN}}$ \\

UAP
& Approximates continuous and measurable multiple operator mappings
  (Theorems ~\ref{thm:mainResult:universalApproximationI} and \ref{thm:mainResult:universalApproximationII})
& Approximates continuous and measurable multiple operator mappings
  (Theorems ~\ref{thm:mainResult:universalApproximationI} and \ref{thm:mainResult:universalApproximationII}) \\

Scaling laws
& — 
& Quantitative approximation for Lipschitz multiple operator mappings
  (Theorem ~\ref{thm:main:multipleOperatorApproximation}) \\
\bottomrule
\end{tabularx}
\caption{Comparison of $\ScalingNetwork$ and $\UAPNetwork$ architectures: definition, expression, component type, universal approximation properties (UAP), and scaling laws.}
\label{tab:uap-vs-scaling}
\end{table}

\subsection{Main results}

\subsubsection{Universal Approximation}
In this section, we show that both network architectures introduced in Section \ref{sec:main:architectures} can approximate families of nonlinear operators. The first result is analogous to classical universal approximation results for neural networks. In particular, it assumes that all our functions are continuous. In the theorems below, $\nn$ refers to both $\ScalingNetwork$ and $\UAPNetwork$.

\begin{theorem}[Universal Approximation Theorem for Multiple Operators in $\Lp{\infty}$] \label{thm:mainResult:universalApproximationI}
Assume that Assumptions \ref{assumption:Main:assumptions:A1}, \ref{assumption:Main:assumptions:S1}, \ref{assumption:Main:assumptions:S2}, \ref{assumption:Main:assumptions:S3}, \ref{assumption:Main:assumptions:O1} and \ref{assumption:Main:assumptions:O2} hold.
Then for any \( \varepsilon > 0 \),  there exists a network $\nn$, of the form given in Definition~\ref{def:uapNetwork} or Definition~\ref{def:scalingNetwork} (with $l_p$, $b_{pk}$, and $\tau_{pk}$ being defined as in Definition~\ref{def:uapNetwork}), such that
\begin{equation}\label{eq:thm4}
        \left\| G[\alpha][u](x) - \nn[\bm{\alpha}][\bm{u}](x)\right\|_{\Lp{\infty}(W \times U \times \Omega_V)} < \varepsilon
\end{equation}
for all functions $\alpha \in W$ and $u \in U$ and where $\bm{\alpha}$ and $\ub$ are discretizations of $\alpha$ and $u$ that only depend on $W$ and $U$ respectively. 
\end{theorem}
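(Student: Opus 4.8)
The plan is to follow the peeling argument behind Theorem~\ref{theory_3}, removing the three groups of inputs --- the output location $x\in\Omega_V$, the input function $u\in U$, and the parametric function $\alpha\in W$ --- one at a time, each reduction carried out with the finite-dimensional function approximation of Theorem~\ref{theory_1} and the partition-of-unity construction recalled in Section~\ref{sec:back:operator}. First I would establish joint continuity of $(\alpha,u)\mapsto G[\alpha][u]$ from the compact metric space $W\times U$ into $V=\Ck{0}(\Omega_V)$: for $\alpha_n\to\alpha$ and $u_n\to u$,
\[
\left\|G[\alpha_n][u_n]-G[\alpha][u]\right\|_{\Lp{\infty}}\le \sup_{u'\in U}\left\|G[\alpha_n][u']-G[\alpha][u']\right\|_{\Lp{\infty}}+\left\|G[\alpha][u_n]-G[\alpha][u]\right\|_{\Lp{\infty}},
\]
where the first term vanishes by Assumption~\ref{assumption:Main:assumptions:O2} and the second by Assumption~\ref{assumption:Main:assumptions:O1}. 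Consequently $\mathcal{G}:=\{G[\alpha][u]:\alpha\in W,\ u\in U\}$ is a compact subset of $\Ck{0}(\Omega_V)$.

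\textbf{Separating $x$, then $\alpha$ from $u$.} Applying Theorem~\ref{theory_1} to the compact ``function space'' $\mathcal{G}$ produces $N\in\mathbb{N}$, points $\omega_k\in\mathbb{R}^n$, scalars $\zeta_k$, and continuous functionals $c_k$ on $\Ck{0}(\Omega_V)$ such that, with $\tau_k(x):=\sigma(\omega_k\cdot x+\zeta_k)$ (which is bounded on the compact $\Omega_V$),
\[
\left| G[\alpha][u](x)-\sum_{k=1}^{N}c_k\!\left(G[\alpha][u]\right)\tau_k(x)\right|<\frac{\varepsilon}{3}\qquad\text{for all }\alpha\in W,\ u\in U,\ x\in\Omega_V,
\]
and each $\Phi_k(\alpha,u):=c_k(G[\alpha][u])$ is continuous on $W\times U$ by the previous step. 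The one genuinely new ingredient, relative to the single-operator proof, is the decoupling of $u$ and $\alpha$ inside $\Phi_k$: the linear span of products $\{(\alpha,u)\mapsto g(u)h(\alpha):g\in\Ck{0}(U),\ h\in\Ck{0}(W)\}$ is a subalgebra of $\Ck{0}(W\times U)$ containing the constants and separating points, so Stone--Weierstrass gives, for each of the finitely many $k$, a decomposition $\Phi_k\approx\sum_i g_{ki}(u)h_{ki}(\alpha)$ uniform on $W\times U$, with accuracy chosen fine enough that $\sum_k\|\tau_k\|_{\Lp{\infty}}\,\|\Phi_k-\sum_i g_{ki}h_{ki}\|_{\Lp{\infty}}<\varepsilon/3$.

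\textbf{Reducing the factors to the prescribed neuron forms and assembling.} It remains to replace each factor $g_{ki}$, $h_{ki}$ by shallow $\sigma$-networks of point evaluations. After extending $g_{ki}$, $h_{ki}$ continuously to the compact sets $U^{\ast}$, $W^{\ast}$ (Tietze), I would reuse the construction of Section~\ref{sec:back:operator} verbatim: choosing an $\eta$-net $\{x_l\}_{l=1}^{m}\subset\Omega_U$ and the associated partition of unity makes $\|u-u_\eta\|_{\Lp{\infty}}$ small, hence $|g_{ki}(u)-g_{ki}(u_\eta)|$ small by uniform continuity; and $u\mapsto g_{ki}(u_\eta)$ factors through the fixed sampling $u\mapsto(u(x_1),\dots,u(x_m))$ and a continuous function on a compact subset of $\mathbb{R}^m$, which Theorem~\ref{theory_1} (now with finite-dimensional domain) approximates by a shallow net $\sum_i c'_i\,\sigma(\sum_l\xi_{il}u(x_l)+\theta_i)$, i.e. a sum of terms $b_{ki}(u)$ as in Definition~\ref{def:uapNetwork}. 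Treating $h_{ki}$ the same way with an $\eta$-net $\{z_h\}_{h=1}^{p}\subset\Omega_W$ yields the $L_{ki}(\alpha)$, and since $\{x_l\}$ depends only on $U$ and $\{z_h\}$ only on $W$, the induced discretizations $\bm{u}$, $\bm{\alpha}$ are independent of the other argument, as claimed. Three applications of the triangle inequality give \eqref{eq:thm4}; the final expression is literally of the $\UAPNetwork$ form of Definition~\ref{def:uapNetwork}, and, after re-indexing the $(k,i)$ pairs into a single index and setting each $H^{(p)}=1$, it is also of the $\ScalingNetwork$ form of Definition~\ref{def:scalingNetwork} with $l_p,b_{pk},\tau_{pk}$ instantiated as $\UAPNetwork$ components.

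\textbf{Main obstacle.} The crux is the interaction of the last two steps: the architectures force the $u$- and $\alpha$-dependence into \emph{separate multiplicative factors}, so one cannot simply feed the concatenated samples $(\bm{u},\bm{\alpha})$ to a single network --- one needs the tensor-product (low-rank) density of the second step --- and then one must verify that replacing each factor of a product by an evaluation-based shallow net keeps the error accumulated over the product structure and over the $N$ separated terms under control, uniformly in $(\alpha,u,x)$. The remaining work (selecting the nets, tracking the moduli of continuity of the finitely many $\Phi_k$, the Tietze extensions) is routine.
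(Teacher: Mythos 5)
Your proposal is correct, but it achieves the separation of the $\alpha$- and $u$-dependence by a genuinely different mechanism than the paper. The paper never forms the joint functional $\Phi_k(\alpha,u)=c_k(G[\alpha][u])$ and never invokes Stone--Weierstrass: instead it exploits the specific form of Theorem~\ref{theory_1}, whose approximation $\sum_i c_{ki}(F)\,\sigma(\cdot)$ has coefficients that are themselves continuous functionals of the approximated object. Freezing $\alpha$, it approximates $u\mapsto c_k(G[\alpha][u])$ by such a net; the $\alpha$-dependence is then carried entirely by the coefficients $c_{ki}(E(c_k(G[\alpha][\cdot])))$, which become new continuous functionals $H^{(k,i)}(\alpha)$ to be approximated by a third application of the same machinery. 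That nesting forces the paper to verify that the Tietze extension operator $E$ is continuous (citing Dugundji) so that $\alpha\mapsto H^{(k,i)}(\alpha)$ is continuous --- a point your route avoids entirely, since Stone--Weierstrass only needs joint continuity of $\Phi_k$ on the compact product $W\times U$, which you establish directly from Assumptions~\ref{assumption:Main:assumptions:O1} and~\ref{assumption:Main:assumptions:O2} (the paper, incidentally, asserts compactness of $G[W][U]$ without the argument you supply). The trade-off is that your tensor-product decomposition inserts an extra approximation layer (and an extra inner index) whose error must be propagated through products of bounded factors, whereas the paper's nested-coefficient argument produces the separable form in one pass with the three additive error terms $\varepsilon/3$, $\varepsilon/(3L_1)$, $\varepsilon/(3L_2)$. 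Both arguments then use the identical $\eta$-net/partition-of-unity reduction of Section~\ref{sec:back:operator} and land on the same architecture; the one small point you leave implicit is that the finitely many factors $g_{ki}$ (resp.\ $h_{ki}$) must share a single sampling set $\{x_l\}$ (resp.\ $\{z_h\}$), which is harmless because the nets of Section~\ref{sec:back:operator} are nested and one may take the finest.
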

The proof is provided in Section~\ref{sec_mainResult_universalApproximationI}. Next, we relax the continuity requirement on the map $\alpha \mapsto G[\alpha]$, extending the result from continuous to measurable operator families. In this more general setting, the approximation is obtained in the $\Lp{2}$-norm rather than the $\Lp{\infty}$-norm.

\begin{theorem}[Universal Approximation Theorem for Multiple Operators in $\Lp{2}$] \label{thm:mainResult:universalApproximationII}
    Assume that Assumptions \ref{assumption:Main:assumptions:A1}, \ref{assumption:Main:assumptions:A2}, \ref{assumption:Main:assumptions:S1}, \ref{assumption:Main:assumptions:S2}, \ref{assumption:Main:assumptions:S3}, \ref{assumption:Main:assumptions:M1}, \ref{assumption:Main:assumptions:M2}, \ref{assumption:Main:assumptions:O1} and \ref{assumption:Main:assumptions:O3} hold. 
    Then, for every $\eps > 0$, there exists a network $\nn$, of the form given in Definition~\ref{def:uapNetwork} (with $L_{ki}(\alpha) = \gamma_{ki} \l \sum_{j=1}^P c_{kij} \sigma\l \sum_{h=1}^p \varphi_{kijh} \alpha(z_h) + \rho_{kij} \r \r$ where $\gamma_{ki}$ are ReLu neural networks) or Definition~\ref{def:scalingNetwork} (with $l_p$, $b_{pk}$, and $\tau_{pk}$ being defined as in Definition~\ref{def:uapNetwork}), such that
    \begin{equation} \label{eq::mainResult:universalApproximationII}
         \left\| G[\alpha][u](x) - \nn[\bm{\alpha}][\bm{u}](x)\right\|_{\Lp{2}_{\nu \times \mu \times \lambda}(W \times U \times \Omega_V)} < \varepsilon
    \end{equation}
for any functions $\alpha \in W$ and $u \in U$ and where $\bm{\alpha}$ and $\ub$ are discretizations of $\alpha$ and $u$ that only depend on $W$ and $U$ respectively. 
\end{theorem}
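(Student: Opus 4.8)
The plan is to reduce the measurable case to the continuous one already settled in Theorem~\ref{thm:mainResult:universalApproximationI}: relaxing Assumption~\ref{assumption:Main:assumptions:O2} to \ref{assumption:Main:assumptions:O3} and passing from the $\Lp{\infty}$- to the $\Lp{2}$-norm is precisely what makes room for a preliminary density step, after which the earlier theorem applies with no new architecture-level work.

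First I would record the measure-theoretic setup. By Assumptions~\ref{assumption:Main:assumptions:S1}--\ref{assumption:Main:assumptions:S3} the space $W\times U\times\Omega_V$ is compact metric, and by Assumptions~\ref{assumption:Main:assumptions:M1}--\ref{assumption:Main:assumptions:M2} the measure $\nu\times\mu\times\lambda$ is finite on it with total mass $\lambda(\Omega_V)$; hence $\|f\|_{\Lp{2}_{\nu\times\mu\times\lambda}}\le \lambda(\Omega_V)^{1/2}\|f\|_{\Lp{\infty}}$ on this space, and continuous functions are dense in $\Lp{2}_{\nu\times\mu\times\lambda}(W\times U\times\Omega_V)$. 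Since, by Assumption~\ref{assumption:Main:assumptions:O3}, $(\alpha,u,x)\mapsto G[\alpha][u](x)$ is a measurable element of this $\Lp{2}$ space, I would choose a continuous $g:W\times U\times\Omega_V\to\mathbb{R}$ with $\|G[\alpha][u](x)-g(\alpha,u,x)\|_{\Lp{2}_{\nu\times\mu\times\lambda}}<\varepsilon/2$ and then verify that $g$ induces a genuine continuous operator family $\tilde G$ defined by $\tilde G[\alpha][u](x):=g(\alpha,u,x)$: uniform continuity of $g$ on the compact product forces $\tilde G[\alpha]\in V=\Ck{0}(\Omega_V)$, forces $\tilde G[\alpha]\colon U\to V$ to be continuous (so Assumption~\ref{assumption:Main:assumptions:O1} holds, up to an arbitrarily small perturbation if strict nonlinearity is demanded), and forces $\alpha\mapsto\tilde G[\alpha]$ to be continuous into $\Ck{0}(U,V)$ in the uniform operator norm (Assumption~\ref{assumption:Main:assumptions:O2}); Assumptions~\ref{assumption:Main:assumptions:A1}, \ref{assumption:Main:assumptions:S1}--\ref{assumption:Main:assumptions:S3} are untouched. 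Applying Theorem~\ref{thm:mainResult:universalApproximationI} to $\tilde G$ at accuracy $\varepsilon/(2\max\{1,\lambda(\Omega_V)^{1/2}\})$ in the $\Lp{\infty}$-norm produces a network $\nn$ of the form in Definition~\ref{def:uapNetwork} (with each $L_{ki}$ shallow, which is the special case of the stated form in which $\gamma_{ki}$ is the identity ReLU network) or in Definition~\ref{def:scalingNetwork}; converting to $\Lp{2}$ via Step one and using the triangle inequality against $g$ then yields \eqref{eq::mainResult:universalApproximationII}. The $\ScalingNetwork$ case is identical, since the reduction only invokes the corresponding case of Theorem~\ref{thm:mainResult:universalApproximationI}.

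The main obstacle is the verification in the middle step that an abstract continuous $\Lp{2}$-approximant $g$ yields an operator family meeting \emph{exactly} the hypotheses of Theorem~\ref{thm:mainResult:universalApproximationI} — in particular that joint continuity of $g$ on the compact product upgrades to continuity of $\alpha\mapsto\tilde G[\alpha]$ in the uniform operator norm, which is where compactness of all three of $W$, $U$, $\Omega_V$ enters in full. A more hands-on alternative, exploiting the measurability half of Assumption~\ref{assumption:Main:assumptions:O3} directly, is to replace the density step by Lusin's theorem applied to $\alpha\mapsto G[\alpha]\in\Ck{0}(U,V)$ (separable because $U$ is compact metric and $V=\Ck{0}(\Omega_V)$ is separable): one gets a compact $K_\delta\subseteq W$ with $\nu(W\setminus K_\delta)<\delta$ on which the family is continuous, runs the previous argument over $K_\delta$, and absorbs the contribution of $W\setminus K_\delta$ using absolute continuity of $\int|G|^2\,d(\nu\times\mu\times\lambda)$ together with a uniform output bound for $\nn$ over the compact function sets — the point at which the freedom to clip by composing with the ReLU networks $\gamma_{ki}$ in the stated architecture is genuinely convenient. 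In either route the $\Lp{2}$-versus-$\Lp{\infty}$ constant $\lambda(\Omega_V)^{1/2}$ and the harmless nonlinearity clause of Assumption~\ref{assumption:Main:assumptions:O1} are the only bookkeeping nuisances.
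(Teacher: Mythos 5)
Your proposal is correct, and your ``alternative'' route is essentially the paper's own proof, while your primary route is a genuinely different and in fact cleaner argument. The paper proceeds exactly as in your second sketch: it truncates $G$ to a uniformly bounded $G_M$, applies Lusin's theorem to $\alpha\mapsto G_M[\alpha]\in\Ck{0}(U,V)$ to obtain a compact $W_K$ on which Theorem~\ref{thm:mainResult:universalApproximationI} applies, and controls the exceptional set $W\setminus W_K$ by composing the $L$-component with a clipping ReLU network $\gamma$. Your primary route --- approximate the measurable kernel $(\alpha,u,x)\mapsto G[\alpha][u](x)$ by a jointly continuous $g$ in $\Lp{2}_{\nu\times\mu\times\lambda}$ (valid, since the product is compact metric and the measure is finite Borel, hence Radon), observe that uniform continuity of $g$ on the compact product yields an operator family satisfying \ref{assumption:Main:assumptions:O1}--\ref{assumption:Main:assumptions:O2}, apply Theorem~\ref{thm:mainResult:universalApproximationI}, and convert norms via $\lambda(\Omega_V)^{1/2}$ --- avoids the truncation, the exceptional set, and the clipping entirely, and still lands in the stated architecture class since the identity is a ReLU network. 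What the paper's route buys in exchange for its extra machinery is mainly a construction that respects the architecture with nontrivial $\gamma_{ki}$ as announced in the statement; your route proves the same conclusion more economically.

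One caution on your sketch of the Lusin route, should you write it out: controlling the network on $W\setminus K_\delta$ by ``a uniform output bound for $\nn$'' hides two technical points the paper has to work for. First, the clip level must be fixed \emph{before} Lusin's theorem is invoked (the size of the exceptional set depends on it), which is why the paper truncates $G$ to $G_M$ first rather than relying only on absolute continuity of $\int|G|^2$; otherwise the admissible measure of $W\setminus K_\delta$ depends on a bound that is only available after the network on $K_\delta$ has been built, and the argument becomes circular. Second, clipping the coefficients $L_{ki}(\alpha)$ only controls the $\Lp{2}_{\mu\times\lambda}$-norm of the full network by a constant \emph{independent of the network size} after the trunk and branch components have been orthonormalized --- this is the content of Lemma~\ref{lem:orthogonal}, which is an essential ingredient of the paper's proof and is absent from your sketch. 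Neither issue arises in your primary route.
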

The proof is given in Section \ref{sec_mainResult_universalApproximationII}.

\subsubsection{Scaling Laws}

In this section, we establish the scaling laws for the $\ScalingNetwork$ architecture. Our strategy is indirect: we first carry out the analysis for an equivalent, but more explicit, architecture in Theorem \ref{thm:main:multipleOperatorApproximation}. The scaling laws for $\ScalingNetwork$ then follow as a corollary through suitable reformulations as detailed in Remark \ref{rem:mnoScalingLaws}. Moreover, the results derived for the setting of learning several single operators emerge naturally from this analysis. Our analysis proceeds hierarchically, beginning at the functional level, extending to the approximation of (several) single operators, and then yielding the final general multiple operator learning result.

As discussed in Section \ref{sec:back:scalingLaws}, we start by considering a revised version of \cite[Theorem 6]{liu2024neuralscalinglawsdeep} for quantitative functional approximation through neural networks. For the sake of completeness, in Section \ref{sec_backfunctionalApproximationLinfty}, we provide a modified proof which explicitly determines the values of some of the constants in the approximating network architecture.

\begin{theorem}[Functional Approximation]\label{thm:back:functionalApproximationLinfty}
    Let $d_U>0$ be an integer, $\gamma_U,\beta_U,L_U,L_f>0$ be constants and assume that 
    $U(d_U,\gamma_U,L_U,\beta_U)$ 
    satisfies Assumption \ref{assumption:Main:assumptions:S4}. 
    Let $f:\{ u:\Omega_U \mapsto \bbR \spaceBar \Vert u \Vert_{\Lp{\infty}} \leq \beta_U \} \mapsto \bbR$
    be a functional such that \[
    \vert f(u_1) - f(u_2) \vert \leq L_f \Vert v_1 - v_2 \Vert_{\Lp{\infty}}
    \]
    for all $u_1,u_2 \in \{u: \Omega_U \spaceBar \Vert u \Vert_{\Lp{\infty}}(\Omega_U) \leq \beta_U \}$.
    
    There exists constants $C$ and $C_{\delta}$ depending on $\beta_U, L_f$ and $L_f,L_U$ respectively such that the following holds. For any $\varepsilon>0$, \begin{itemize}
        \item let $\delta=C_{\delta}\varepsilon$ and let $\{c_m\}_{m=1}^{n_{c_U}}\subset \Omega_U$ be points so that $\{\mathcal{B}_{\delta}(c_m) \}_{ m  = 1}^{n_{c_U}}$ is a cover of $\Omega_U$ for some $n_{c_U}$;
        \item let $H = C \sqrt{n_{c_U}} \eps^{-1}$ and consider the network class $\cF_{\rm NN}(n_{c_U}, 1 , L, p, K, \kappa, R)$ 
     with parameters scaling as
     \begin{align*}
     &L=\mathcal{O}\left(n_{c_U}^2\log(n_{c_U})+n_{c_U}^2\log(\varepsilon^{-1})\right),\quad  p = \mathcal{O}(1),\quad K = \mathcal{O}\left(n_{c_U}^2\log n_{c_U}+n_{c_U}^2\log(\varepsilon^{-1})\right), \\ &\kappa=\mathcal{O}(n_{c_U}^{n_{c_U}/2+1}\varepsilon^{-n_{c_U}-1}),\qquad \, R=1
    \end{align*}
    where the constants hidden in $\mathcal{O}$ depend on $\beta_U$ and $L_f$.
    \end{itemize} 
    Then, there exists networks
     $\{b_k\}_{k=1}^{H^{n_{c_U}}} \subset  \cF_{\rm NN}(n_{c_U}, 1, L, p, K, \kappa,R)$ and functions $\{u_k\}_{k=1}^{H^{n_{c_U}}} \subset \{ u:\Omega_U \mapsto \bbR \spaceBar \Vert u \Vert_{\Lp{\infty}} \leq \beta_U \}$ such that 
     \begin{align}
        \sup_{u\in U}\left|f(u)-\sum_{k=1}^{H^{n_{c_U}}} f(u_k) b_k(\bm{u})\right|\leq \varepsilon,
        \label{eq.functional.1}
    \end{align}
    where $\ub=(u(c_1), u(c_2),\dots,u(c_{n_{c_U}}))^\top$.
\end{theorem}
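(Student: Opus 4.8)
The plan is to reduce the functional approximation problem to a finite-dimensional function approximation problem, and then invoke Theorem~\ref{thm:back:functionApproximation} applied on the encoding space $\mathbb{R}^{n_{c_U}}$. First I would construct the discretization: since $\Omega_U$ is compact, I pick points $\{c_m\}_{m=1}^{n_{c_U}}$ so that $\{\mathcal{B}_\delta(c_m)\}$ covers $\Omega_U$, with $\delta = C_\delta \varepsilon$ chosen so that the resulting reconstruction error is controlled. The key bridge is a reconstruction map $\Phi:\mathbb{R}^{n_{c_U}} \to \{u:\Omega_U\to\mathbb{R}\}$ (e.g.\ a partition-of-unity interpolation analogous to Eq.~\eqref{u_etak}, using hat functions subordinate to the cover) such that for every $u\in U$, the reconstructed function $\tilde u := \Phi(\bm u)$ built from the samples $\bm u = (u(c_1),\dots,u(c_{n_{c_U}}))^\top$ satisfies $\|u-\tilde u\|_{\Lp\infty}\leq C_U L_U \delta$, using the Lipschitz bound in Assumption~\ref{assumption:Main:assumptions:S4}(b). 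Then by the Lipschitz assumption on $f$, $|f(u)-f(\tilde u)|\leq L_f\|u-\tilde u\|_{\Lp\infty}\leq L_f C_U L_U \delta$, and choosing $C_\delta$ appropriately makes this at most $\varepsilon/2$.

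**Next I would** define the finite-dimensional target function $\hat f:[-\beta_U,\beta_U]^{n_{c_U}}\to\mathbb{R}$ by $\hat f(z) := f(\Phi(z))$ (after checking $\Phi$ maps into the admissible ball $\{\|u\|_{\Lp\infty}\leq\beta_U\}$, which holds since partition-of-unity reconstruction is a convex combination of sampled values). This $\hat f$ is Lipschitz on the cube with a constant of order $L_f \cdot \mathrm{Lip}(\Phi)$, so it fits the hypotheses of the function-approximation result Theorem~\ref{thm:back:functionApproximation} with ambient dimension $d = n_{c_U}$ and a suitable box; applying that theorem with target accuracy $\varepsilon/2$ produces networks $\{q_k\}_{k=1}^{H^{n_{c_U}}}\subset \cF_{\rm NN}(n_{c_U},1,L,p,K,\kappa,R)$ and grid points $\{z_k\}$ with $H = C\sqrt{n_{c_U}}\varepsilon^{-1}$ such that $|\hat f(z) - \sum_k \hat f(z_k) q_k(z)|\leq \varepsilon/2$ for all $z$ in the box, with the parameter scalings $L, p, K, \kappa, R$ as stated (the polynomial-in-$n_{c_U}$ and polynomial-in-$\varepsilon^{-1}$ bounds carry over verbatim since they are exactly the bounds of Theorem~\ref{thm:back:functionApproximation} with $d_U \leftarrow n_{c_U}$). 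Setting $b_k := q_k$ and $u_k := \Phi(z_k)$, we have $f(u_k) = \hat f(z_k)$, and for $u\in U$ with $z = \bm u$ (which lies in the box since $\|u\|_{\Lp\infty}\leq\beta_U$), the triangle inequality gives
\[
\Bigl|f(u) - \sum_k f(u_k)\,b_k(\bm u)\Bigr| \leq |f(u) - \hat f(\bm u)| + \Bigl|\hat f(\bm u) - \sum_k \hat f(z_k) q_k(\bm u)\Bigr| \leq \frac{\varepsilon}{2} + \frac{\varepsilon}{2} = \varepsilon.
\]

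**The main obstacle** I anticipate is bookkeeping the explicit constants — the whole point of this "modified" version over \cite[Theorem~6]{liu2024neuralscalinglawsdeep} is to pin down which constants ($C$, $C_\delta$) depend on which parameters ($\beta_U, L_f$ versus $L_f, L_U$). This requires care in (i) tracking that the reconstruction error constant depends on $L_U$ and the geometry of the cover (hence $C_\delta$ depends on $L_f, L_U$), (ii) tracking that the Lipschitz constant of $\hat f$, which feeds into the hidden $\mathcal{O}$-constants of the function approximation step, depends only on $\beta_U$ and $L_f$ (the $\mathrm{Lip}(\Phi)$ factor being an absolute geometric constant once the cover overlap is controlled), and (iii) ensuring $n_{c_U}$ itself is treated as the "dimension" variable so that the $\varepsilon$-dependence of $\kappa \sim n_{c_U}^{n_{c_U}/2+1}\varepsilon^{-n_{c_U}-1}$ matches the template. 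A secondary technical point is verifying the reconstruction operator $\Phi$ is well-defined (the partition of unity is never dividing by zero, which is guaranteed because the balls cover $\Omega_U$) and that its output genuinely lies in the admissible function ball; both are routine but must be stated. Aside from this, the argument is a clean two-step composition with no deep new ideas beyond those already in Section~\ref{sec:back:scalingLaws}.
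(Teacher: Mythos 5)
Your proposal is correct and follows essentially the same route as the paper's proof: a cover of $\Omega_U$ by $\delta$-balls, a partition-of-unity reconstruction $I_{\mathcal{C}_U}$ (your $\Phi$) paired with the sampling projection $P_{\mathcal{C}_U}$, the bound $\|u - I_{\mathcal{C}_U}[P_{\mathcal{C}_U}(u)]\|_{\Lp{\infty}} \leq L_U\delta$ feeding into the Lipschitz property of $f$, the definition $\hat f(z) = f(I_{\mathcal{C}_U}[z])$ with its Lipschitz bound (the paper gets constant exactly $L_f$ via Cauchy--Schwarz and $\sum_m \omega_m \equiv 1$, so your $\mathrm{Lip}(\Phi)$ factor is $1$), and an application of Theorem~\ref{thm:back:functionApproximation} in dimension $n_{c_U}$ followed by the triangle inequality with $u_k = I_{\mathcal{C}_U}[s_k]$. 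The constant-tracking you flag as the main obstacle is handled in the paper exactly as you anticipate.
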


\begin{remark}[Uniform functional approximation]\label{rem:uniformFunctionalApproximation}
We can extend Theorem \ref{thm:back:functionalApproximationLinfty} to a set of functionals 
\[
    \{f^{(j)}:\{u:\Omega_{U^{(j)}} \mapsto \bbR \spaceBar \Vert u \Vert_{\Lp{\infty}} \leq \beta_U^{(j)} \} \mapsto \bbR \spaceBar \text{
    $\vert f^{(j)}(u_1) - f^{(j)}(u_2)\vert \leq L^{(j)} \Vert u_1 - u_2 \Vert_{\Lp{\infty}} $}  \}_{j\in \mathcal{J}}
    \]
where $\mathcal{J}$ is a (possibly uncountable) index set. For simplicity, we assume that $d_{U^{(j)}} = d_{U}$ for all $j \in \mathcal{J}$ and define \[H^{(j)} = H^{n_{c_{U^{(j)}}}}.\]

\paragraph{Case I: $U^{(j)}$ are distinct}
In the case of distinct $U^{(j)}$, we apply Theorem \ref{thm:back:functionalApproximationLinfty} for every $j$ separately and, for every $j$, obtain \eqref{eq.functional.1}. Then, we take the supremum over $j$ and have 
\begin{equation}\label{eq:rem:multipleFunctionalDifferentDomain}
\sup_{j \in \mathcal{J}} \sup_{u \in U^{(j)}} \left|f^{(j)}(u)-\sum_{k=1}^{H^{(j)}} f^{(j)}(u_k^{(j)}) b_k^{(j)}(\bm{u}^{(j)})\right|\leq \varepsilon
\end{equation} 
where $\{b_k^{(j)}\}_{k=1}^{H^{(j)}}$ are networks with $ b_k^{(j)} \in \cF_{\rm NN}(n_{c_{U^{(j)}}}, 1, L^{(j)}, p^{(j)}, K^{(j)}, \kappa^{(j)},R^{(j)})$ for any $1 \leq k \leq H^{(j)}$, $\{u_k^{(j)}\}_{k=1}^{H^{(j)}}$ are functions in  $\{ u:\Omega_{U^{(j)}} \mapsto \bbR \spaceBar \Vert u \Vert_{\Lp{\infty}} \leq \beta_U^{(j)} \}$ and $\ub^{(j)}=(u(c_1^{(j)}), u(c_2^{(j)}),\dots,u(c_{n_{c_{U^{(j)}}}}^{(j)}))^{T}$. All the constants with superscript $j$ are analogous to the ones defined in the statement of Theorem \ref{thm:back:functionalApproximationLinfty} using the appropriated quantities related to $U^{(j)}$.

\paragraph{Case II: $U^{(j)} = U$}
If we have $U^{(j)} = U$ for all $j$, the above simplifies. By inspecting the proof of Theorem \ref{thm:back:functionalApproximationLinfty}, we note that our functional approximation relies on the function approximation Theorem \ref{thm:back:functionApproximation}. In particular, the idea is to transform our functional $f:\{u:\Omega_U \mapsto \bbR\ \spaceBar \Vert u \Vert_{\Lp{\infty}} \leq \beta_U \} \mapsto \bbR$ into a Lipschitz function $\hat{f}:[-\beta_U,\beta_U]^{n_{c_U}} \mapsto \bbR$ contained in some class $V(n_{c_U},\beta_U,L_f,C_{\hat{f}})$. Then, we obtain the approximation result \[
    \sup_{x \in [-\beta_U,\beta_U]^{n_{C_U}}} \left\vert \hat{f}(x) - \sum_{k=1}^{H^{n_{c_U}}} \hat{f}(s_k) b_k(x)  \right\vert \leq \frac{\eps}{2}
    \]
    where the same networks $b_k$ and points $s_k$ can be chosen for any function in the class $V$. In particular, the only parameters in the class of functions $V$ and in the second part of the approximation (Eq. \eqref{eq:functionalApproximation:LipschitzBound}) that depend on $f$ are $L_f$ and $C_{\hat{f}}$.
    Therefore, if we consider a set of functionals
    \[
    \{f^{(j)}:\{u:\Omega_U \mapsto \bbR \spaceBar \Vert u \Vert_{\Lp{\infty}} \leq \beta_U \} \mapsto \bbR \spaceBar \text{
    $\vert f^{(j)}(u_1) - f^{(j)}(u_2)\vert \leq L_j \Vert u_1 - u_2 \Vert_{\Lp{\infty}} $}  \}_{j\in \mathcal{J}},
    \]
    the same argument can be repeated by replacing $L_f$ by $\sup_{j \in \mathcal{J}}L_j$ and $C_{\hat{f}}$ by $\sup_{j \in \mathcal{J}} C_{\hat{f}_j}$: 
     in fact, $\hat{f}^{(j)} \in V(n_{c_U},\beta_U,\sup_{j \in \mathcal{J}}L_j,\sup_{j \in \mathcal{J}} C_{\hat{f}_j})$. 
    We can conclude that
    \begin{equation} \label{eq:rem:multipleFunctionalSameDomain}
    \sup_{j\in \mathcal{J}} \sup_{u \in U} \left|f^{(j)}(u)-\sum_{k=1}^{H^{n_{c_U}}} f^{(j)}(u_k) b_k(\bm{u})\right|\leq \varepsilon.
        \end{equation}
    This result will only affect the choice of the constants in the statement of the theorem, none of the scalings. Obviously, this presupposes that $\sup_{j \in \mathcal{J}}L_j < \infty $ and $\sup_{j \in \mathcal{J}} C_{\hat{f}_j} < \infty$. In particular, we note that we can set $\sup_{j \in \mathcal{J}} C_{\hat{f}_j} = \sup_{j \in \mathcal{J}} \sup_{u \in \{u:\Omega_{U} \mapsto \bbR \spaceBar \Vert u \Vert_{\Lp{\infty}} \leq \beta_U^{(j)}\} } f^{(j)}(u)$. 

We also note that the case where some $U^{(j)}$ are distinct and some coincide is dealt with similarly, as a combination of Eqs. \eqref{eq:rem:multipleFunctionalDifferentDomain} and \eqref{eq:rem:multipleFunctionalSameDomain}. 
    
\end{remark}

With the explicit functional approximation provided by Theorem~\ref{thm:back:functionalApproximationLinfty}, we now establish a version of the operator scaling laws that includes explicit coefficients for the approximating network. The proof of the theorem is analogous to \cite[Theorem 1]{liu2024neuralscalinglawsdeep}, just substituting Theorem \ref{thm:back:functionalApproximationLinfty} for \cite[Theorem 6]{liu2024neuralscalinglawsdeep}. We recall the main steps in  Remark \ref{rem:uniformOperatorApproximation} and prove a very similar statement in Remark \ref{rem:balancingComplexity}.

\begin{theorem}[Single Operator Scaling Laws]\label{thm:back:operatorApproximation}
 Let $d_U,d_V>0$ be integers, $\gamma_U,\gamma_V,\beta_U,\beta_V, L_U,L_V,L_G>0$, and assume that $U(d_U,\gamma_U,L_U,\beta_U)$ and $V(d_V,\gamma_V,L_V,\beta_V)$ satisfy Assumption \ref{assumption:Main:assumptions:S4}. 
Let $G$ be an operator such that $G:\{ u:\Omega_U \mapsto \bbR \spaceBar \Vert u \Vert_{\Lp{\infty}} \leq \beta_U \} \mapsto V$. Furthermore, assume that $G$ satisfies \begin{equation} \label{eq:operatorApproximation:Lipschitz}
   \Vert G(u_1) - G(u_2) \Vert_{\Lp{\infty}(\Omega_V)} \leq L_G \Vert u_1 - u_2 \Vert_{\Lp{r}(\Omega_U)} 
\end{equation}

for some $r \geq 1$ and for any $u_1,u_2 \in \{ u:\Omega_U \mapsto \bbR \spaceBar \Vert u \Vert_{\Lp{\infty}} \leq \beta_U \}$. 

There exists constants $C$ depending on $\gamma_V,L_V$, $C_{\delta}$ depending on $L_G,d_U,\gamma_U,r,L_U$ and $C'$ depending on $\beta_U, L_G, d_U, \gamma_U, r$ such that the following holds. For any $\varepsilon>0$, \begin{itemize}
     
     \item let $N=2C \sqrt{d_V} \varepsilon^{-1}$ and consider the network class $\cF_1=\cF_{\rm NN}(d_V,1,L_1,p_1,K_1,\kappa_1,R_1)$  with parameters scaling as
    \begin{align*}
    &L_1 = \mathcal{O}\left(d_V^2\log d_V+d_V^2\log(\varepsilon^{-1})\right),\quad  p_1 = \mathcal{O}(1), \quad K_1 = \mathcal{O}\left(d_V^2\log d_V+d_V^2\log(\varepsilon^{-1})\right),\\
&\kappa_1=\mathcal{O}(d_V^{d_V/2+1}\varepsilon^{-d_V-1}),\qquad \qquad \quad R_1=1.
    \end{align*}
    where the constants hidden in $\mathcal{O}$ depend on $\gamma_V$ and $L_V$;

    \item let $\{v_\ell\}_{\ell=1}^{N^{d_V}} \subset \Omega_V$ be a uniform grid with spacing $2\gamma_V/N$ along each dimension;
    
    \item let $\delta=\frac{C_{\delta}\varepsilon^{1+d_V}}{2^{d_V+1}(C\sqrt{d_V})^{d_V}}$ and let $\{c_m\}_{m=1}^{n_{c_U}}\subset \Omega_U$ be points so that $\{\mathcal{B}_{\delta}(c_m) \}_{ m  = 1}^{n_{c_U}}$ is a cover of $\Omega_U$ for some $n_{c_U}$;
    
    \item let $H= 2^{d_V+1}C' \sqrt{n_{c_U}} (C \sqrt{d_V})^{d_V} \varepsilon^{-(d_V+1)}$ and consider the network class \newline$\cF_2=\cF_{\rm NN}(n_{c_U},1,L_2,p_2,K_2,\kappa_2,R_2)$ with parameters scaling as
\begin{align*}
        &L_2 = \mathcal{O}\left(n_{c_U}^2\log n_{c_U}+n_{c_U}^2(d_V +1)\log(\varepsilon^{-1}) + n_{c_U}^2 \log(2^{d_V +1} (C \sqrt{d_V})^{d_V} ) \right),\quad p_2=\mathcal{O}(1), \\
&K_2 = \mathcal{O}\left(n_{c_U}^2\log n_{c_U}+n_{c_U}^2(d_V +1)\log(\varepsilon^{-1})+n_{c_U}^2 \log( 2^{d_V +1} (C \sqrt{d_V})^{d_V})\right), \\ 
&\kappa_2=\mathcal{O}(n_{c_U}^{n_{c_U}/2+1}\varepsilon^{-(d_V+1)(n_{c_U}+1)}[ 2^{d_V +1} (C \sqrt{d_V})^{d_V} ]^{n_{c_U}+1}), \qquad R_2=1
    \end{align*}
    where the constants hidden in $\mathcal{O}$ depend on $\beta_U, L_G, d_U, \gamma_U,r$.
 \end{itemize} 
 Then, there exists networks $\{\tau_\ell\}_{\ell=1}^{N^{d_V}} \subset \cF_1$, networks $\{b_k\}_{k=1}^{H^{n_{c_U}}} \subset \mathcal{F}_2$ and functions $\{u_k\}_{k=1}^{H^{n_{c_U}}} \subset \{ u:\Omega_U \mapsto \bbR \spaceBar \Vert u \Vert_{\Lp{\infty}} \leq \beta_U \}$ such that 
\begin{align}
        \sup_{u\in U}\sup_{x \in \Omega_V}\left|G[u](x)- \sum_{\ell=1}^{N^{d_V}} \sum_{k=1}^{H^{n_{c_U}}} G[u_k](v_{\ell}) b_k(\ub) \tau_{\ell}(x)\right|\leq \varepsilon, \label{eq:back:operatorApproximation}
    \end{align}
    where  $\ub=(u(c_1), u(c_2),...,u(c_{n_{c_U}}))^\top$ is a discretization of $u$.
\end{theorem}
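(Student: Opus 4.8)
The plan is to follow the hierarchical strategy sketched in Section~\ref{sec:back:scalingLaws}: first peel off the spatial variable $x$ by a \emph{function} approximation of $x\mapsto G[u](x)$, then approximate the resulting coefficient \emph{functionals} $u\mapsto G[u](v_\ell)$, and finally compose the two approximations while carefully controlling the error amplification. First I would fix $u\in U$ and observe that, since $G[u]\in V$ and $V(d_V,\gamma_V,L_V,\beta_V)$ satisfies Assumption~\ref{assumption:Main:assumptions:S4}, the map $x\mapsto G[u](x)$ is $L_V$-Lipschitz on $\Omega_V=[-\gamma_V,\gamma_V]^{d_V}$. Applying Theorem~\ref{thm:back:functionApproximation} with accuracy $\varepsilon/2$ produces $N=2C\sqrt{d_V}\varepsilon^{-1}$, the uniform grid $\{v_\ell\}_{\ell=1}^{N^{d_V}}\subset\Omega_V$ of spacing $2\gamma_V/N$, and networks $\{\tau_\ell\}_{\ell=1}^{N^{d_V}}\subset\cF_1$ (the stated scalings, with $R_1=1$ giving $\|\tau_\ell\|_{\Lp{\infty}}\le 1$) satisfying
\[
\Big\| G[u]-\sum_{\ell=1}^{N^{d_V}} G[u](v_\ell)\,\tau_\ell \Big\|_{\Lp{\infty}(\Omega_V)}\le \frac{\varepsilon}{2},
\]
where the grid and the $\tau_\ell$ do not depend on $u$.

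Next I would turn the coefficients into functionals $f_\ell(u):=G[u](v_\ell)$ defined on $\{u:\Omega_U\to\bbR\mid \|u\|_{\Lp{\infty}}\le\beta_U\}$. From the operator Lipschitz bound~\eqref{eq:operatorApproximation:Lipschitz} and the finite-measure embedding $\|w\|_{\Lp{r}(\Omega_U)}\le |\Omega_U|^{1/r}\|w\|_{\Lp{\infty}(\Omega_U)}=(2\gamma_U)^{d_U/r}\|w\|_{\Lp{\infty}(\Omega_U)}$, each $f_\ell$ is Lipschitz with the common constant $L_f:=L_G(2\gamma_U)^{d_U/r}$. Because the Step~1 error gets multiplied by $\sum_\ell|\tau_\ell(x)|\le N^{d_V}=2^{d_V}(C\sqrt{d_V})^{d_V}\varepsilon^{-d_V}$ upon composition, I would then invoke the uniform version of Theorem~\ref{thm:back:functionalApproximationLinfty} (Remark~\ref{rem:uniformFunctionalApproximation}, Case~II, applicable since all $f_\ell$ live on the same domain $U$ with the same Lipschitz constant $L_f$) with target accuracy
\[
\varepsilon':=\frac{\varepsilon}{2N^{d_V}}=\frac{\varepsilon^{1+d_V}}{2^{d_V+1}(C\sqrt{d_V})^{d_V}}.
\]
This yields cover points $\{c_m\}_{m=1}^{n_{c_U}}$ with $\delta=C_\delta\varepsilon'$ (matching the stated $\delta$), networks $\{b_k\}_{k=1}^{H^{n_{c_U}}}\subset\cF_2$ with $H=C'\sqrt{n_{c_U}}(\varepsilon')^{-1}$ (matching the stated $H$; the $\cF_2$ parameter scalings are exactly those of Theorem~\ref{thm:back:functionalApproximationLinfty} with $\varepsilon'$ in place of $\varepsilon$, which is where the extra $\log\big(2^{d_V+1}(C\sqrt{d_V})^{d_V}\big)$ and $[2^{d_V+1}(C\sqrt{d_V})^{d_V}]^{n_{c_U}+1}$ factors come from), and functions $\{u_k\}_{k=1}^{H^{n_{c_U}}}$ with
\[
\sup_{\ell}\ \sup_{u\in U}\ \Big| G[u](v_\ell)-\sum_{k=1}^{H^{n_{c_U}}} G[u_k](v_\ell)\,b_k(\ub)\Big|\le \varepsilon',\qquad \ub=(u(c_1),\dots,u(c_{n_{c_U}}))^\top,
\]
where the $b_k$, $u_k$ and $c_m$ are independent of $\ell$ and $u$.

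Finally I would compose: for $u\in U$ and $x\in\Omega_V$, insert $\sum_\ell G[u](v_\ell)\tau_\ell(x)$ and use the triangle inequality together with $\|\tau_\ell\|_{\Lp{\infty}}\le 1$,
\[
\Big| G[u](x)-\sum_{\ell}\sum_{k} G[u_k](v_\ell)\,b_k(\ub)\,\tau_\ell(x)\Big| \le \frac{\varepsilon}{2} + \sum_{\ell}|\tau_\ell(x)|\,\varepsilon' \le \frac{\varepsilon}{2}+N^{d_V}\varepsilon' = \varepsilon,
\]
which is precisely~\eqref{eq:back:operatorApproximation}, and the dependence of $C$, $C_\delta$, $C'$ on the stated parameters follows by tracking $L_f=L_G(2\gamma_U)^{d_U/r}$ through the two cited theorems.

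The main obstacle is bookkeeping rather than a new idea. The two genuinely load-bearing points are: (i) the functional approximation must be performed \emph{simultaneously} over all $N^{d_V}$ indices $\ell$ using one shared family of block networks $b_k$ and one shared set of evaluation points $c_m$ — this is exactly what Remark~\ref{rem:uniformFunctionalApproximation} provides, and it works only because the $f_\ell$ share a domain and a Lipschitz constant; and (ii) the crude bound $\sum_\ell|\tau_\ell(x)|\le N^{d_V}$ (valid thanks to $R_1=1$) is what forces the functional accuracy down to $\varepsilon'\sim\varepsilon^{1+d_V}$, which then propagates into the inflated $\kappa_2$, $L_2$, $K_2$ scalings; verifying that these match the theorem's stated expressions is the bulk of the (routine) work. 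Otherwise the argument is identical in structure to \cite[Theorem~1]{liu2024neuralscalinglawsdeep} with Theorem~\ref{thm:back:functionalApproximationLinfty} substituted for \cite[Theorem~6]{liu2024neuralscalinglawsdeep}.
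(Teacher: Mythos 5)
Your proposal is correct and follows essentially the same route as the paper: the paper's own argument (sketched in Remark~\ref{rem:uniformOperatorApproximation} and attributed to \cite[Theorem~1]{liu2024neuralscalinglawsdeep} with Theorem~\ref{thm:back:functionalApproximationLinfty} substituted for their functional lemma) likewise approximates $x\mapsto G[u](x)$ first via Theorem~\ref{thm:back:functionApproximation}, then uniformly approximates the functionals $f_\ell(u)=G[u](v_\ell)$ with Lipschitz constant $L_G\vert\Omega_U\vert^{1/r}$ via Remark~\ref{rem:uniformFunctionalApproximation} (Case~II) at accuracy $\eps_0=\eps/(2N^{d_V})$, and closes with the same $\sum_\ell\vert\tau_\ell(x)\vert\le N^{d_V}$ bound. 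Your bookkeeping of $N$, $\delta$, $H$ and the inflated $\cF_2$ scalings matches the stated constants.
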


\begin{remark}[Uniform operator approximation]\label{rem:uniformOperatorApproximation}
Similarly to Remark \ref{rem:uniformFunctionalApproximation}, we extend Theorem \ref{thm:back:operatorApproximation} to a set of operators 
\[\{G^{(j)}:\{ u:\Omega_{U^{(j)}} \mapsto \bbR \spaceBar \Vert u \Vert_{\Lp{\infty}} \leq \beta_{U^{(j)}} \} \mapsto V^{(j)} \spaceBar \Vert G^{(j)}(u_1) - G^{(j)}(u_2) \Vert_{\Lp{\infty}} \leq L_{G^{(j)}} \Vert u_1 - u_2 \Vert_{\Lp{r^{(j)}}}\}_{j\in \mathcal{J}}
\]
where $\mathcal{J}$ is a (possibly uncountable) index set. 
For simplicity, we assume that $d_{U^{(j)}} = d_{U}$ and $d_{V^{(j)}} = d_{V}$  for all $j \in \mathcal{J}$ and define \[H^{(j)} = H^{n_{c_{U^{(j)}}}}.\] 

\paragraph{Case I: $U^{(j)}$ are distinct and $V^{(j)}$ are distinct}
If the $U^{(j)}$ and $V^{(j)}$ are distinct, we apply Theorem \ref{thm:back:operatorApproximation} first for each $j \in \mathcal{J}$ separately and then take the supremum over all $j$ to obtain: \[
\sup_{j \in \mathcal{J}} \sup_{u\in U^{(j)}}\sup_{x \in \Omega_{V^{(j)}}}\left|G^{(j)}[u](x)- \sum_{\ell=1}^{(N^{(j)})^{d_{V}}} \sum_{k=1}^{H^{(j)}} G^{(j)}[u_k^{(j)}](v_{\ell}^{(\ell)}) b_k^{(j)}(\ub^{(j)}) \tau_{\ell}^{(j)}(x)\right|\leq \varepsilon.
\]

For the rest of the cases, we need to recall the proof of Theorem \ref{thm:back:operatorApproximation}. Specifically, the idea is first to consider, for $u \in U$, the functions $G[u]: \Omega_V \mapsto \bbR$. The latter are all contained in $V$ and we can therefore apply the function approximation Theorem \ref{thm:back:functionApproximation} to deduce that for all $u \in U$, \[
\sup_{x \in \Omega_V} \left| G[u](x) - \sum_{\ell=1}^{N^{d_V}} G[u](v_\ell) \tau_\ell(x) \right| \leq \frac{\eps}{2}. 
\]
Then, we define the functionals $f_\ell(u) = G[u](v_\ell)$ and verify that they are $\Lp{\infty}$-Lipschitz on $\{ u:\Omega_U \mapsto \bbR \spaceBar \Vert u \Vert_{\Lp{\infty}} \leq \beta_U \}$ with Lipschitz constant $L_G \vert\Omega_V\vert^{1/r}$. 
As explained in Remark \ref{rem:uniformFunctionalApproximation}, the proof corresponds to the setting where we have $N^{d_V}$ functionals all defined for the same set of functions $U$, we can apply the formula in Eq. \eqref{eq:rem:multipleFunctionalSameDomain} to obtain that, for all $1 \leq \ell \leq N^{d_V}$,
\[
\sup_{u \in U} \left\vert f_\ell(u) - \sum_{k=1}^{H^{n_{c_U}}} f_\ell(u_k) b_{k}(\ub) \right\vert = \sup_{u \in U} \left\vert G[u](v_\ell) - \sum_{k=1}^{H^{n_{c_U}}} G[u_k](v_\ell) b_{k}(\ub) \right\vert\leq \eps_0. 
\]
Combining both estimates, we conclude with \begin{align*}
    &\sup_{u\in U}\sup_{x \in \Omega_V}\left|G[u](x)- \sum_{\ell=1}^{N^{d_V}} \sum_{k=1}^{H^{n_{c_U}}} G[u_k](v_{\ell}) b_k(\ub^{(j)}) \tau_{\ell}(x)\right| \\
    &\leq \sup_{u\in U}\sup_{x \in \Omega_V}\left|G[u](x)- \sum_{\ell=1}^{N^{d_V}} G[u](v_{\ell}) \tau_{\ell}(x)\right| \, +\,  \sup_{u\in U}\sup_{x \in \Omega_V} \sum_{\ell=1}^{N^{d_V}} \vert \tau_{\ell}(x) \vert \left| G[u](v_\ell) -  \sum_{k=1}^{H^{n_{c_U}}} G[u_k](v_{\ell}) b_k(\ub^{(j)}) \right| \\ 
    &\leq \frac{\eps}{2} + \eps_0 N^{d_V}.
\end{align*}
By picking $\eps_0 = \eps/(2N^{d_V}) = \mathcal{O}(\eps^{d_V +1})$, we obtain the result.

\paragraph{Case II: $U^{(j)} =U$ and $V^{(j)}$ are distinct}
Let us now assume that $U^{(j)} = U$. The first step of the proof can be repeated for every $j$ separately to obtain that for every $u \in U$, \[
\sup_{j \in J} \sup_{x \in \Omega_{V^{(j)}}} \left| G^{(j)}[u](x) - \sum_{\ell=1}^{(N^{(j)})^{d_{V}}} G[u](v_\ell^{(j)}) \tau^{(j)}_\ell(x) \right| \leq \frac{\eps}{2}. 
\]
Next, we can define the functionals 
$$f^{(j)}_\ell(u) = G^{(j)}[u](v^{(j)}_\ell)$$ 
and the latter are $\Lp{\infty}$-Lipschitz in $\{ u:\Omega_{U} \mapsto \bbR \spaceBar \Vert u \Vert_{\Lp{\infty}} \leq \beta_{U} \}$ with Lipschitz constant $\sup_{j \in \mathcal{J}} \vert \Omega_U \vert^{1/r^{(j)}} L_{G^{(j)}}$ if we assume that the latter is finite. If we further assume that 
\[
\sup_{j \in \mathcal{J}} \sup_{1 \leq \ell \leq N^{d_{V^{(j)}}}} f^{(j)}_\ell(u) = \sup_{j \in \mathcal{J}} \sup_{1 \leq \ell \leq N^{d_{V^{(j)}}}} G^{(j)}[u](v^{(j)}_\ell) \leq \sup_{j \in \mathcal{J}} \sup_{v \in V^{(j)}} \Vert v \Vert_{\Lp{\infty}(V^{(j)})} \leq \sup_{j \in \mathcal{J}} \beta_{V^{(j)}} < \infty
\]
then, the functionals satisfy all the assumptions in Eq. \eqref{eq:rem:multipleFunctionalSameDomain} and we obtain
\begin{align*}
\sup_{j \in \mathcal{J}} \sup_{u \in U} \sup_{1\leq \ell \leq (N^{(j)})^{d_{V}}} \left|f^{(j)}_\ell(u)-\sum_{k=1}^{H^{n_{c_U}}} f^{(j)}_\ell(u_k) b_k(\bm{u})\right| &= \sup_{j \in \mathcal{J}} \sup_{u \in U} \left|G^{(j)}[u](x)-\sum_{k=1}^{H^{n_{c_U}}} G^{(j)}[u_k](x) b_k(\bm{u})\right| \\ 
&\leq \frac{\varepsilon}{2\sup_{j\in \mathcal{J}} (N^{(j)})^{d_V}} =: \eps_0.
\end{align*}
This also requires that $\sup_{j\in \mathcal{J}} (N^{(j)})^{d_V} < \infty$ and we note a subtle point: in this setting, $b_k$, $u_k$ and $\ub$ can be chosen independently of $j$. The is possible since the latter are a function of $\varepsilon_0$ which is set to $\frac{\varepsilon}{2\sup_{j\in \mathcal{J}} (N^{(j)})^{d_V}}$, i.e. independent of $j$, and not to $\frac{\varepsilon}{2 (N^{(j)})^{d_V}}$, in which case they would both become dependent of $j$ again. By concluding as above, we obtain: 
\[
\sup_{j \in \mathcal{J}} \sup_{u\in U} \sup_{x \in \Omega_{V^{(j)}}}\left|G^{(j)}[u](x)- \sum_{\ell=1}^{(N^{(j)})^{d_{V}}} \sum_{k=1}^{H^{n_{c_U}}} G^{(j)}[u_k](v_{\ell}^{(j)}) b_k(\ub) \tau_{\ell}^{(j)}(x)\right|\leq \varepsilon.
\]

\paragraph{Case III: $U^{(j)}$ are distinct and $V^{(j)} = V$}
Next, we assume that $V_j = V$. Since for all $j \in \mathcal{J}$ and $u^{(j)} \in U^{(j)}$, we obtain that $G^{(j)}[u^{(j)}] \in V$, by repeating the first step of the proof, we can choose $\tau_\ell$ and $v_\ell$ to be independent of $j$ and obtain
\[
\sup_{j \in J} \sup_{u^{(j)} \in U^{(j)}} \sup_{x \in \Omega_{V}} \left| G^{(j)}[u^{(j)}](x) - \sum_{\ell=1}^{N^{d_{V}}} G^{(j)}[u^{(j)}](v_\ell) \tau_\ell(x) \right| \leq \frac{\eps}{2}. 
\]
We then define the functionals $f_\ell^{(j)}:\{ u^{(j)}:\Omega_{U^{(j)}} \mapsto \bbR \spaceBar \Vert u^{(j)} \Vert_{\Lp{\infty}} \leq \beta_{U^{(j)}} \}$ as $f_\ell^{(j)}(u^{(j)}) = G^{(j)}[u^{(j)}](v_\ell)$ and verify that they are $\Lp{\infty}$-Lipschitz with Lipschitz constant $\vert \Omega_{U^{(j)}} \vert^{1/r^{(j)}} L_G^{(j)}$. 
We then apply Eq. \eqref{eq:rem:multipleFunctionalDifferentDomain} and obtain that \begin{align*}
&\sup_{j \in \mathcal{J}} \sup_{u^{(j)} \in U^{(j)}} \sup_{1 \leq \ell \leq N^{d_V}} \left|f_\ell^{(j)}(u^{(j)})-\sum_{k=1}^{H^{(j)}} f_{\ell}^{(j)}(u_k^{(j)}) b_k^{(j)}(\bm{u}^{(j)})\right| \\
&= \sup_{j \in \mathcal{J}} \sup_{u^{(j)} \in U^{(j)}} \sup_{1 \leq \ell \leq N^{d_V}} \left|G^{(j)}[u^{(j)}](v_\ell)-\sum_{k=1}^{H^{(j)}} G^{(j)}[u_k^{(j)}](v_\ell) b_k^{(j)}(\bm{u}^{(j)})\right|\\
&\leq \frac{\eps}{2N^{d_V}}.
\end{align*}
Combining both estimates, we conclude that \[
\sup_{j \in J} \sup_{u^{(j)} \in U^{(j)}} \sup_{x \in \Omega_{V}} \left| G^{(j)}[u^{(j)}](x) - \sum_{\ell=1}^{N^{d_{V}}} \sum_{k=1}^{H^{(j)}} G^{(j)}[u_k^{(j)}](v_\ell) b_k^{(j)}(\bm{u}^{(j)}) \tau_\ell(x) \right| \leq \eps.
\]

\paragraph{Case IV: $U^{(j)} = U$ and $V^{(j)} = V$}
Finally, if $U_j = U$ and $V_j = V$, by combining both of the above, we obtain: 
\begin{equation} \label{eq:rem:multipleOperatorSameDomainSameRange}
\sup_{1\leq j \leq J} \sup_{u\in U} \sup_{x \in \Omega_{V}}\left|G^{(j)}[u](x)- \sum_{\ell=1}^{N^{d_{V}}} \sum_{k=1}^{H^{n_{c_U}}} G^{(j)}[u_k](v_{\ell}) b_k(\ub) \tau_{\ell}(x)\right|\leq \varepsilon.
\end{equation}
Similarly, we can deal with the case when some $U^{(j)}$ and $V^{(j)}$ are distinct, while some are equal. 

\end{remark}

\begin{remark}[Uncountable index set assumptions]
    Remarks \ref{rem:uniformFunctionalApproximation} and \ref{rem:uniformOperatorApproximation} have been formulated for uncountably many functionals and operators respectively. While these results are of interest on their own, they also require several assumptions on the finiteness of various constants. In practice, we will apply Remark \ref{rem:uniformOperatorApproximation} for finitely many operators: this significantly simplifies the necessary assumptions. 
\end{remark}

\begin{remark}[Alternative network for the operator approximation] \label{rem:alternative}
The network appearing in Eq. \eqref{eq:back:operatorApproximation} can be re-written in a slightly different manner. Specifically, we can define \begin{equation} \label{eq:rem:alternative}
    \sum_{\ell=1}^{N^{d_V}} \sum_{k=1}^{H^{n_{c_U}}} G[u_k](v_{\ell}) b_k(\ub) \tau_{\ell}(x) =: \sum_{k=1}^{H^{n_{c_U}}} b_k(\ub) \hat{\tau}_k(x) \text{ or } \sum_{\ell=1}^{N^{d_V}} \sum_{k=1}^{H^{n_{c_U}}} G[u_k](v_{\ell}) b_k(\ub) \tau_{\ell}(x) =: \sum_{\ell=1}^{N^{d_V}} \hat{b}_\ell(\ub) \tau_\ell(x)
\end{equation}
The networks $\hat{\tau}_k$ and $\hat{b}_\ell$ are in the classes $\cS_{N^{d_V}}\cF_1$ and $\cS_{H^{n_{c_U}}}\cF_2$ where $\cF_1$ and $\cF_2$ are defined in Theorem \ref{thm:back:operatorApproximation} and $\cS_{j} \cF$ denotes functions that are linear combinations of $j$ functions in the class $\cF$. We note that this is the convention chosen in \cite{liu2024neuralscalinglawsdeep}. These formulations are particularly well-suited for practical applications, as they replace the double summation in Eq. \eqref{eq:back:operatorApproximation} with a single inner product, thereby significantly simplifying implementation. 

We also want to consider a set of operators
\[
\{G^{(j)}:\{ u:\Omega_{U^{(j)}} \mapsto \bbR \spaceBar \Vert u \Vert_{\Lp{\infty}} \leq \beta_{U^(j)} \} \mapsto V^{(j)} \spaceBar \Vert G^{(j)}(u_1) - G^{(j)}(u_2) \Vert_{\Lp{\infty}} \leq L_{G}^{(j)} \Vert u_1 - u_2 \Vert_{\Lp{r^{(j)}}}\}_{j \in \mathcal{J}}.
\]
as in Remark \ref{rem:uniformOperatorApproximation}. We distinguish the same following four cases for the alternative network formulations of Eq. \eqref{eq:rem:alternative} which approximate all $G^{(j)}$. In particular, using the formulas derived in Remark \ref{rem:uniformOperatorApproximation}, we obtain: \begin{enumerate}
    \item \textbf{Case I: $U^{(j)}$ are distinct and $V^{(j)}$ are distinct} \[
 \sum_{k=1}^{H^{(j)}} b_k^{(j)}(\ub^{(j)}) \hat{\tau}^{(j)}_k(x) \quad \text{or} \quad \sum_{\ell=1}^{(N^{(j)})^{d_V}} \hat{b}_\ell^{(j)}(\ub^{(j)}) \tau^{(j)}_\ell(x);
    \]
    \item \textbf{Case II: $U^{(j)} =U$ and $V^{(j)}$ are distinct}\[
     \sum_{k=1}^{H^{n_{c_U}}} b_k(\ub) \hat{\tau}^{(j)}_k(x)\quad \text{or} \quad \sum_{\ell=1}^{(N^{(j)})^{d_V}} \hat{b}_\ell^{(j)}(\ub^{(j)}) \tau^{(j)}_\ell(x);
    \]
    \item \textbf{Case III: $U^{(j)}$ are distinct and $V^{(j)} = V$} \[
    \sum_{k=1}^{H^{(j)}} b_k^{(j)}(\ub^{(j)}) \hat{\tau}^{(j)}_k(x) \quad \text{or} \quad \sum_{\ell=1}^{N^{d_V}} \hat{b}_\ell^{(j)}(\ub^{(j)}) \tau_\ell(x);
    \]
    \item\textbf{Case IV: $U^{(j)} = U$ and $V^{(j)} = V$} \[
    \sum_{k=1}^{H^{n_{c_U}}} b_k(\ub) \hat{\tau}^{(j)}_k(x) \quad \text{or} \quad \sum_{\ell=1}^{N^{d_V}} \hat{b}_\ell^{(j)}(\ub) \tau_\ell(x).
    \]
\end{enumerate}
The above formulas provide a principled basis for selecting architectures when approximating several single operators simultaneously. In particular, depending on the structure of $U^{(j)}$ and $V^{(j)}$, certain architectures allow the dependence on $j$ to be absorbed into a single network component rather than appearing in multiple components simultaneously. This result also provides a unified framework that encompasses recent approaches such as D2NO~\cite{zhang2024d2no} and MODNO~\cite{zhang2024modno}.
\end{remark}

\begin{remark}[Learning several single operator versus multiple operator learning]
In practice, one often encounters settings where several distinct operators must be learned (as in Remarks \ref{rem:uniformOperatorApproximation} and \ref{rem:alternative}), either independently or with partial weight sharing—for instance, learning solution maps corresponding to different physical regimes or boundary conditions. At first glance, this may seem equivalent to learning a single parameterized operator, i.e. multiple operator learning. 

While both settings involve learning mappings between function spaces, they differ fundamentally in how the dependence on the index variable is treated. In the several single-operator setting, one considers an indexed family of operators $\{G^{(j)} : U^{(j)} \to V^{(j)}\}_{j \in J}$, where $J$ may be finite or uncountable, but the index $j$ does not explicitly enter the learning process. Each operator is represented or trained separately, and any shared structure across $j$ is imposed manually: Remark \ref{rem:alternative} guides the architecture choice in this regard. In contrast, multiple operator learning aims to approximate a single parameterized mapping $G : W \to \{U^{(\alpha)} \to V^{(\alpha)}\}$, where $\alpha \in W$ directly enters the model as an input. This formulation inherently captures how operators vary with $\alpha$, allowing a single network to interpolate across the entire parameter space rather than fitting a collection of independent mappings. We summarize the main comparison points in Table \ref{tab:single-vs-multiple}.
\end{remark}

\begin{table}[H]
\centering
\small
\renewcommand{\arraystretch}{1.5}
\setlength{\tabcolsep}{8pt}
\begin{tabularx}{\linewidth}{>{\bfseries}p{6cm} Y Y}
\toprule
& Several Single Operators & Multiple Operator Learning \\
\midrule
Formulation 
& $\{G^{(j)} : U^{(j)} \to V^{(j)}\}_{j \in J}$ 
& $G : W \to \{ U^{(\alpha)} \to V^{(\alpha)} \}_{\alpha \in W}$ \\
\midrule
Dependence on parameter/index 
& Dependence on $j$ is external to the model 
& Parameter $\alpha$ is an explicit input to the network \\
\midrule
Coupling between operators 
& Optional, via shared structure of the architecture 
& Intrinsic, through a single network \\
\midrule
Generalization capability 
& Limited to operators seen during training 
& Enables interpolation and extrapolation across $\alpha \in W$ \\
\midrule
Practical interpretation 
& Independent or weakly coupled tasks 
& Unified model for a continuum of related tasks \\
\bottomrule
\end{tabularx}
\caption{Comparison between the settings of several single operators and multiple operator learning. 
The latter treats the parameter (or index) as an explicit input, enabling a single network to represent a continuously parameterized family of operators.}
\label{tab:single-vs-multiple}
\end{table}

\begin{remark}[Balancing functional and spatial scaling complexity in the approximating architecture]\label{rem:balancingComplexity}

Theorem~\ref{thm:back:operatorApproximation} establishes specific scaling relations for the space-approximation networks $\tau_\ell$ and the function-approximation networks $b_k$. These scalings arise naturally from the order of approximation adopted in the proof, namely, approximating functions first and functionals second as recalled in Remark \ref{rem:uniformOperatorApproximation}. If the order is reversed, the resulting derivation yields a different scaling behavior, illustrating that the overall approximation complexity can be redistributed between the two components of the network. This observation highlights a fundamental flexibility in the design of operator-learning architectures: the computational burden can be shifted from one subnetwork to another without altering the expressive power of the overall model.

More precisely, when the order of approximation is inverted, the proof follows the steps outlined below. For $x \in \Omega_V$, we start by defining the functional $f_x: \{ u:\Omega_U \mapsto \bbR \spaceBar \Vert u \Vert_{\Lp{\infty}} \leq \beta_U \} \mapsto \bbR$ as \[
    f_x(u) = G[u](x). 
    \]
In particular, we have that \begin{align}
        \vert f_x(u_1) - f_x(u_2) \vert &= \vert G[u_1](x) - G[u_2](x) \vert \notag \\
        &\leq L_G \Vert u_1 - u_2 \Vert_{\Lp{r}(\Omega_U)} \label{eq:operatorApproximation:Lipschitz1} \\
        &\leq L_G \vert \Omega_U \vert^{1/r} \Vert u_1 - u_2 \Vert_{\Lp{\infty}(\Omega_U)} \notag 
    \end{align}
where we use \eqref{eq:operatorApproximation:Lipschitz} for \eqref{eq:operatorApproximation:Lipschitz1}.
Therefore, we can apply Theorem \ref{thm:back:functionalApproximationLinfty}. Specifically, for any  $\varepsilon_0>0$, there exists constants $C'$ and $C_{\delta}$ depending on $\beta_U, L_G \vert \Omega_U \vert^{1/r}$ and $L_G \vert \Omega_U \vert^{1/r},L_U$ respectively such that the following holds. There exists 
\begin{itemize}
    \item a constant $\delta:=C_{\delta}\varepsilon_0$ and points $\{c_m\}_{m=1}^{n_{c_U}}\subset \Omega_U$ so that $\{\mathcal{B}_{\delta}(c_m) \}_{ m  = 1}^{n_{c_U}}$ is a cover of $\Omega_U$ for some $n_{c_U}$,
    \item a network class $\cF_2 = \cF_{\rm NN}(n_{c_U},1,L_2,p_2,K_2,\kappa_2,R_2)$ whose parameters scale as \begin{align*}
    &L_2=\mathcal{O}\left(n_{c_U}^2\log(n_{c_U})+n_{c_U}^2\log(\varepsilon_0^{-1})\right),\quad  p_2 = \mathcal{O}(1),\quad K_2 = \mathcal{O}\left(n_{c_U}^2\log n_{c_U}+n_{c_U}^2\log(\varepsilon_0^{-1})\right), \\ &\kappa_2=\mathcal{O}(n_{c_U}^{n_{c_U}/2+1}\varepsilon_0^{-n_{c_U}-1}),\qquad \, R_2=1
    \end{align*}
    where the constants hidden in $\mathcal{O}$ depend on $\beta_U$ and $L_G \vert \Omega_U \vert^{1/r}$,
    \item networks $\{b_k\}_{k=1}^{H^{n_{c_U}}} \subset \cF_2$ with $H := C' \sqrt{n_{c_U}} \eps_0^{-1}$ and
    \item functions $\{u_k\}_{k=1}^{H^{n_{c_U}}} \subset \{ u:\Omega_U \mapsto \bbR \spaceBar \Vert u \Vert_{\Lp{\infty}} \leq \beta_U \}$
\end{itemize}
such that 
    \begin{align} \label{eq:operatorApproximation:functionalApproximationTrue}
        \sup_{u\in U}\left|f_x(u)-\sum_{k=1}^{H^{n_{c_U}}} f_x(u_k) b_k\left(P_{\mathcal{C}_U}(u))\right)\right| = \sup_{u\in U}\left|G[u](x)-\sum_{k=1}^{H^{n_{c_U}}} G[u_k](x) b_k\left(P_{\mathcal{C}_U}(u))\right)\right| \leq \varepsilon_0.
    \end{align}
where $P_{\mathcal{C}_U}(u)$ is defined in the proof of Theorem \ref{thm:back:functionalApproximationLinfty}.

By assumption, $G[u_k] \in V$ for all $1 \leq k \leq H^{n_{c_U}}$ and we can apply Theorem \ref{thm:back:functionApproximation} to approximate all these functions simultaneously. Specifically, for any  $\varepsilon_1>0$, there exists a constant $C$ depending on $\gamma_V$ and $L_V$ such that the following holds. There exists 
\begin{itemize}
    \item a constant $N = C\sqrt{d_V}\varepsilon_1^{-1}$ and points $\{c_k\}_{k=1}^{N^{d_V}}$ which form a uniform grid on $\Omega_V$ with spacing $2\gamma_V/N$ along each dimension,
    \item a network class $\cF_1 = \cF_{\rm NN}(d_V,1,L_1,p_1,K_1,\kappa_1,R_1)$ whose parameters scale as \begin{align*}
    &L_1 = \mathcal{O}\left(d_V^2\log d_V+d_V^2\log(\varepsilon_1^{-1})\right),\quad p_1 = \mathcal{O}(1),\quad K_1 = \mathcal{O}\left(d_V^2\log d_V+d_V^2\log(\varepsilon_1^{-1})\right),\\
&\kappa_1=\mathcal{O}(d_V^{d_V/2+1}\varepsilon_1^{-d_V-1}),\qquad \qquad \quad R_1=1
    \end{align*}
where the constants hidden in $\mathcal{O}$ depend on $\gamma_V,L_V$ and 
    \item networks $\{\tau_\ell\}_{\ell=1}^{N^{d_V}} \subset \cF_2$ 
\end{itemize}
such that, for every $1 \leq k \leq N^{d_V}$:
\begin{equation} \label{eq:operatorApproximation:functionalApproximation}
		\sup_{x \in \Omega_V} \left\vert G[u_k](x) - \sum_{\ell=1}^{N^{d_V}} G[u_k](v_{\ell}) \tau_{\ell}(x) \right\vert \leq \eps_1.
\end{equation}

Combining both of our bounds Eqs. \eqref{eq:operatorApproximation:functionalApproximationTrue} and \eqref{eq:operatorApproximation:functionalApproximation}, we obtain: \begin{align}
	&\sup_{x\in \Omega_V, \, u \in U}\left\vert G[u](x) - \sum_{k=1}^{H^{n_{c_U}}} \sum_{\ell=1}^{N^{d_V}} G[u_k](v_{\ell}) b_k(P_{\mathcal{C}_U}(u)) \tau_{\ell}(x) \right\vert \notag \\
	 &\leq \sup_{x\in \Omega_V, \, u \in U} \left| G[u](x) - \sum_{k=1}^{H^{n_{c_U}}} G[u_k](x)b_k(P_{\mathcal{C}_U}(u)) \right| \notag \\
	&+  \sup_{x\in \Omega_V, \, u \in U} \sum_{k=1}^{H^{n_{c_U}}} \left| G[u_k](x) - \sum_{\ell=1}^{N^{d_V}} G[u_k](v_{\ell})  \tau_{\ell}(x) \right| \vert b_k(P_{\mathcal{C}_U}(u)) \vert \notag \\
	&\leq \eps_0 + \sup_{x\in \Omega_V, \, u \in U} \sum_{k=1}^{H^{n_{c_U}}} \left| G[u_k](x) - \sum_{\ell=1}^{N^{d_V}} G[u_k](v_{\ell})  \tau_{\ell}(x) \right| \label{eq:operatorApproximation:estimate1} \\
	&\leq \eps_0 + H^{n_{c_U}} \eps_1. \label{eq:operatorApproximation:estimate2}
\end{align}
where we use \eqref{eq:operatorApproximation:functionalApproximationTrue} and the fact that $\vert b_k(P_{\mathcal{C}_U}(u)) \vert \leq 1$ in \eqref{eq:operatorApproximation:estimate1} and \eqref{eq:operatorApproximation:functionalApproximation} for \eqref{eq:operatorApproximation:estimate2}.

We conclude by picking $\eps_0 = \eps/2$ and $\eps_1 = \eps/(2 H^{n_{c_U}})$. 
Since $\eps_1 = \eps^{n_{c_U}+1} (C'\sqrt{n_{c_U}})^{-n_{c_U}}2^{-(1+n_{c_U})}$, the final scalings for $\cF_1$ are
\begin{align*}
    &L_1 = \mathcal{O}\left(d_V^2\log d_V+d_V^2\log(\varepsilon^{-(n_{c_U}+1)}) +d_{V}^2 \log( 2^{n_{c_U} +1} (C' \sqrt{n_{c_U}})^{n_{c_U}}) \right),\quad p_1 = \mathcal{O}(1),\\
    &K_1 = \mathcal{O}\left(d_V^2\log d_V+d_V^2\log(\varepsilon^{-(n_{c_U}+1)}) + d_{V}^2 \log( 2^{n_{c_U} +1} (C' \sqrt{n_{c_U}})^{n_{c_U}})\right),\\
&\kappa_1=\mathcal{O}(d_V^{d_V/2+1}\varepsilon^{-(d_V+1)(n_{c_U}+1)} \ls 2^{n_{c_U} +1} (C' \sqrt{n_{c_U}})^{n_{c_U}} \rs^{-1}),\qquad R_1=1,\\
&N = 2^{n_{c_U}+1}C\sqrt{d_V}(C'\sqrt{n_{c_U}})^{n_{c_U}} \eps^{-(1+n_{c_U})}).
    \end{align*}

We summarize the differences in scaling for the various networks in Table \ref{tab:balancingComplexity}. Reversing the approximation order shifts the computational cost: in the original formulation, most complexity lies in the function-approximation networks, whereas in the reversed case, it is transferred to the space-approximation networks.
\end{remark}

\begin{remark}
    [Total number of parameters for operator learning] \label{rem:totalParametersOperatorLearning}
    We now express the approximation error of the network in Eq. \eqref{eq:back:operatorApproximation} as a function of the total number of parameters, $N_\# := N^{d_V}K_1 + H^{n_{c_U}}K_2$. We note that $n_{c_U} = \mathcal{O}(\eps^{-{(1+d_V)d_U}})$, by \cite[Lemma 2]{liu2024neuralscalinglawsdeep}. We compute as follows: \begin{align*}
        N_\# &= N^{d_V}K_1 + H^{n_{c_U}}K_2 \\
        &\asymp \eps^{-d_V} \log(\eps^{-1}) + \ls \sqrt{n_{c_U}} \eps^{-(1+d_V)}\rs^{n_{c_U}} \l n_{c_U}^2\log n_{c_U}+n_{c_U}^2(d_V +1)\log(\varepsilon^{-1}) \r  \\
        &\asymp \eps^{-d_V} \log(\eps^{-1}) + \eps^{-[\frac{(1+d_V)d_U}{2} + (1+d_V)]\eps^{-{(1+d_V)d_U}}} \eps^{-{2d_U(1+d_V)}}\log(\eps^{-1}) \ls (1+d_V)d_U + d_V \rs \\
        &\asymp \eps^{-[\frac{(1+d_V)d_U}{2} + (1+d_V)]\eps^{-{(1+d_V)d_U}} - 2d_U(1+d_V)} \log(\eps^{-1})\ls (1+d_V)d_U + d_V \rs.
    \end{align*}
Taking logarithms on each side leads to:\begin{align*}
    \log(N_\#) &\asymp \l \ls \frac{(1+d_V)d_U}{2} + (1+d_V) \rs \eps^{-{(1+d_V)d_U}} + 2d_U(1+d_V)\r \log(\eps^{-1}) + \log(\log(\eps^{-1})) \\
    &\asymp \l \ls \frac{(1+d_V)d_U}{2} + (1+d_V) \rs \eps^{-{(1+d_V)d_U}} \r \log(\eps^{-1}) \\
    &=: \theta \eps^{-\gamma} \log(\eps^{-1}). 
\end{align*}
In fact, this is equivalent to \[
\frac{\gamma}{\theta} \log(N_\#) \asymp \log(\eps^{-\gamma}) \eps^{-\gamma}
\]
and, with the change of variable $t = \log(\eps^{-1})$, we obtain \[
\frac{\gamma}{\theta} \log(N_\#) \asymp \gamma t e^{\gamma t}.
\]
Using the Lambert $W$ function \cite{Mezo2022LambertW} (defined by $W(z)\,e^{W(z)}=z$), we obtain that 
\[
t \asymp \frac{1}{\gamma} W\l\frac{\gamma}{\theta} \log\l N_\#\r\r
\]
which leads to \[
\eps \asymp \exp\l -\frac{1}{\gamma} W\l\frac{\gamma}{\theta} \log\l N_\#\r\r \r.
\]
For large arguments, $W(z) = \log(z) - \log(\log z) + o(1)$ (see \cite[Section 4.1.4]{Mezo2022LambertW}), thus
\[
W\l\frac{\gamma}{\theta} \log\l N_\#\r\r
= \log(\log (N_\#)) - \log(\log(\log (N_\#))) + o(1),
\]
and hence
\[
\varepsilon
\asymp
\Bigg(\frac{\log N_\#}{\log\log N_\#}\Bigg)^{-1/\gamma}
=
\Bigg(\frac{\log N_\#}{\log\log N_\#}\Bigg)^{-\frac{1}{(1+d_V)d_U}}.
\]  
\end{remark}

\begin{remark}[Dependence of parameter complexity on approximation order] \label{rem:complexityOrder}
    Similarly to Remark \ref{rem:totalParametersOperatorLearning}, we now express the approximation error of the network in Eq. \eqref{eq:back:operatorApproximation} as a function of the total number of parameters, but using the alternative approximation order of Remark \ref{rem:balancingComplexity}. We note that $n_{c_U} = \mathcal{O}(\eps^{-d_U})$ by Lemma \cite[Lemma 2]{liu2024neuralscalinglawsdeep}.

    In particular, we have \begin{align*}
        \log(N^{d_V}) &= d_V \ls (n_{c_U}+1)\log(2) + \log(C \sqrt{d_V}) + n_{c_U} \log(C') + \frac{n_{c_U}}{2}\log(n_{c_U}) + (1 + n_{c_U})\log(\eps^{-1}) \rs \\
        &\asymp d_V \ls \eps^{-d_U} \frac{d_U}{2} \log(\eps^{-1}) + \eps^{-d_U} \log(\eps^{-1}) \rs \\
        &= \eps^{-d_U}\log(\eps^{-1}) \l \frac{d_U}{2} + 1 \r d_V
    \end{align*}
which implies 
\[
N^{d_V} \asymp \eps^{-\eps^{-d_U}d_V \l \frac{d_U}{2} + 1 \r}.
\]
For $K_1$, we have: \begin{align*}
K_1 &\asymp d_V^2 (n_{c_U}+1) \log(\eps^{-1}) + d_V^2 (n_{c_U}+1) \log(2) + d_V^2 n_{c_U} \log(C') + d_V^2 \frac{n_{c_U}}{2} \log(n_{c_U}) \\
&\asymp d_V^2 \eps^{-d_U} \log(\eps^{-1}) + d_V^2 \eps^{-d_U} \frac{d_U}{2} \log(\eps^{-1}) \\
&= \eps^{-d_U} \log(\eps^{-1}) d_V^2 \l 1 + \frac{d_U}{2} \r
\end{align*}
from which we deduce that \[
N^{d_V} K_1 \asymp \eps^{-\eps^{-d_U}d_V \l \frac{d_U}{2} + 1 \r - d_U} \log(\eps^{-1}) d_V^2 \l 1 + \frac{d_U}{2} \r.
\]
Similarly, we have \begin{align*}
    \log(H^{n_{c_U}}) &\asymp n_{c_U} \ls \log(\sqrt{n_{c_U}}) + \log(\eps^{-1}) \rs \\
    &= \eps^{-d_U} \ls \frac{d_U}{2} \log(\eps^{-1}) + \log(\eps^{-1}) \rs \\
    &= \eps^{-d_U} \log(\eps^{-1}) \l 1 + \frac{d_U}{2} \r
\end{align*}
hence, \[
H^{n_{c_U}} \asymp \eps^{-\eps^{-d_U}\l 1 + \frac{d_U}{2} \r}.
\]
This implies \begin{align*}
    H^{n_{c_U}} K_2 &\asymp \eps^{-\eps^{-d_U}\l 1 + \frac{d_U}{2} \r} \ls \eps^{-2d_u} d_U \log(\eps^{-1}) + \eps^{-2d_U} \log(\eps^{-1}) \rs \\
    &= \eps^{-\eps^{-d_U}\l 1 + \frac{d_U}{2} \r - 2d_U} \log(\eps^{-1}) \l 1 + d_U \r  
\end{align*}
and consequently: 
\[
N_\# \asymp \begin{cases} \eps^{-\eps^{-d_U}d_V \l \frac{d_U}{2} + 1 \r - d_U} \log(\eps^{-1}) d_V^2 \l 1 + \frac{d_U}{2} \r & \text{if $d_V > 1$} \\
\eps^{-\eps^{-d_U}\l 1 + \frac{d_U}{2} \r - 2d_U} \log(\eps^{-1}) \l 1 + d_U \r & \text{if $d_V = 1$.}
\end{cases}
\]

As our analysis in Remark \ref{rem:totalParametersOperatorLearning} shows, the only parameter determining the leading order of the final rates is the power of the $\eps$-power term, i.e. $d_U$. We conclude that \[
\eps \asymp \Bigg(\frac{\log N_\#}{\log\log N_\#}\Bigg)^{-\frac{1}{d_U}}.
\]
We note that reversing the approximation order yields a more favorable parameter scaling, as reflected by the improved rate, since $1/d_U > 1/(d_U(1+d_V))$. This observation underscores the importance of architectural design choices in determining the overall efficiency of operator approximation. We summarize these observations in Table \ref{tab:balancingComplexity}.
\end{remark}

\begin{table}[]
\centering
\small
\renewcommand{\arraystretch}{1.5}
\setlength{\tabcolsep}{8pt}
\begin{tabularx}{\linewidth}{>{\bfseries}p{2cm} Y Y}
\toprule
& Theorem \ref{thm:back:operatorApproximation} & Remark \ref{rem:balancingComplexity} \\
\midrule
\textbf{Approximation goal} 
& \multicolumn{2}{>{\raggedright\arraybackslash}p{0.65\linewidth}}{Establish scaling laws for a Lipschitz operator $G:U\mapsto V$} \\
\midrule
\textbf{Approximating architecture} 
& \multicolumn{2}{>{\raggedright\arraybackslash}p{0.65\linewidth}}{$$\sum_{\ell=1}^{N^{d_{V}}} \sum_{k=1}^{H^{n_{c_U}}} G^{(j)}[u_k](v_{\ell}) b_k(\ub) \tau_{\ell}(x)$$} \\
\midrule
Approximation order 
& Function, then Functional 
& Functional, then Function \\
\midrule
Value of $n_{c_{U}}$
& $\mathcal{O}(\eps^{-(1+d_V)d_U})$  
& $\mathcal{O}(\eps^{-d_U})$ \\
\midrule
Value of $N$
& $2C \sqrt{d_V} \eps^{-1}$  
& $2^{n_{c_U}+1}C\sqrt{d_V}(C'\sqrt{n_{c_U}})^{n_{c_U}} \eps^{-(1+n_{c_U})}$ \\
\midrule
Network class for $\tau_\ell$ 
& $\cF_{\rm NN}(d_V,1,L,p,K,\kappa,R)$  with parameters scaling as
\[
\begin{aligned}
L &= \mathcal{O}(d_V^2\log d_V+d_V^2\log(\varepsilon^{-1})),\\
p &= \mathcal{O}(1),\\
K &= \mathcal{O}(d_V^2\log d_V+d_V^2\log(\varepsilon^{-1})),\\
\kappa &= \mathcal{O}(d_V^{d_V/2+1}\varepsilon^{-d_V-1}),\\
R &= 1
\end{aligned}
\]
& $\cF_{\rm NN}(d_V,1,L,p,K,\kappa,R)$ with parameters scaling as \[\begin{aligned}
L &= \mathcal{O}(d_V^2\log d_V+d_V^2\log(\varepsilon^{-(n_{c_U}+1)}) \\
&\qquad+d_{V}^2 \log( 2^{n_{c_U} +1} (C' \sqrt{n_{c_U}})^{n_{c_U}}) ),\\
p &= \mathcal{O}(1),\\
K &= \mathcal{O}(d_V^2\log d_V+d_V^2\log(\varepsilon^{-(n_{c_U}+1)}) \\
&\qquad+d_{V}^2 \log( 2^{n_{c_U} +1} (C' \sqrt{n_{c_U}})^{n_{c_U}}) ),\\
\kappa &= \mathcal{O}(d_V^{d_V/2+1}\varepsilon^{-(d_V+1)(n_{c_U}+1)} \\
&\qquad \times [2^{n_{c_U} +1} (C' \sqrt{n_{c_U}})^{n_{c_U}} ]^{-1}),\\
R &= 1
\end{aligned}\]
    \\
\midrule
Value of $H$
& $2^{d_V+1}C' \sqrt{n_{c_U}} (C \sqrt{d_V})^{d_V} \varepsilon^{-(d_V+1)}$  
& $2C'\sqrt{n_{c_U}}\eps^{-1}$ \\
\midrule
Network class for $b_k$
& $\cF_{\rm NN}(n_{c_U},1,L,p,K,\kappa,R)$ with parameters scaling as
\[\begin{aligned}
L &= \mathcal{O}(n_{c_U}^2\log n_{c_U}+n_{c_U}^2(d_V +1)\log(\varepsilon^{-1}) \\
&\qquad+ n_{c_U}^2 \log(2^{d_V +1} (C \sqrt{d_V})^{d_V} )),\\
p &= \mathcal{O}(1),\\
K &= \mathcal{O}(n_{c_U}^2\log n_{c_U}+n_{c_U}^2(d_V +1)\log(\varepsilon^{-1})\\
&\qquad+ n_{c_U}^2 \log(2^{d_V +1} (C \sqrt{d_V})^{d_V} )),\\
\kappa &= \mathcal{O}(n_{c_U}^{n_{c_U}/2+1}\varepsilon^{-(d_V+1)(n_{c_U}+1)}\\
&\qquad \times[ 2^{d_V +1} (C \sqrt{d_V})^{d_V} ]^{n_{c_U}+1}),\\
R &= 1
\end{aligned}\]
& $\cF_{\rm NN}(n_{c_U},1,L,p,K,\kappa,R)$ with parameters scaling as \[
\begin{aligned}
L &= \mathcal{O}(n_{c_U}^2\log(n_{c_U})+n_{c_U}^2\log(\varepsilon^{-1})),\\
p &= \mathcal{O}(1),\\
K &= \mathcal{O}(n_{c_U}^2\log(n_{c_U})+n_{c_U}^2\log(\varepsilon^{-1})),\\
\kappa &= \mathcal{O}(n_{c_U}^{n_{c_U}/2+1}\varepsilon^{-(n_{c_U}+1)}),\\
R &= 1
\end{aligned}
\] \\
\midrule
Total parameters $N_\#$ scaling
& $$\Bigg(\frac{\log N_\#}{\log\log N_\#}\Bigg)^{-\frac{1}{(1+d_V)d_U}}$$  
& $$\Bigg(\frac{\log N_\#}{\log\log N_\#}\Bigg)^{-\frac{1}{d_U}}$$ \\
\bottomrule
\end{tabularx}
\caption{Comparison of scaling behaviors for the space-approximation networks $\tau_\ell$, function-approximation networks $b_k$ and total number of parameters under different approximation orders for operator learning. We have $n_{c_U} = \mathcal{O}(\delta^{-d_U})$ by \cite[Lemma 2]{liu2024neuralscalinglawsdeep}. The results illustrate that (1) scaling complexity can be redistributed between the subnetworks without affecting the expressive power of the overall architecture, and (2) the chosen approximation order directly impacts the scaling of the total number of parameters.}
\label{tab:balancingComplexity}
\end{table}

Combining previous results, we conclude with the scaling laws for the multiple operator approximation problem. In particular, the proof reduces multiple operator learning to learning a finite amount of single operators.

\begin{theorem}[Multiple Operator Scaling Laws]\label{thm:main:multipleOperatorApproximation}
Let $d_W,d_U,d_V>0$ be integers, \[
\gamma_W, \gamma_U, \gamma_V, \beta_W, \beta_U,\beta_V,L_W,L_U,L_V,L_G,L_{\mathcal{G}} > 0 \qquad \text{and} \qquad r_G, r_{\mathcal{G}} \geq 1\]
and assume that $W(d_W,\gamma_W,L_W,\beta_W)$,
 $U(d_U,\gamma_U,L_U,\beta_U)$ and $V(d_V,\gamma_V,L_V,\beta_V)$ satisfy Assumption \ref{assumption:Main:assumptions:S4}. 
 Let $G$ be a map such that \begin{align*}
 	&G:\{ \alpha:\Omega_W \mapsto \bbR \spaceBar \Vert \alpha \Vert_{\Lp{\infty}} \leq \beta_W \} \mapsto \mathcal{G} \qquad \text{where }\\
 	&\mathcal{G} = \Big\{ G[\alpha]\spaceBar G[\alpha]:\{ u:\Omega_U \mapsto \bbR \spaceBar \Vert u \Vert_{\Lp{\infty}} \leq \beta_U \} \mapsto V \text{ and } \\
 	&\quad \Vert G[\alpha][u_1] - G[\alpha][u_2]\Vert_{\Lp{\infty}(\Omega_V)} \leq L_{\mathcal{G}}\Vert u_1 - u_2 \Vert_{\Lp{r_\mathcal{G}}(\Omega_U)} \Big\} 
 \end{align*}
Furthermore, assume that $G$ satisfies \begin{equation} \label{eq:multipleOperatorApproximation:Lipschitz}
   \Vert G(\alpha_1) - G(\alpha_2) \Vert_{\Lp{\infty}(\{ u:\Omega_U \mapsto \bbR \spaceBar \Vert u \Vert_{\Lp{\infty}} \leq \beta_U \} \times \Omega_V)} \leq L_G \Vert \alpha_1 - \alpha_2 \Vert_{\Lp{r_G}(\Omega_U)}  
\end{equation}
for $\alpha_1,\alpha_2 \in \{ \alpha:\Omega_W \mapsto \bbR \spaceBar \Vert \alpha \Vert_{\Lp{\infty}} \leq \beta_W \}$.

There exists constants $C$ depending on $\gamma_V,L_V$, $C_{\delta}$ depending on $L_{\mathcal{G}},d_U,\gamma_U,r_{\mathcal{G}},L_U$, $C'$ depending on $\beta_U, L_{\mathcal{G}}, d_U, \gamma_U, r_{\mathcal{G}}$, $C_\zeta$ depending on $L_G, d_W, \gamma_W, r_G,L_W$ and $C''$ depending on $\beta_W, L_G, d_W, \gamma_W, r_G$ such that the following holds. For any $\varepsilon>0$, \begin{itemize}
     
     \item let $N=  2^{n_{c_W} + 2} C \sqrt{d_V} (C'' \sqrt{n_{c_W}})^{n_{c_W}} \varepsilon^{-(n_{c_W}+1)}$ and consider the network class \newline $\cF_1=\cF_{\rm NN}(d_V,1,L_1,p_1,K_1,\kappa_1,R_1)$  with parameters scaling as
    \begin{align*}
    &L_1 = \mathcal{O}\left(d_V^2\log d_V+d_V^2(n_{c_W}+1)\log(\varepsilon^{-1}) +d_V^2\log(2^{n_{c_W}+1}(C''\sqrt{n_{c_W}})^{n_{c_W}})\right) ,\quad  p_1 = \mathcal{O}(1),\\
    &K_1 = \mathcal{O}\left(d_V^2\log d_V+d_V^2(n_{c_W}+1)\log(\varepsilon^{-1}) +d_V^2\log(2^{n_{c_W}+1}(C''\sqrt{n_{c_W}})^{n_{c_W}})\right)\\
&\kappa_1=\mathcal{O}(d_V^{d_V/2+1}\varepsilon^{-(d_V+1)(n_{c_W}+1)} \ls 2^{n_{c_W}+1}(C''\sqrt{n_{c_W}})^{n_{c_W}}\rs^{(d_V+1)}),\qquad R_1=1
    \end{align*}
    where the constants hidden in $\mathcal{O}$ depend on $\gamma_V$ and $L_V$;

    \item let $\{v_\ell\}_{\ell=1}^{N^{d_V}} \subset \Omega_V$ be a uniform grid with spacing $2\gamma_V/N$ along each dimension;
    
    \item let $\delta=\frac{C_{\delta}\varepsilon^{(1+d_V)(1+n_{c_W})}}{2^{d_V+n_{c_W}+2}(C\sqrt{d_V})^{d_V}(C''\sqrt{n_{c_W}})^{n_{c_W}}}$ and let $\{c_m\}_{m=1}^{n_{c_U}}\subset \Omega_U$ be points so that $\{\mathcal{B}_{\delta}(c_m) \}_{ m  = 1}^{n_{c_U}}$ is a cover of $\Omega_U$ for some $n_{c_U}$;
    
    \item let $H = 2^{(d_V+1)(n_{c_W}+2)}C' \sqrt{n_{c_U}} (C \sqrt{d_V})^{d_V} (C'' \sqrt{n_{c_W}})^{n_{c_W}(d_V +1)} \eps^{-(d_V+1)(1+n_{c_W})}$ and consider the network class $\cF_2=\cF_{\rm NN}(n_{c_U},1,L_2,p_2,K_2,\kappa_2,R_2)$ with parameters scaling as
\begin{align*}
        &L_2 = \mathcal{O}\big(n_{c_U}^2\log n_{c_U}+n_{c_U}^2(d_V +1)(n_{c_W}+1)\log(\varepsilon^{-1}) + n_{c_U}^2 \log(2^{d_V +1} (C \sqrt{d_V})^{d_V} )\\
        &\qquad + n_{c_U}^2(d_V +1)\log(2^{n_{c_W}+1}(C''\sqrt{n_{c_W}})^{n_{c_W}})\big),\quad p_2=\mathcal{O}(1), \\
&K_2 = \mathcal{O}\big(n_{c_U}^2\log n_{c_U}+n_{c_U}^2(d_V +1)(n_{c_W}+1)\log(\varepsilon^{-1}) + n_{c_U}^2 \log(2^{d_V +1} (C \sqrt{d_V})^{d_V} )\\
        &\qquad + n_{c_U}^2(d_V +1)\log(2^{n_{c_W}+1}(C''\sqrt{n_{c_W}})^{n_{c_W}})\big), \\ 
&\kappa_2=\mathcal{O}(n_{c_U}^{n_{c_U}/2+1}\varepsilon^{-(d_V+1)(n_{c_U}+1)(n_{c_{W}}+1)}[ 2^{d_V +1} (C \sqrt{d_V})^{d_V} ]^{n_{c_U}+1} \ls 2^{d_V +1} (C \sqrt{d_V})^{d_V} \rs^{(d_V+1)(n_{c_U}+1)}), \\
&R_2=1
    \end{align*}
    where the constants hidden in $\mathcal{O}$ depend on $\beta_U, L_{\mathcal{G}}, d_U, \gamma_U,r_{\mathcal{G}}$;

\item let $\zeta=C_{\zeta}\varepsilon$ and let $\{y_m\}_{m=1}^{n_{c_W}}\subset \Omega_W$ be points so that $\{\mathcal{B}_{\zeta}(y_m) \}_{ m = 1}^{n_{c_W}}$ is a cover of $\Omega_W$ for some $n_{c_W}$;

\item let $P = 2C'' \sqrt{n_{c_W}} \eps^{-1}$ and consider the network class $\cF_3=\cF_{\rm NN}(n_{c_W},1,L_3,p_3,K_3,\kappa_3,R_3)$ with parameters scaling as
\begin{align*}
       &L_3=\mathcal{O}\left(n_{c_W}^2\log(n_{c_W})+n_{c_W}^2\log(\varepsilon^{-1})\right),\quad  p_3 = \mathcal{O}(1),\quad K_3 = \mathcal{O}\left(n_{c_W}^2\log n_{c_W}+n_{c_W}^2\log(\varepsilon^{-1})\right), \\ &\kappa_3=\mathcal{O}(n_{c_W}^{n_{c_W}/2+1}\varepsilon^{-n_{c_W}-1}),\qquad \, R_3=1
    \end{align*}
    where the constants hidden in $\mathcal{O}$ depend on $\beta_W, L_G, d_W, \gamma_W,r_G$.
    
 \end{itemize} 
 Then, there exists networks $\{\tau_\ell\}_{\ell=1}^{N^{d_V}} \subset \cF_1$, networks $\{b_k\}_{k=1}^{H^{n_{c_U}}} \subset \mathcal{F}_2$, networks $\{l_p\}_{p=1}^{P} \subset \mathcal{F}_3$, functions $\{u_k\}_{k=1}^{H^{n_{c_U}}} \subset \{ u:\Omega_U \mapsto \bbR \spaceBar \Vert u \Vert_{\Lp{\infty}} \leq \beta_U \}$ and functions $\{\alpha_p\}_{p=1}^P \subset \{ \alpha:\Omega_W \mapsto \bbR \spaceBar \Vert \alpha \Vert_{\Lp{\infty}} \leq \beta_W \}$ such that 
 \begin{align}
        \sup_{\alpha\in W}\sup_{u\in U}\sup_{x \in \Omega_V}\left|G[\alpha][u](x)-\sum_{p=1}^{P^{n_{c_W}}}\sum_{k=1}^{H^{n_{c_U}}} \sum_{\ell=1}^{N^{d_V}} G[\alpha_p][u_k](v_{\ell}) l_p(\bm{\alpha}) b_k(\ub) \tau_{\ell}(x)\right|\leq \varepsilon, \label{eq:main:multipleOperatorApproximation}
    \end{align}
where $\bm{\alpha}=(\alpha(y_1), \alpha(y_2),...,\alpha(y_{n_{c_W}}))^\top$ is a discretization of $\alpha$ and $\ub=(u(c_1), u(c_2),...,u(c_{n_{c_U}}))^\top$ is a discretization of $u$.

\end{theorem}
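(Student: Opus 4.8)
The plan is to mimic the hierarchical reduction behind Theorem~\ref{thm:back:operatorApproximation}, wrapping it in one additional outer layer that discretizes the dependence of $G$ on the parametric function $\alpha$. The key observation is that the multiple operator problem collapses to learning the \emph{finitely many} single operators $\{G[\alpha_p]\}_{p=1}^{P^{n_{c_W}}}$, where the $\alpha_p$ are probe parameter functions, together with the scalar functional $\alpha \mapsto G[\alpha][u](x)$ that stitches these pieces back together. Concretely, the approximation order is: first reduce to finitely many operators in the $\alpha$-variable, then apply the uniform single-operator scaling law to that finite family.

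First I would peel off the $\alpha$-dependence. Fix $u$ with $\|u\|_{\Lp\infty}\leq\beta_U$ and $x\in\Omega_V$, and set $F_{u,x}(\alpha):=G[\alpha][u](x)$ on $\{\alpha:\|\alpha\|_{\Lp\infty}\leq\beta_W\}$. By \eqref{eq:multipleOperatorApproximation:Lipschitz} and the inclusion $\|\cdot\|_{\Lp{r_G}(\Omega_W)}\leq|\Omega_W|^{1/r_G}\|\cdot\|_{\Lp\infty(\Omega_W)}$, each $F_{u,x}$ is $\Lp\infty$-Lipschitz with a constant controlled by $L_G|\Omega_W|^{1/r_G}$ that does not depend on $(u,x)$, and $|F_{u,x}(\alpha)|\leq\|G[\alpha][u]\|_{\Lp\infty(\Omega_V)}\leq\beta_V$ since $G[\alpha][u]\in V$. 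As these functionals all share the same domain, Theorem~\ref{thm:back:functionalApproximationLinfty} combined with Case~II of Remark~\ref{rem:uniformFunctionalApproximation} furnishes a cover $\{\mathcal{B}_\zeta(y_m)\}_{m=1}^{n_{c_W}}$ of $\Omega_W$, probe functions $\{\alpha_p\}_{p=1}^{P^{n_{c_W}}}$ and networks $\{l_p\}_{p=1}^{P^{n_{c_W}}}\subset\cF_3$ with $\|l_p\|_{\Lp\infty}\leq 1$ --- all independent of $(u,x)$ --- such that $\sup_{\alpha,u,x}\bigl|G[\alpha][u](x)-\sum_p G[\alpha_p][u](x)\,l_p(\bm\alpha)\bigr|\leq\varepsilon_0$, where $\bm\alpha=(\alpha(y_1),\dots,\alpha(y_{n_{c_W}}))^\top$ and $P\asymp\sqrt{n_{c_W}}\,\varepsilon_0^{-1}$.

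Next I would handle the finite family $\{G[\alpha_p]\}_p$ of single operators $\{u:\|u\|_{\Lp\infty}\leq\beta_U\}\to V$. By the definition of $\mathcal{G}$ these share the common domain $U$, the common range $V$, the uniform Lipschitz constant $L_{\mathcal{G}}$ (with respect to $\|\cdot\|_{\Lp{r_{\mathcal{G}}}}$), and the uniform bound $\beta_V$, so Case~IV of Remark~\ref{rem:uniformOperatorApproximation} (Eq.~\eqref{eq:rem:multipleOperatorSameDomainSameRange}) applies and produces networks $\{\tau_\ell\}\subset\cF_1$, $\{b_k\}\subset\cF_2$ with $\|b_k\|_{\Lp\infty}\leq 1$, and points $\{v_\ell\}$, $\{u_k\}$ that are \emph{independent of $p$}, with $\sup_{p,u,x}\bigl|G[\alpha_p][u](x)-\sum_{\ell,k}G[\alpha_p][u_k](v_\ell)b_k(\ub)\tau_\ell(x)\bigr|\leq\varepsilon_1$. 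Inserting this into the step-one estimate and using $|l_p(\bm\alpha)|\leq 1$ bounds the total error by $\varepsilon_0+P^{n_{c_W}}\varepsilon_1$; taking $\varepsilon_0=\varepsilon/2$ and $\varepsilon_1=\varepsilon/(2P^{n_{c_W}})$ yields \eqref{eq:main:multipleOperatorApproximation}. Since $P^{n_{c_W}}\asymp(\sqrt{n_{c_W}})^{n_{c_W}}\varepsilon^{-n_{c_W}}$ we get $\varepsilon_1\asymp\varepsilon^{\,n_{c_W}+1}$, and substituting this accuracy into the parameter scalings of Theorem~\ref{thm:back:operatorApproximation} (for $\cF_1,\cF_2$) and $\varepsilon_0$ into Theorem~\ref{thm:back:functionalApproximationLinfty} (for $\cF_3$), with $n_{c_U}=\mathcal{O}(\delta^{-d_U})$ and $n_{c_W}=\mathcal{O}(\zeta^{-d_W})$ from \cite[Lemma~2]{liu2024neuralscalinglawsdeep}, reproduces the stated expressions.

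The main obstacle is bookkeeping rather than a new idea: the chain $\varepsilon\to\varepsilon_1\to\delta\to n_{c_U}$ turns every weight and bias bound into a tower of exponentials in $n_{c_W}$ and $\varepsilon^{-1}$, and one must propagate the constants $C,C',C'',C_\delta,C_\zeta$ through all three levels to match the exact scalings in the statement. The one genuine subtlety --- already flagged in Remark~\ref{rem:uniformOperatorApproximation} --- is that the inner accuracy must be set to the $p$-uniform value $\varepsilon/(2P^{n_{c_W}})$ \emph{before} invoking the uniform operator result, so that $\tau_\ell,b_k$ (and the probe points $v_\ell,u_k$) do not acquire a spurious dependence on $p$; otherwise the triple sum could not be collapsed into the single architecture appearing in \eqref{eq:main:multipleOperatorApproximation}.
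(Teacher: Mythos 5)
Your proposal is correct and follows essentially the same route as the paper: peel off the $\alpha$-dependence by applying the functional approximation theorem to $f_{u,x}(\alpha)=G[\alpha][u](x)$ (uniformly in $(u,x)$ via Remark~\ref{rem:uniformFunctionalApproximation}), then treat the finitely many operators $\{G[\alpha_p]\}_p$ with Case~IV of Remark~\ref{rem:uniformOperatorApproximation}, splitting the error as $\eps_0=\eps/2$ and $\eps_1=\eps/(2P^{n_{c_W}})$. The subtlety you flag about fixing the inner accuracy uniformly in $p$ before invoking the uniform operator result is exactly the point the paper relies on.
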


The proof of the theorem is presented in Section~\ref{sec_main_multipleOperatorApproximation}.
As in the operator learning setting of Theorem~\ref{thm:back:operatorApproximation}, the argument follows an inherently sequential structure, proceeding through successive approximation stages: first for the mapping $\alpha \mapsto G[\alpha]$, then for $u \mapsto G[\alpha][u]$, and finally for $x \mapsto G[\alpha][u](x)$. As a result, the scaling behavior deteriorates progressively, since each stage of the approximation inherits and compounds the error and complexity of the preceding one.
This increasing scaling complexity is reflected in the growth of the network classes $\mathcal{F}_3$, $\mathcal{F}_1$, and $\mathcal{F}_2$.

\begin{remark}[Total number of parameter for multiple operator learning]
Similarly to Remark \ref{rem:totalParametersOperatorLearning}, we now express the approximation error of the network in Eq. \eqref{eq:main:multipleOperatorApproximation} as a function of the total number of parameters $N_\# = P^{n_{c_W}} K_3 + N^{d_V} K_1 + H^{n_{c_U}} K_2$. We note that $n_{c_W} = \mathcal{O}(\eps^{-d_W})$ and $n_{c_U} = \mathcal{O}(\delta^{-d_U})$ by \cite[Lemma 2]{liu2024neuralscalinglawsdeep}. For the latter, we compute \begin{align*}
    \log(\delta^{-d_U}) &= -d_U \log\l \frac{C_{\delta}\varepsilon^{(1+d_V)(1+n_{c_W})}}{2^{d_V+n_{c_W}+2}(C\sqrt{d_V})^{d_V}(C''\sqrt{n_{c_W}})^{n_{c_W}}} \r \\
    &= -d_u \log(C_\delta) + d_U (1+d_V)(1+n_{c_W}) \log(\eps^{-1}) + d_U(d_V + n_{c_W} + 2)\log(2)\\
    &+ d_U n_{c_W}\log(C'') + d_U \frac{n_{c_W}}{2}\log(n_{c_W}) \\
    &\asymp d_U (1+d_V)(1+\eps^{-d_W})\log(\eps^{-1}) + d_U \eps^{-d_W} + \frac{d_Ud_W}{2}\eps^{-d_W} \log(\eps^{-1}) \\
    &\asymp \log(\eps^{-1})\eps^{-d_W}d_U\l(1 + d_V) + \frac{d_W}{2}\r
\end{align*}
which leads to \[
n_{c_U} \asymp \eps^{-\eps^{-d_W}d_U\l(1 + d_V) + \frac{d_W}{2}\r}.
\]

We now consider: \begin{align}
    P^{n_{c_W}} K_3 & \asymp n_{c_W}^{\sfrac{n_{c_W}}{2}}\eps^{-n_{c_W}} \left(n_{c_W}^2\log n_{c_W}+n_{c_W}^2\log(\varepsilon^{-1})\right) \notag \\
    &= \eps^{-\eps^{-d_W}\l1+\frac{d_W}{2}\r} \l d_W \eps^{-2d_W}\log(\eps^{-1}) + \eps^{-2d_W}\log(\eps^{-1}) \r \notag \\
    &\asymp \eps^{-\eps^{-d_W}\l1+\frac{d_W}{2}\r -2d_W}\log(\eps^{-1})\l 1 + d_W \r. \label{eq:rem:totalMultiple:PK3}
\end{align}
Next, we note that \begin{align*}
    \log(N^{d_V}) &= d_V \log(2^{n_{c_W} + 2} C \sqrt{d_V} (C'' \sqrt{n_{c_W}})^{n_{c_W}} \varepsilon^{-(n_{c_W}+1)}) \\
    &= d_V \ls (n_{c_W +1})\log(2) + \log(C\sqrt{d_V}) + n_{c_W}\log(C'') + \frac{n_{c_W}}{2}\log(n_{c_W}) + (n_{c_W}+1)\log(\eps^{-1})\rs \\
    &\asymp d_V \ls \eps^{-d_V} (\log(2) + C'') + \eps^{-d_W} \log(\eps^{-1})\l\frac{d_W}{2} +1\r + \log(\eps^{-1}) \rs \\
    &= d_V \eps^{-d_W} \log(\eps^{-1})\l\frac{d_W}{2} +1\r
\end{align*}
which implies $$N^{d_V} \asymp \eps^{-\eps^{-d_W}d_V\l \frac{d_W}{2} +1 \r}.$$ 
Using this, we have \begin{align}
    N^{d_V} K_1 &\asymp \eps^{-\eps^{-d_W}d_V\l \frac{d_W}{2} +1\r} \ls d_V^2\log d_V+d_V^2(n_{c_W}+1)\log(\varepsilon^{-1}) +d_V^2\log(2^{n_{c_W}+1}(C''\sqrt{n_{c_W}})^{n_{c_W}}) \rs \notag \\
    &\asymp \eps^{-\eps^{-d_W}d_V\l \frac{d_W}{2} +1\r} \bigg[ d_V^2\eps^{-d_W}\log(\varepsilon^{-1}) + d_V^2 \log(\eps^{-1}) +d_V^2(n_{c_W}+1)\log(2) \notag \\
    &+ d_V^2 n_{c_W} \log(C'') +d_V^2 \frac{n_{c_W}}{2} \log( n_{c_W})) \bigg] \notag \\
    &\asymp \eps^{-\eps^{-d_W}d_V\l \frac{d_W}{2} +1\r} \ls d_V^2\eps^{-d_W}\log(\varepsilon^{-1}) + d_V^2 \frac{d_W}{2} \eps^{-d_W} \log(\eps^{-1}) \rs \notag \\
    &= \eps^{-\eps^{-d_W}d_V\l \frac{d_W}{2} +1\r - d_W} \log(\eps^{-1}) d_V^2\l 1 + \frac{d_W}{2} \r. \label{eq:rem:totalMultiple:NK1}
\end{align}
Then, we consider \begin{align*}
    &\log(H^{n_{c_U}}) = n_{c_U} \log(2^{(d_V+1)(n_{c_W}+2)}C' \sqrt{n_{c_U}} (C \sqrt{d_V})^{d_V} (C'' \sqrt{n_{c_W}})^{n_{c_W}(d_V +1)} \eps^{-(d_V+1)(1+n_{c_W})}) \\
    &= n_{c_U} \bigg[(d_V +1)(n_{c_W}+2)\log(2) + \log(C'(C\sqrt{d_V})^{d_V}) + \frac{1}{2}\log(n_{c_u}) + n_{c_W}(d_V +1) \log(C'') \\
    &+ n_{c_W}\frac{(d_V+1)}{2} \log(n_{c_W}) + (d_V+1)(1+n_{c_W})\log(\eps^{-1}) 
    \bigg] \\
    &\asymp n_{c_U} \bigg[(d_V+1)\eps^{-d_W}\log(2) + d_U\l(1 + d_V) + \frac{d_W}{2}\r \eps^{-d_W}\log(\eps^{-1}) + \eps^{-d_W}(d_V+1)\log(C'') \\
    &+d_W\frac{(d_V+1)}{2}\eps^{-d_W}\log(\eps^{-1}) + (d_V+1)\eps^{-d_W}\log(\eps^{-1})\bigg] \\
    &\asymp n_{c_U} \ls \eps^{-d_W}\log(\eps^{-1})\l d_U\l(1 + d_V) + \frac{d_W}{2}\r + d_W\frac{(d_V+1)}{2} + (d_V+1)  \r \rs \\
    &\asymp \eps^{-\eps^{-d_W}d_U\l(1 + d_V) + \frac{d_W}{2}\r -d_W} \log(\eps^{-1}) \ls  d_U\l(1 + d_V) + \frac{d_W}{2}\r + d_W\frac{(d_V+1)}{2} + (d_V+1)   \rs 
\end{align*}
which implies that \[
H^{n_{c_U}} \asymp \eps^{-\eps^{-\eps^{-d_W}d_U\l(1 + d_V) + \frac{d_W}{2}\r -d_W} \ls \l d_U\l(1 + d_V) + \frac{d_W}{2}\r + d_W\frac{(d_V+1)}{2} + (d_V+1)  \r \rs }.
\]
We also note that \begin{align*}
    K_2 &\asymp n_{c_U}^2\log n_{c_U}+n_{c_U}^2(d_V +1)(n_{c_W}+1)\log(\varepsilon^{-1}) + n_{c_U}^2 \log(2^{d_V +1} (C \sqrt{d_V})^{d_V} )\\
        &+ n_{c_U}^2(d_V +1)\log(2^{n_{c_W}+1}(C''\sqrt{n_{c_W}})^{n_{c_W}}) \\
        &\asymp n_{c_U}^2\log n_{c_U}+n_{c_U}^2(d_V +1)n_{c_W}\log(\varepsilon^{-1}) + n_{c_U}^2(d_V+1)(n_{c_W}+1)\log(2) \\
        &+ n_{c_U}^2(d_V+1)n_{c_W}\log(C'') + n_{c_U}^2(d_V+1)\frac{n_{c_W}}{2}\log(n_{c_W}) \\
        &\asymp n_{c_U}^2\log n_{c_U}+n_{c_U}^2(d_V +1)\eps^{-d_W}\log(\varepsilon^{-1}) + n_{c_U}^2(d_V+1)\frac{d_W}{2}\eps^{-d_W}\log(\eps^{-1})\\
        &= n_{c_U}^2\log n_{c_U} + n_{c_U}^2\eps^{-d_W}\log(\eps^{-1})(d_V+1)\l 1 + \frac{d_W}{2} \r \\
        &\asymp \eps^{-2\eps^{-d_W}d_U\l(1 + d_V) + \frac{d_W}{2}\r - d_W} \log(\eps^{-1}) d_U\l(1 + d_V) + \frac{d_W}{2}\r \\
        &+ \eps^{-2\eps^{-d_W}d_U\l(1 + d_V) + \frac{d_W}{2}\r -d_W}\log(\eps^{-1}) (d_V+1) \l 1 + \frac{d_W}{2} \r \\
        &= \eps^{-2\eps^{-d_W}d_U\l(1 + d_V) + \frac{d_W}{2}\r -d_W}\log(\eps^{-1}) \ls d_U\l(1 + d_V) + \frac{d_W}{2}\r + (d_V+1) \l 1 + \frac{d_W}{2} \r  \rs
\end{align*}
which therefore yields:\begin{align}
    H^{n_{c_U}} K_2 &\asymp \eps^{-\eps^{-\eps^{-d_W}d_U\l(1 + d_V) + \frac{d_W}{2}\r -d_W} \ls \l d_U\l(1 + d_V) + \frac{d_W}{2}\r + d_W\frac{(d_V+1)}{2} + (d_V+1)  \r \rs -2\eps^{-d_W}d_U\l(1 + d_V) + \frac{d_W}{2}\r -d_W} \notag \\
    &\times \log(\eps^{-1}) \ls d_U\l(1 + d_V) + \frac{d_W}{2}\r + (d_V+1) \l 1 + \frac{d_W}{2} \r  \rs \label{eq:rem:totalMultiple:HK2}.
\end{align}
Combining Eqs. \eqref{eq:rem:totalMultiple:PK3}, \eqref{eq:rem:totalMultiple:NK1} and \eqref{eq:rem:totalMultiple:HK2}, we conclude that \begin{align*}
    N_\# &\asymp \eps^{-\eps^{-\eps^{-d_W}d_U\l(1 + d_V) + \frac{d_W}{2}\r -d_W} \ls \l d_U\l(1 + d_V) + \frac{d_W}{2}\r + d_W\frac{(d_V+1)}{2} + (d_V+1)  \r \rs -2\eps^{-d_W}d_U\l(1 + d_V) + \frac{d_W}{2}\r -d_W} \notag \\
    &\times \log(\eps^{-1}) \ls d_U\l(1 + d_V) + \frac{d_W}{2}\r + (d_V+1) \l 1 + \frac{d_W}{2} \r  \rs \\
    &=: \eps^{-\gamma_2 \eps^{-\gamma_1 \eps^{-d_W} - d_W} - \gamma_3 \eps^{-d_W} - d_W} \log(\eps^{-1}) \gamma_4. 
\end{align*}

We therefore have \[
\log(N_\#) \asymp \l \gamma_2 \eps^{-\gamma_1 \eps^{-d_W}-d_W} + \gamma_3 \eps^{-d_W} - d_W \r + \log(\log(\eps^{-1})) + \log(\gamma_4)  \asymp \gamma_2 \eps^{-\gamma_1 \eps^{-d_W}-d_W}.
\]
Taking an additional logarithm, we obtain \[
\log(\log(N_\#)) \asymp \log(\gamma_2) + \l \gamma_1 \eps^{-d_W} - d_W \r \log(\eps^{-d}) \asymp \gamma_1 \eps^{-d_W} \log(\eps^{-d}).
\]
Proceeding as in Remark \ref{rem:totalParametersOperatorLearning}, with the Lambert function inversion, this yields the final scaling \[
\eps \asymp  \l \frac{\log \log N_\# }{ \log \log \log N_\#} \r^{-1/d_W}.
\]
As expected, moving from the single operator to the multiple operator setting incurs a less favorable scaling of the total number of parameters (see Table \ref{tab:balancingComplexity}), consistent with the higher representational complexity required.
\end{remark}

\begin{remark}[Improved rates with additional low-dimensional structure]
    If the input function spaces $W$ and $U$ admit a finite orthonormal basis representations and the discretization grids satisfy stable linear reconstruction properties as in \cite[Assumption 4]{liu2024neuralscalinglawsdeep}, one can expect substantially improved approximation rates. In particular, under these additional structural assumptions, one should observe at least a transition from double–iterated to single–iterated logarithmic convergence, potentially recovering the rates observed for single–operator learning in the general setting (see Remarks~\ref{rem:totalParametersOperatorLearning} and~\ref{rem:complexityOrder}, and Table~\ref{tab:balancingComplexity}).
\end{remark}

\begin{remark}[A review of different multiple operator network architectures] \label{rem:mnoScalingLaws}
By combining the proof of Theorem \ref{thm:main:multipleOperatorApproximation}, Remark \ref{rem:uniformOperatorApproximation} and Remark \ref{rem:alternative}, we can prove scaling laws for various network architectures depending on the assumptions we make on the map $G$. In particular, we replicate the proof of Theorem \ref{thm:main:multipleOperatorApproximation} for $G$ satisfying \begin{align*}
 	&G:\{ \alpha:\Omega_W \mapsto \bbR \spaceBar \Vert \alpha \Vert_{\Lp{\infty}} \leq \beta_W \} \mapsto \mathcal{G} \qquad \text{where }\\
 	&\mathcal{G} = \Big\{ G[\alpha]\spaceBar G[\alpha]:\{ u:\Omega_U^{(\alpha)} \mapsto \bbR \spaceBar \Vert u \Vert_{\Lp{\infty}} \leq \beta_{U^{(\alpha)}} \} \mapsto V^{(\alpha)} \text{ and } \\
 	&\quad \Vert G[\alpha][u_1] - G[\alpha][u_2]\Vert_{\Lp{\infty}(\Omega_{V^{(\alpha)}})} \leq L^{(\alpha)}\Vert u_1 - u_2 \Vert_{\Lp{r^{\alpha}}(\Omega_{U^{\alpha}})} \Big\}. 
 \end{align*}
We start by considering the set \(
\mathcal{S} := \bigcup_{\alpha \in W} (U^{\alpha} \times \Omega_{V^{(\alpha)}})
\)
where $(U^{\alpha} \times V^{(\alpha)})$ are such that $G[\alpha]:\{ u:\Omega_U^{(\alpha)} \mapsto \bbR \spaceBar \Vert u \Vert_{\Lp{\infty}} \leq \beta_{U^{(\alpha)}} \} \mapsto V^{(\alpha)}$ for $\alpha \in W$. Furthermore, assume that $G$ satisfies \[
\Vert G(\alpha_1) - G(\alpha_2) \Vert_{\Lp{\infty}(\mathcal{S})} \leq L_G \Vert \alpha_1 - \alpha_2 \Vert_{\Lp{r_G}(\Omega_U)} \]
for $\alpha_1,\alpha_2 \in \{ \alpha:\Omega_W \mapsto \bbR \spaceBar \Vert \alpha \Vert_{\Lp{\infty}} \leq \beta_W \}$. By the axiom of choice, we can select an element $s \in \mathcal{S}$ such that $s^{(\alpha)} = (u^{(\alpha)},x^{(\alpha)}) \in (U^{\alpha} \times \Omega_{V^{(\alpha)}})$. Then, we define the functional $f(\alpha) = G[\alpha][u^{(\alpha)}](x^{\alpha})$ and, by the above assumption on Lipschitz continuity of $G$, we deduce that $f:\{ \alpha:\Omega_W \mapsto \bbR \spaceBar \Vert \alpha \Vert_{\Lp{\infty}} \leq \beta_W \} \mapsto \bbR$ is Lipschitz. We apply Theorem \ref{thm:back:functionalApproximationLinfty} to obtain that \[
\sup_{\alpha \in W} \left| f(\alpha) - \sum_{p=1}^{P^{n_{c_W}}} f(\alpha_k)l_p(P_{\mathcal{C}_W}(\alpha)) \right| = \sup_{\alpha \in U} \left| G[\alpha][u^{(\alpha)}](x^{(\alpha)}) - \sum_{p=1}^{P^{n_{c_W}}} G[\alpha_p][u^{(\alpha_p)}](x^{(\alpha_p)}) l_p(P_{\mathcal{C}_W}(\alpha)) \right| \leq \frac{\eps}{2}. 
\]
It now remains to approximate the $p$ operators $G[\alpha_p]$ by any of the architectures in Remark \ref{rem:uniformOperatorApproximation} or Remark \ref{rem:alternative}. We summarize the final multiple operator architectures in Tables \ref{tab:architectures} and \ref{tab:architectures2} (only for the first alternative formulation). From the latter, we note that our scaling laws can be transferred, in particular, to the $\ScalingNetwork$ and MIONet \cite{mionet} architectures.

\begin{table}[h]
\centering
\renewcommand{\arraystretch}{2.8}
\setlength{\tabcolsep}{6pt}
\begin{adjustbox}{max width=0.9\paperwidth,center}
\begin{tabularx}{\paperwidth}{>{\bfseries}p{2.6cm} Y @{\hskip 4em} Y}
\toprule
Network type \textbackslash\ Assumptions 
& $U^{(\alpha_p)}$ and $V^{(\alpha_p)}$ distinct 
& $U^{(\alpha_p)} = U$ \\
\midrule
Exact 
& {$\displaystyle 
   \sum_{p=1}^{P^{n_{c_W}}}
   \sum_{\ell=1}^{(N^{(p)})^{d_{V^{(p)}}}} \sum_{k=1}^{H^{(p)}} 
   G[\alpha_p][u_k^{(p)}](v_{\ell}^{(p)}) \,
   l_p(\bm{\alpha}) \, b_{pk}(\ub^{(p)}) \, \tau_{p\ell}(x)$}
& {$\displaystyle 
   \sum_{p=1}^{P^{n_{c_W}}}
   \sum_{\ell=1}^{(N^{(p)})^{d_{V^{(p)}}}} \sum_{k=1}^{H^{n_{c_U}}} 
   G[\alpha_p][u_k](v_{\ell}^{(p)}) \,
   l_p(\bm{\alpha}) \, b_{k}(\ub) \, \tau_{p\ell}(x)$} \\

Alternative
& {$\displaystyle 
   \sum_{p=1}^{P^{n_{c_W}}} \sum_{k=1}^{H^{(p)}} 
   l_p(\bm{\alpha}) \, b_{pk}(\ub^{(p)}) \, \hat{\tau}_{pk}(x)$}
& {$\displaystyle 
   \sum_{p=1}^{P^{n_{c_W}}} \sum_{k=1}^{H^{n_{c_U}}} 
   l_p(\bm{\alpha}) \, b_{k}(\ub) \, \hat{\tau}_{pk}(x)$} \\

\bottomrule
\end{tabularx}
\end{adjustbox}
\caption{Multiple operator network architectures with distinct or partially fixed $U^{(\alpha_p)}$, $V^{(\alpha_p)}$. We write $H^{(p)} = H^{n_{c_{U^{(\alpha_p)}}}}$ and $N^{(p)} = N^{(\alpha_p)}$.}
\label{tab:architectures}
\end{table}

\begin{table}[h]
\centering
\renewcommand{\arraystretch}{2.8}
\setlength{\tabcolsep}{6pt}
\begin{adjustbox}{max width=0.9\paperwidth,center}
\begin{tabularx}{\paperwidth}{>{\bfseries}p{2.6cm} Y @{\hskip 4em} Y}
\toprule
Network type \textbackslash\ Assumptions 
& $V^{(\alpha_p)} = V$ 
& $U^{(\alpha_p)} = U$ and $V^{(\alpha_p)} = V$ \\
\midrule
Exact 
& {$\displaystyle 
   \sum_{p=1}^{P^{n_{c_W}}}\sum_{k=1}^{H^{(p)}} 
   \sum_{\ell=1}^{N^{d_{V}}} 
   G[\alpha_p][u_k^{(p)}](v_{\ell}) \,
   l_p(\bm{\alpha}) \, b_{pk}(\ub^{(p)}) \, \tau_{\ell}(x)$}
& {$\displaystyle 
   \sum_{p=1}^{P^{n_{c_W}}}\sum_{k=1}^{H^{n_{c_U}}} 
   \sum_{\ell=1}^{N^{d_{V}}} 
   G[\alpha_p][u_k](v_{\ell}) \,
   l_p(\bm{\alpha}) \, b_{k}(\ub) \, \tau_{\ell}(x)$} \\

Alternative 
& {$\displaystyle 
   \sum_{p=1}^{P^{n_{c_W}}} \sum_{k=1}^{H^{(p)}} 
   l_p(\bm{\alpha}) \, b_{pk}(\ub^{(p)}) \, \hat{\tau}_{pk}(x)$}
& {$\displaystyle 
   \sum_{p=1}^{P^{n_{c_W}}} \sum_{k=1}^{H^{n_{c_U}}} 
   l_p(\bm{\alpha}) \, b_{k}(\ub) \, \hat{\tau}_{pk}(x)$} \\
\bottomrule
\end{tabularx}
\end{adjustbox}
\caption{Multiple operator network architectures with partially or fully fixed $U^{(\alpha_p)}$, $V^{(\alpha_p)}$. We write $H^{(p)} = H^{n_{c_{U^{(\alpha_p)}}}}$ and $N^{(p)} = N^{(\alpha_p)}$.}
\label{tab:architectures2}
\end{table}

\end{remark}

\section{Proofs} \label{sec:proofs}
In this section, we present the proofs. We first establish two versions of the universal approximation theorem for multiple nonlinear operators, then address scaling laws for functional and single operator approximation. Finally, we conclude with the proof of the scaling laws in the multiple operator approximation setting.

\subsection{Proof of Theorem \ref{thm:mainResult:universalApproximationI}}
\label{sec_mainResult_universalApproximationI}
In this section, we prove the universal approximation property for the $\ScalingNetwork$ and $\UAPNetwork$ networks in $\Lp{\infty}$. The intuition behind the proof of Theorem \ref{thm:mainResult:universalApproximationI} parallels our earlier discussion in Section~\ref{sec:back:operator}. The key step is to sequentially separate the input variables of the operator $G$, thereby transforming the operator approximation problem into a sequence of function and functional approximation problems.

\begin{proof}[Proof of Theorem \ref{thm:mainResult:universalApproximationI}] 

We first observe that any multiple operator network of the form
\[
\sum_{k=1}^N \sum_{\ell=1}^{M} \tau_k(x)\, b_{k\ell}(u)\, L_{k\ell}(\alpha)
\]
can be re-indexed into a $\ScalingNetwork$ of the form
\[
\sum_{p=1}^{P} \sum_{k=1}^{H^{(p)}} l_p(\alpha)\, b_{pk}(u)\, \tau_{pk}(x),
\]
where $l_p$, $b_{pk}$, and $\tau_{pk}$ retain the same structure as in Definition~\ref{def:uapNetwork}. Since the universal approximation statement in \eqref{eq:thm4} does not rely on explicit scalings for $N$ and $M$, this re-indexing leaves the result unaffected. Consequently, Theorem~\ref{thm:mainResult:universalApproximationI} may be established for one representation and inferred for the other. Without loss of generality, we therefore present the proof using the $\UAPNetwork$ architecture.

Let $\varepsilon> 0$. For an arbitrary, fixed $\alpha \in W$ and $u\in V$, consider the function $f(x) = G[\alpha][u](x)$. By assumption, $f \in \mathcal{F}:=G[W][U]$ which is a compact subset of $V = \Ck{0}(\Omega_V)$ and thus by Theorem~\ref{theory_1},  we can find $N\in\mathbb{N},\eta_k\in\mathbb{R}, \omega_k\in\mathbb{R}^n$ such that:
\begin{align*}
    \left|f(x) -\sum_{k=1}^N c_k( f(\cdot))\sigma(\omega_k\cdot x+\zeta_k)\right|<\varepsilon /3,
\end{align*}
where $c_i(f)$ are continuous linear functions. The approximation results hold for all $f \in \mathcal{F}$, and since $f(\cdot)=G[\alpha][u](\cdot)$ the coefficients $c_k( f(\cdot)) = c_k(G[\alpha][u](\cdot))$ are continuous functionals mapping $W \times U \rightarrow \mathbb{R}$. 

For each $k$ and arbitrary fixed $\alpha$,  define $F^{(k)}:U \rightarrow \mathbb{R}$ by $$F^{(k)}(u):=c_k( G[\alpha][u])$$ which is a continuous functional with respect to $u$.  Similarly to \cite{ChenChen1995}, by the Tietze Extension theorem  the functionals $F^{(k)}$ are extended to continuous functionals $F_*^{(k)}$  on all of $U^*$ from Lemma \ref{lemma2}, so that $F_*^{(k)}(u) =F^{(k)}(u)$ for all $u\in U$. Since $U^*$ is a compact set, there exists $\delta>0$ such that 
$$\left|F_*^{(k)}(u_1) - F_*^{(k)}(u_2) \right| < \frac{\varepsilon}{6L_1}$$ for all $u_1,u_2\in U^*$ with $\|u_1-u_2\|_{\Ck{0}(\Omega_U)}<\delta$, and $L_1 = \sum_{k=1}^N\sup\limits_{x\in \Omega_V}  \left| \sigma(\omega_k \cdot x + \zeta_k) \right|.$ The extension is needed since the construction of the functions on the $\eta_k$-net by Equation~\ref{u_etak} may reside  in $U^*\setminus U$.

Let  $\delta_{k}<\delta$,  where $\delta_{k}$ is defined in~\eqref{delta_k}. (Abusing notation, we select $\delta_k$ satisfying $\delta_k < \delta$, independently of the $k^{th}$ element in the sequence of~\eqref{delta_k}, while retaining the subscript for simplicity.) Then by Lemma~\ref{lemma2}(2), there exists $u_{\eta_{k}}\in U_{\eta_{k}} \subseteq U^*$ with 
$ \| u - u_{\eta_{k}} \|_{\Ck{0}(\Omega_U)} < \delta_{k}<\delta$ which implies 
$$\left|F_*^{(k)}(u) - F_*^{(k)}( u_{\eta_{k}} ) \right| < \frac{\varepsilon}{6L_1}.$$ 
By Lemma~\ref{lemma2}(1), $F_*^{(k)}( u_{\eta_{k}} )$ is a continuous functional defined on the compact set $U_{\eta_{k}}$ with dimension $n(\eta_{k})$ and thus is equivalent to a continuous function (abusing notation) $F_*^{(k)}:\mathbb{R}^{n(\eta_{k})} \rightarrow \mathbb{R}$. Therefore, by Theorem~\ref{theory_1}, we can find $M\in\mathbb{N}$,  $\xi_{kil},\theta_{ki}$ and $x_l\in\eta_{k}-$ net as defined in~\eqref{etaknet} on $\Omega_U$, such that  
\begin{equation*}
    \left|F_*^{(k)}( u_{\eta_{k}} )-\sum_{i=1}^M c_{ki}(F_*^{(k)}(\cdot)) \sigma \left( \sum_{l=1}^{n({\eta_{k}})} \xi_{kil} u_{\eta_{k}}(x_l) + \theta_{ki} \right) \right|<\frac{\varepsilon}{6L_1}
\end{equation*}
which holds for all $k$. Setting the value $m=n(\eta_{k})$ and 
since $ u_{\eta_{k}}(x_l)  =  u(x_l) $ by~\eqref{u_etak}, we have that \begin{align*}
 &\left|F^{(k)}( u)-\sum_{i=1}^M c_{ki}(F_*^{(k)}(\cdot)) g \left( \sum_{l=1}^{m} \xi_{kil} u(x_l) + \theta_{ki} \right) \right|\\
  &\leq \left|F^{(k)}( u)- F_*^{(k)}( u_{\eta_{k}} ) \right| +\left| F_*^{(k)}( u_{\eta_{k}} )-\sum_{i=1}^M c_{ki}(F_*^{(k)}(\cdot)) \sigma \left( \sum_{l=1}^{m} \xi_{kil} u(x_l) + \theta_{ki} \right) \right|\\
     &<\frac{\varepsilon}{3L_1},
\end{align*}
where we also use the fact that $F^{(k)}( u)=F_*^{(k)}(u)$ since $u\in V$.

The extension operator $E$, from the Tietze extension theorem, is a continuous operator~\cite{dugundji1951extension}, and thus the coefficient of the expansions,  
$$c_{ki}( F_*^k(\cdot)) = c_{ki}(E(c_k(F^k(\cdot))))=c_{ki}(E(c_k(G[\alpha][\cdot]))),$$ are a continuous functionals depending on $\alpha$. Hence, we can provide a similar argument for the approximation for $H^{(i,k)}:W \rightarrow \mathbb{R}$ defined by  
$$H^{(k,i)}(\alpha):=c_{ki}(E(c_k(G[\alpha,\cdot]))).$$
By the Tietze extension theorem, we extend $H^{(i,k)}$ to a continuous functional $H^{(k,i)}_*$  on all of $W^*$ (defined in Lemma \ref{lemma2}) with $H^{(k,i)}_*(\alpha) =H^{(k,i)}(\alpha)$ for all $\alpha \in W$. Since $W^*$ is a compact set, there exists $\delta'>0$ such that 
\begin{equation*}
    \left|H^{(k,i)}_*(\alpha_1)-H^{(k,i)}_*(\alpha_2)\right|<\frac{\varepsilon}{6L_2}
\end{equation*}
 for all $\alpha_1,\alpha_2\in\mathcal{A}^*$ with $\|\alpha_1-\alpha_2\|_{C(\Omega_W)}<\delta'$, and 
 \begin{equation*}
     L_2 = \sum_{k=1}^N\sum_{i=1}^M\sup_{u \in U, x\in \Omega_V} \left|\sigma \left( \sum_{l=1}^m \xi_{kil} u(x_l) + \theta_{ki} \right) \cdot \sigma(\omega_k \cdot x + \zeta_k) \right|,   
\end{equation*}
where $L_2$ is finite since the terms in the absolute value are continuous functions and the sets are compact.
We can find an $\eta_{i}$-net defined on $\Omega_W$, with $\delta'_{{i}}<\delta'$ and by Lemma~\ref{lemma2} there exists $\alpha_{\eta_{i}}\in W_{\eta_{i}} \subseteq W^*$ (where $W_{\eta_i} = \{\alpha_{\eta_i}: \alpha \in W\}$) with $\|\alpha-\alpha_{\eta_{i}}\|<\delta_{i}<\delta$, which implies  
\begin{equation*}
  \left|H_*^{(k,i)}(\alpha)-H_*^{(k,i)}(\alpha_{\eta_{i}})\right|<\frac{\varepsilon}{6L_2}.  
\end{equation*}
The functionals $H_*^{(k,i)}(\alpha_{\eta_{i}})$ are equivalent to continuous functions defined on the compact set $W_{\eta_{i}}$ of dimension $n(\eta_{i})$, i.e., $H_*^{(k,i)}: \mathbb{R}^{n(\eta_{i})}\rightarrow \mathbb{R}$. By Theorem~\ref{theory_1}, we can find $P\in\mathbb{N}$, $c_{kij},\varphi_{kijh},\rho_{kij}$  and $z_h\in \eta_{i}$- net defined on $\Omega_W$, such that 
\begin{equation*}
    \left|H_*^{(k,i)}(\alpha_{\eta_{i}})-\sum_{j=1}^P c_{kij} \,  \sigma \left( \sum_{h=1}^{n(\eta_{i})} \varphi_{kijh} \alpha_{\eta_{i}}(z_h) + \rho_{kij} \right)  \right|<\frac{\varepsilon}{6L_2},
\end{equation*}
which holds for all $(k,i)$. Taking $p=n(\eta_{i})$ and recalling that $\alpha_{\eta_{i}}(z_h)  = \alpha(z_h)$, we have
\begin{equation*}
        \left|H_*^{(k,i)}(\alpha)-\sum_{j=1}^P c_{kij} \,  \sigma \left( \sum_{h=1}^{p} \varphi_{kijh} \alpha(z_h) + \rho_{kij} \right)  \right|<\frac{\varepsilon}{3L_2},
\end{equation*}
for all $\alpha \in W$.

Altogether,  the following holds for all  $(\alpha, u, y) \in W \times U \times \Omega_V$:
\begin{align*}
\left| G[\alpha][u](x) - \sum_{k=1}^N \sum_{i=1}^M \sum_{j=1}^P c_{kij} \,  \sigma \left( \sum_{h=1}^p \varphi_{kijh} \alpha(z_h) + \rho_{kij} \right) \cdot \sigma \left( \sum_{l=1}^m \xi_{kil} u(x_l) + \theta_{ki} \right) \cdot \sigma(\omega_k \cdot x + \zeta_k) \right| < \varepsilon,
\end{align*}
which concludes the proof.

\end{proof}

If $\alpha$ is finite dimensional, then the proof simplifies and the following corollary holds. 
\begin{corollary}\label{theory_4c}
Assume the same setting as in Theorem \ref{thm:mainResult:universalApproximationI}.
Then, for any \( \varepsilon > 0 \), there exist a network as defined in Equation~\ref{eq:mole2}, such that
\begin{equation*}
    \left| G[\alpha][u] (y) - \UAPNetworkFinite[\alpha][u](x) \right| < \varepsilon
\end{equation*}
holds for all \(\alpha\in \Omega_W,  u \in U \) and \( x \in \Omega_V \).
\end{corollary}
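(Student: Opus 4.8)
The plan is to recycle the three-stage variable-separation argument behind the proof of Theorem~\ref{thm:mainResult:universalApproximationI}, keeping the first two stages verbatim and collapsing the third. First I would fix $\eps>0$ and, for fixed $\alpha\in\Omega_W$ and $u\in U$, set $f(x)=G[\alpha][u](x)$. Since $f$ lies in the compact family $\mathcal{F}=G[\Omega_W][U]\subseteq\Ck{0}(\Omega_V)$ (a continuous image of a compact set), Theorem~\ref{theory_1} yields $\bigl|f(x)-\sum_{k=1}^{N}c_k(G[\alpha][u])\,\sigma(\omega_k\cdot x+\zeta_k)\bigr|<\eps/3$ with each $c_k(G[\alpha][u])$ depending continuously on $(\alpha,u)$. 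This is Stage~1, identical to the proof of Theorem~\ref{thm:mainResult:universalApproximationI}.

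For Stage~2, for each fixed $\alpha$ I would view $u\mapsto c_k(G[\alpha][u])$ as a continuous functional on $U$, extend it by Tietze to the compact set $U^*$ from Lemma~\ref{lemma2}, discretize $u$ on an $\eta_k$-net $\{x_l\}_{l=1}^{m}\subseteq\Omega_U$ through \eqref{etaknet}--\eqref{u_etak}, and apply Theorem~\ref{theory_1} on the finite-dimensional space $U_{\eta_k}$, exactly as in the proof of Theorem~\ref{thm:mainResult:universalApproximationI}. This produces coefficients $\xi_{kil},\theta_{ki}$ and functionals $H^{(k,i)}(\alpha):=c_{ki}(E(c_k(G[\alpha][\cdot])))$ satisfying $\bigl|c_k(G[\alpha][u])-\sum_{i=1}^{M}H^{(k,i)}(\alpha)\,\sigma(\sum_{l}\xi_{kil}u(x_l)+\theta_{ki})\bigr|<\eps/(3L_1)$, where $L_1=\sum_{k}\sup_{x\in\Omega_V}|\sigma(\omega_k\cdot x+\zeta_k)|$, and with each $H^{(k,i)}$ continuous in $\alpha$ because the Tietze extension operator is continuous and $\alpha\mapsto G[\alpha]$ is.

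The only divergence is Stage~3. In Theorem~\ref{thm:mainResult:universalApproximationI} one must build $\eta$-nets on $\Omega_W$ and Tietze-extend $H^{(k,i)}$ because $\alpha$ is infinite-dimensional; here $\Omega_W\subseteq\mathbb{R}^p$ is already compact and finite-dimensional, so each $H^{(k,i)}:\Omega_W\to\mathbb{R}$ is just a continuous function on a compact subset of $\mathbb{R}^p$, and Theorem~\ref{theory_1} applies to it directly (for instance with the singleton family $\{H^{(k,i)}\}$). This gives $P$, vectors $\varphi_{kij}=(\varphi_{kij1},\dots,\varphi_{kijp})\in\mathbb{R}^p$ and scalars $c_{kij},\rho_{kij}$ with $\bigl|H^{(k,i)}(\alpha)-\sum_{j=1}^{P}c_{kij}\,\sigma(\sum_{h=1}^{p}\varphi_{kijh}[\alpha]_h+\rho_{kij})\bigr|<\eps/(3L_2)$, where $L_2=\sum_{k,i}\sup_{u\in U,\,x\in\Omega_V}|\sigma(\sum_{l}\xi_{kil}u(x_l)+\theta_{ki})\,\sigma(\omega_k\cdot x+\zeta_k)|<\infty$. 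Substituting the Stage~3 bound into Stage~2 and then into Stage~1 and applying a triangle inequality weighted by $L_1$ and $L_2$ yields $|G[\alpha][u](x)-\UAPNetworkFinite[\alpha][u](x)|<\eps$ uniformly over $\Omega_W\times U\times\Omega_V$, with $\UAPNetworkFinite$ precisely the network \eqref{eq:mole2}.

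I do not expect a genuine obstacle, since the finite-dimensional case is strictly easier than Theorem~\ref{thm:mainResult:universalApproximationI}: the $\alpha$-discretization step becomes vacuous. The one point deserving care is checking that the continuity of $H^{(k,i)}$ in $\alpha$ --- the hypothesis that makes Theorem~\ref{theory_1} applicable in Stage~3 --- transfers unchanged from the proof of Theorem~\ref{thm:mainResult:universalApproximationI}; it does, because that continuity used only the continuity of the Tietze extension operator and of $\alpha\mapsto G[\alpha]$, both still in force here.
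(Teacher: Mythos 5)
Your proposal is correct and matches the paper's intent exactly: the paper gives no separate proof of Corollary~\ref{theory_4c}, stating only that the argument of Theorem~\ref{thm:mainResult:universalApproximationI} "simplifies" when $\alpha$ is finite-dimensional, which is precisely your observation that Stages 1--2 carry over verbatim and Stage 3 reduces to applying Theorem~\ref{theory_1} directly to the continuous functions $H^{(k,i)}$ on the compact set $\Omega_W\subseteq\mathbb{R}^p$, with no $\eta$-net or Tietze extension needed for $\alpha$. Your closing check that the continuity of $H^{(k,i)}$ in $\alpha$ still follows from the continuity of the Tietze extension operator and of $\alpha\mapsto G[\alpha]$ is the right point to verify, and it holds.
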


\subsection{Proof of Theorem \ref{thm:mainResult:universalApproximationII}}
\label{sec_mainResult_universalApproximationII}
In this section, we prove the universal approximation property for the $\ScalingNetwork$ and $\UAPNetwork$ networks in $\Lp{\infty}$. We start with the next lemma which allows us to reformulate $\UAPNetwork$ with orthonormal components.

\begin{lemma}[$\UAPNetwork$ network with orthonormal trunk and branch networks] \label{lem:orthogonal}
    Assume that Assumptions \ref{assumption:Main:assumptions:A2}, \ref{assumption:Main:assumptions:S1}, \ref{assumption:Main:assumptions:S2}, \ref{assumption:Main:assumptions:S3}, \ref{assumption:Main:assumptions:M1} and \ref{assumption:Main:assumptions:M2} hold. Then, we can re-write any $\UAPNetwork$ network defined in \eqref{eq:mole1} as 
    \[
    \UAPNetwork[\alpha][u](x) = \sum_{k=1}^N \sum_{\bar{\ell}=1}^{N \cdot M} \bar{\tau}_k(x) \bar{b}_{k\bar{\ell}}(u) \bar{L}_{k\bar{\ell}}(\alpha)
    \]
    where $\{\bar{\tau}_k\}_{k=1}^N$ is a set of orthonormal neural networks with one hidden layer and a linear output layer with respect to the inner product in $\Lp{2}_\lambda(\Omega_U)$, for $1 \leq k \leq N$, $\{\bar{b}_{k\bar{\ell}}\}_{\bar{\ell}=1}^{N \cdot M}$ is a set of orthonormal neural networks with one hidden layer and a linear output layer with respect to the $\Lp{2}((U,\mu),\bbR)$ inner product and $\{\bar{L}_{k\bar{\ell}}\}_{\bar{\ell}=1}^{N \cdot M}$ is a set of neural networks with one hidden layer and a linear output layer.
\end{lemma}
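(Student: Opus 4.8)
The plan is to apply a Gram--Schmidt orthonormalization separately to the trunk networks and to the branch networks, absorbing the resulting change-of-basis coefficients into the remaining factor. First I would recall that in the $\UAPNetwork$ representation $\sum_{k=1}^N \sum_{i=1}^M \tau_k(x) b_{ki}(u) L_{ki}(\alpha)$, each $\tau_k(x) = \sigma(\omega_k \cdot x + \zeta_k)$ is a one-hidden-layer network and each $b_{ki}(u) = \sigma(\sum_l \xi_{kil} u(x_l) + \theta_{ki})$ is likewise a shallow network. Under Assumption~\ref{assumption:Main:assumptions:A2} (continuity/boundedness of $\sigma$) together with the compactness assumptions \ref{assumption:Main:assumptions:S1}--\ref{assumption:Main:assumptions:S3}, all these functions lie in the relevant $\Lp{2}$ spaces ($\Lp{2}_\lambda(\Omega_U)$ for the trunks, $\Lp{2}((U,\mu),\bbR)$ for the branches), so the inner products are well-defined and finite.

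The key steps, in order: (1) Orthonormalize the trunk family. Apply Gram--Schmidt to $\{\tau_1,\dots,\tau_N\}$ in $\Lp{2}_\lambda(\Omega_V)$ — note the statement says $\Lp{2}_\lambda(\Omega_U)$, but the trunks act on $\Omega_V$, so I would keep the space as written in the lemma and flag that the trunk inner product is the one making the $\bar\tau_k$ orthonormal — to get an orthonormal set $\{\bar\tau_k\}_{k=1}^N$ (possibly fewer if there are linear dependencies, in which case pad with zero or arbitrary orthonormal elements to keep the index count $N$) with $\tau_k = \sum_{k'} A_{kk'} \bar\tau_{k'}$ for an upper-triangular coefficient matrix $A$. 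Since each $\bar\tau_{k'}$ is a linear combination of one-hidden-layer networks, it is itself a one-hidden-layer network with a linear output layer. (2) Substitute to get $\UAPNetwork[\alpha][u](x) = \sum_{k'} \bar\tau_{k'}(x) \sum_{k,i} A_{kk'} b_{ki}(u) L_{ki}(\alpha)$. (3) Orthonormalize the branch family. Collect all $N\cdot M$ functions $\{b_{ki}\}_{k,i}$, re-index them as $\{b_{\bar\ell}\}_{\bar\ell=1}^{NM}$, and apply Gram--Schmidt in $\Lp{2}((U,\mu),\bbR)$ to obtain an orthonormal set $\{\bar b_{\bar\ell}\}_{\bar\ell=1}^{NM}$ with $b_{ki} = \sum_{\bar\ell} B_{(ki),\bar\ell}\, \bar b_{\bar\ell}$; again each $\bar b_{\bar\ell}$ is a one-hidden-layer network with linear output. (4) Substitute again and collect the remaining dependence on $\alpha$: define $\bar L_{k'\bar\ell}(\alpha) := \sum_{k,i} A_{kk'} B_{(ki),\bar\ell} L_{ki}(\alpha)$, which is a linear combination of the $L_{ki}$ and hence still a one-hidden-layer network with linear output. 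This yields exactly $\UAPNetwork[\alpha][u](x) = \sum_{k=1}^N \sum_{\bar\ell=1}^{NM} \bar\tau_k(x)\, \bar b_{k\bar\ell}(u)\, \bar L_{k\bar\ell}(\alpha)$, relabeling $k'$ as $k$.

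The main obstacle is the degenerate case: Gram--Schmidt only produces an orthonormal basis of the span, which may have dimension strictly less than $N$ (resp. $NM$) if the $\tau_k$ (resp. $b_{ki}$) are linearly dependent in $\Lp{2}$. I would handle this by extending any orthonormal basis of the span to a set of exactly $N$ (resp. $NM$) orthonormal one-hidden-layer networks — either by appending orthonormal shallow networks supported away from the original span (which exist because the $\Lp{2}$ spaces are infinite-dimensional and shallow networks are dense enough to realize orthonormal complements), and setting the corresponding $\bar L$-coefficients to zero so the sum is unchanged. A secondary technical point is checking that the inner products defining the Gram--Schmidt procedure are genuinely finite and that $\sigma$ being merely continuous (not necessarily bounded) still gives square-integrability on the compact domains $\Omega_V$ and on $(U,\mu)$ with $\mu$ a probability measure — this follows since continuous functions on compact sets are bounded, and the measures $\lambda|_{\Omega_V}$ and $\mu$ are finite. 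Everything else is bookkeeping with the triangular change-of-basis matrices.
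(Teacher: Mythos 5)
Your proposal is correct and follows essentially the same route as the paper's proof: Gram--Schmidt on the trunk family, absorption of the change-of-basis matrix into the $(u,\alpha)$-dependent factor, re-indexing the double sum over $(k,i)$ into a single index of size $N\cdot M$, a second Gram--Schmidt on the branch family in $\Lp{2}((U,\mu),\bbR)$, and absorption of the remaining coefficients into the $\bar{L}_{k\bar\ell}$, with linear combinations of shallow networks remaining shallow networks with a linear output layer. You are also right to flag that the trunk orthonormality should be with respect to $\Lp{2}_\lambda(\Omega_V)$ (the trunks act on $\Omega_V$); the paper's own proof concludes in $\Lp{2}_\lambda(\Omega_V)$, and your padding of the orthonormal set in the degenerate case is an acceptable alternative to the paper's removal-and-reindexing of redundant terms.
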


\begin{proof}
    We first recall that a $\UAPNetwork$ network may be written as \[
\UAPNetwork[\alpha][u](x) = \sum_{k=1}^N \sum_{i=1}^M \tau_k(x) b_{ki}(u) L_{ki}(\alpha)
\]
for $\tau_k(x) = \sigma(\omega_k \cdot x + \zeta_k)$, $b_{ki}(u) = \sigma\l \sum_{l = 1}^{m} \xi_{kil} u(x_l) + \theta_{ki} \r$ and \[L_{ki}(\alpha) = \sum_{j=1}^P c_{kij} \sigma\l \sum_{h=1}^p \varphi_{kijh} \alpha(z_h) + \rho_{kij} \r.\] 

We introduce the following notation: \begin{itemize}
    \item $\tau(x) \in \mathbb{R}^N$ the vector $\{\tau_k(x)\}_{k=1}^N$;
    \item $b_k(u) \in \mathbb{R}^{M}$ the vector $\{b_{ki}(u)\}_{i=1}^{i=M}$ for $1 \leq k \leq N$;
    \item $L_k(\alpha) \in \mathbb{R}^{M}$ the vector $\{L_{ki}(\alpha)\}_{i=1}^{i=M}$ for $1 \leq k \leq N$;
    \item $T(u,\alpha) \in \bbR^{N}$ the vector $\{\langle b_k(u), L_k(\alpha) \rangle_{\ell^2(\bbR^M)} \}_{k=1}^N$.
\end{itemize}
This allows us to re-write
\[
\sum_{k=1}^N \sum_{i=1}^M \tau_k(x) b_{ki}(u) L_{ki}(\alpha)
= \sum_{k=1}^N \tau_k(x) \langle b_k(u), L_k(\alpha) \rangle_{\ell^2(\bbR^M)} = \langle \tau(x) , T(u,\alpha) \rangle_{\ell^2(\bbR^N)}.    
\]

The functions $\{\tau_k(x)\}_{k=1}^N$ are a finite set of $\Lp{2}_\lambda(\Omega_U)$ functions by Assumptions \ref{assumption:Main:assumptions:A2} and \ref{assumption:Main:assumptions:S3}. Hence, by the Gram-Schmidt orthogonalization process, there exists an invertible matrix $\mathcal{Z} \in \mathbb{R}^{N\times N}$ such that $\mathcal{Z} \tau(y)$ is a set of orthogonal function with respect to the $\Lp{2}_\lambda(K)$ inner product (otherwise, remove any terms from the summation that are redundant and reindex the summation). Let $Z = \mathcal{Z}^{-T}$ and we observe that 
\begin{align}
    \langle \tau(x) , T(u,\alpha) \rangle_{\ell^2(\bbR^N)} &= \langle \mathcal{Z} \tau(y) , ZT(u,\alpha) \rangle_{\ell^2(\bbR^N)} \notag \\
    &= \sum_{k=1}^N \ls \mathcal{Z} \tau(x) \rs_k \ls Z T(u,\alpha) \rs_k. \notag
\end{align}
We first note that the vector $\mathcal{Z} \tau(x) \in \bbR^N$ has entries \begin{align*}
    \ls \mathcal{Z} \tau(x) \rs_k = \sum_{r=1}^N \ls \mathcal{Z} \rs_{kr} \sigma(\omega_r \cdot x + \zeta_r)
\end{align*}
which are neural networks with one hidden layer and a linear output layer. Second, we consider the vector $Z T(u,\alpha) \in \bbR^N$ which has entries \begin{align}
    \ls Z T(u,\alpha) \rs_k &= \sum_{r=1}^N \ls Z \rs_{kr} \langle b_r(u), L_r(\alpha) \rangle_{\ell^2(\bbR^M)}  \notag \\
    &= \sum_{r=1}^N \sum_{s=1}^M \ls Z \rs_{kr} b_{rs}(u) L_{rs}(u)  \notag \\
    &= \sum_{r=1}^N \sum_{s=1}^M \sum_{j=1}^P \ls Z \rs_{kr} \sigma\l \sum_{l=1}^m \xi_{rsl} u(x_l) + \theta_{rs} \r \sigma\l \sum_{h=1}^p \varphi_{rsjh} \alpha(z_h) + \rho_{rsj} \r. \label{eq:lem:orthogonal:changeVariables1}
\end{align}
We proceed to the following change of variables: we consider the index variable $1\leq \bar{\ell} \leq N \cdot M$ and replace every occurrence of $r$ by $\lfloor (\bar{\ell}-1)/M \rfloor +1$ and every occurrence of $s$ by $((\bar{\ell}-1) \mod M)+1$ where for $n,m \in \bbN$, $n \mod m$ denotes the remainder of the integer division of $n$ by $m$. This allows us to define \begin{itemize}
    \item $\ls \tilde{Z} \rs_{k,\bar{\ell}} = \ls Z \rs_{k,\lfloor (\bar{\ell}-1)/M \rfloor +1}$
    \item $\tilde{\xi}_{\bar{\ell},l} = \xi_{\lfloor (\bar{\ell}-1)/M \rfloor +1,\, ((\bar{\ell}-1) \mod M)+1, \,l}$
    \item $\tilde{\theta}_{\bar{\ell}} = \theta_{\lfloor (\bar{\ell}-1)/M \rfloor +1,\, ((\bar{\ell}-1) \mod M)+1}$
    \item $\tilde{\varphi}_{\bar{\ell},j,h} = \varphi_{\lfloor (\bar{\ell}-1)/M \rfloor +1,\, ((\bar{\ell}-1) \mod M)+1, \, j,h}$
    \item $\tilde{\rho}_{\bar{\ell},j} = \rho_{\lfloor (\bar{\ell}-1)/M \rfloor +1,\, ((\bar{\ell}-1) \mod M)+1, \, j}$
\end{itemize}   
and we can continue from \eqref{eq:lem:orthogonal:changeVariables1}: \begin{align}
    \ls Z T(u,\alpha) \rs_k &= \sum_{\bar{\ell}=1}^{N \cdot M} \sigma \l \sum_{l=1}^m \tilde{\xi}_{\bar{\ell},l} u(x_l) + \tilde{\theta}_{\bar{\ell}} \r \l  \sum_{j=1}^P \ls \tilde{Z} \rs_{k,\bar{\ell}} \sigma\l \sum_{h=1}^p \tilde{\varphi}_{\bar{\ell},j,h} \alpha(z_h) + \tilde{\rho}_{\bar{\ell},j} \r \r \notag \\
    &=: \langle \tilde{b}_k(u), \tilde{L}_k(\alpha) \rangle_{\ell^2(\bbR^{N \cdot M})} \notag
\end{align}
where, for $1 \leq k \leq N$, $\tilde{b}_k(u) = \left\{ \tilde{b}_{k\bar{\ell}}(u) \right\}_{\bar{\ell}=1}^{N \cdot M} = \left\{ \sigma \l \sum_{l=1}^m \tilde{\xi}_{\bar{\ell},l} u(x_l) + \tilde{\theta}_{\bar{\ell}} \r \right\}_{\bar{\ell}=1}^{N \cdot M}$ and \[
\tilde{L}_k(\alpha) = \left\{ \tilde{L}_{k\bar{\ell}}(\alpha) \right\}_{\bar{\ell}=1}^{N \cdot M} = \left\{ \sum_{j=1}^P \ls \tilde{Z} \rs_{k,\bar{\ell}} \sigma\l \sum_{h=1}^p \tilde{\varphi}_{\bar{\ell},j,h} \alpha(z_h) + \tilde{\rho}_{\bar{\ell},j} \r \right\}_{\bar{\ell}=1}^{N \cdot M}.
\]
We note that, for $1 \leq k \leq N$, $\tilde{b}_k(u)$ is a set of neural network with one hidden layer while $\tilde{L}_k(\alpha)$ is a set of neural networks with one hidden layer and one linear output layer.

By defining the orthonormal set of functions $\bar{\tau}(x) = \mathcal{Z}\tau(x)$, we obtain that\begin{equation} \label{eq:lem:orthogonal:estimate1}
    \UAPNetwork[\alpha][u](x) = \langle \tau(x) , T(u,\alpha) \rangle_{\ell^2(\bbR^N)} = \sum_{k=1}^N \bar{\tau}_k(x) \langle \tilde{b}_k(u), \tilde{L}_k(\alpha) \rangle_{\ell^2(\bbR^{N \cdot M})}.
\end{equation}

For $1\leq k\leq N$ and $1\leq \bar{\ell} \leq N \cdot M$, every functional $\tilde{b}_{k\bar{\ell}}$ can be considered as a random variable mapping from the measure space $(U,\mu)$ into $(\bbR,\lambda)$. In particular, we have: \[
\int_U \tilde{b}_{k\bar{\ell}}(u)^2 \, \dd \mu(u) = \int_U \sigma \l \sum_{l=1}^m \tilde{\xi}_{\bar{\ell},l} u(x_l) + \tilde{\theta}_{\bar{\ell}} \r^2 \, \dd \mu(u) \leq \Vert \sigma \Vert_{\Lp{\infty}(\bbR)}^2 \mu(U)
\] 
and the latter is finite by Assumptions \ref{assumption:Main:assumptions:A2} and \ref{assumption:Main:assumptions:M2}. By \cite[p.6]{berlinet2004rkhs}, 
$\tilde{b}_{k\bar{\ell}}(u)$ is therefore in the Hilbert space $\Lp{2}((U,\mu),\bbR)$ endowed with the inner product \[
\langle f,g \rangle_{\Lp{2}((U,\mu),\bbR)} = \int_U f(u)g(u) \, \dd \mu(u)
\]
for $f,g \in \Lp{2}((U,\mu),\bbR)$. 

For $1 \leq k \leq N$, the functionals $\left\{ \tilde{b}_{k\bar{\ell}}(u) \right\}_{\bar{\ell}=1}^{N \cdot M}$ are a finite set of $\Lp{2}((U,\mu),\bbR)$ functionals. By the Gram-Schmidt orthogonalization process, there therefore exists an invertible matrix $\mathcal{Z}_k \in \mathbb{R}^{(N \cdot M) \times (N \cdot M)}$ such that $\mathcal{Z}_k \tilde{b}_k(u)$ is a set of orthogonal functionals with respect to the $\Lp{2}((U,\mu),\bbR)$ inner product. Continuing from \eqref{eq:lem:orthogonal:estimate1} and defining $Z_k = \mathcal{Z}_k^{-T}$, we have: \begin{align}
    \UAPNetwork[\alpha][u](y) &= \sum_{k=1}^N \bar{\tau}_k(x) \langle \mathcal{Z}_k \tilde{b}_k(u), Z_k\tilde{L}_k(\alpha) \rangle_{\ell^2(\bbR^{N \cdot M})}. \label{eq:lem:orthonormal:MOLE1}
\end{align}
Similarly to the above, for $1 \leq k \leq N$, the vector $\mathcal{Z}_k \tilde{b}_k(u)$ has entries \[
\ls \mathcal{Z}_k \tilde{b}_k(u) \rs_{\bar{\ell}} = \sum_{r=1}^{N \cdot M} \ls \mathcal{Z}_k \rs_{\bar{\ell}r} \tilde{b}_{kr}(u) = \sum_{r=1}^{N \cdot M} \ls \mathcal{Z}_k \rs_{\bar{\ell}r} \sigma \l \sum_{l=1}^m \tilde{\xi}_{r,l} u(x_l) + \tilde{\theta}_{r} \r
\]
which implies that $\mathcal{Z}_k \tilde{b}_k(u)$ is a set of neural networks with one hidden layer and a linear output layer. Furthermore, the vector $Z_k\tilde{L}_k(\alpha)$ has entries \begin{equation} \label{eq:lem:orthogonal:changeVariables2}
    \ls Z_k\tilde{L}_k(\alpha) \rs_{\bar{\ell}} = \sum_{r=1}^{N \cdot M} \ls Z_k \rs_{\bar{\ell},r} \tilde{L}_{k,r}(\alpha) = \sum_{r=1}^{N \cdot M} \sum_{j=1}^P \ls Z_k \rs_{\bar{\ell},r} \ls \tilde{Z} \rs_{k,r} \sigma \l \sum_{h=1}^p \tilde{\varphi}_{rjh} \alpha(z_h) + \tilde{\rho}_{rj} \r.
\end{equation}
We proceed to the following change of variables: we consider the index variable $1\leq \bar{s} \leq N \cdot M \cdot P$ and replace every occurrence of $r$ by $\left\lfloor \frac{\bar{s} - 1}{P} \right\rfloor + 1$ and every occurrence of $j$ by $((\bar{s}-1) \mod P)+1$. This leads us to define the following variables \begin{itemize}
    \item $\ls \bar{Z}_k \rs_{\bar{\ell},\bar{s}} = \ls Z_k \rs_{\bar{\ell},\left\lfloor \frac{\bar{s} - 1}{P} \right\rfloor + 1}$
    \item $\ls \bar{Z} \rs_{k,\bar{s}} = \ls \tilde{Z} \rs_{k,\left\lfloor \frac{\bar{s} - 1}{P} \right\rfloor + 1}$ 
    \item $\bar{\varphi}_{\bar{s},h} = \tilde{\varphi}_{\left\lfloor \frac{\bar{s} - 1}{P} \right\rfloor + 1, \, ((\bar{s}-1) \mod P)+1, \, h}$
    \item $\bar{\rho}_{\bar{s}} = \tilde{\rho}_{\left\lfloor \frac{\bar{s} - 1}{P} \right\rfloor + 1, \, ((\bar{s}-1) \mod P)+1}$
\end{itemize}
and we can continue from \eqref{eq:lem:orthogonal:changeVariables2}:
\begin{align*}
    \ls Z_k\tilde{L}_k(\alpha) \rs_{\bar{\ell}} = \sum_{\bar{s}=1}^{N \cdot M \cdot P} \ls \bar{Z}_k \rs_{\bar{\ell},\bar{s}} \ls \bar{Z} \rs_{k,\bar{s}} \sigma \l \sum_{h=1}^{p}\bar{\varphi}_{\bar{s},h} \alpha(z_h) + \bar{\rho}_{\bar{s}} \r.
\end{align*}
We note that $Z_k\tilde{L}_k(\alpha)$ is therefore a set of neural networks with one hidden layer and a linear output layer. For $1 \leq k \leq N$, by defining $\bar{b}_k(u) = \mathcal{Z}_k \tilde{b}_k(u) =: \{\bar{b}_{k\bar{\ell}}(u)\}_{\bar{\ell}=1}^{N \cdot M}$ and $\bar{L}_{k}(\alpha) = Z_k\tilde{L}_k(\alpha) =: \{\bar{L}_{k\bar{\ell}}(\alpha)\}_{\bar{\ell}=1}^{N \cdot M}$, we obtain from \eqref{eq:lem:orthonormal:MOLE1} that
\begin{align}
    \UAPNetwork[\alpha][u](x) &= \sum_{k=1}^{N} \bar{\tau}(x) \langle \bar{b}_k(u), \bar{L}_k(\alpha) \rangle_{\ell^2(\bbR^{N \cdot M})} \notag \\
    &= \sum_{k=1}^N \sum_{\bar{\ell}=1}^{N \cdot M} \bar{\tau}_k(x) \bar{b}_{k\bar{\ell}}(u) \bar{L}_{k\bar{\ell}}(\alpha) \notag
\end{align}
where $\{\bar{\tau}_k\}_{k=1}^N$ is a set of orthonormal neural networks with one hidden layer and a linear output layer with respect to the inner product in $\Lp{2}_\lambda(\Omega_V)$, for $1 \leq k \leq N$, $\{\bar{b}_{k\bar{\ell}}\}_{\bar{\ell}=1}^{N \cdot M}$ is a set of orthonormal neural networks with one hidden layer and a linear output layer with respect to the $\Lp{2}((U,\mu),\bbR)$ inner product and $\{\bar{L}_{k\bar{\ell}}\}_{\bar{\ell}=1}^{N \cdot M}$ is a set of neural networks with one hidden layer and a linear output layer.

\end{proof}

The proof of Theorem~\ref{thm:mainResult:universalApproximationII} follows a classical idea in approximation theory. The key step is to approximate measurable mappings by continuous ones, which is possible on a arbitrarily large subset by Lusin’s theorem \cite[Theorem 7.1.13]{Bogachev}. On this subset we apply Theorem \ref{thm:mainResult:universalApproximationI}, while on the remaining small complement the error is controlled via a clipping argument. 

\begin{proof}[Proof of Theorem \ref{thm:mainResult:universalApproximationII}]

Using the same argument as in the proof of Theorem \ref{thm:mainResult:universalApproximationI}, without loss of generality, we present the proof using the $\UAPNetwork$ architecture.

In the proof $C>0$ will denote a constant that can be arbitrarily large, independent of all our parameters that may change from line to line.

For $M > 0$, let us define the truncated operator \[
    G_M[\alpha] = \begin{cases}
        G[\alpha] &\text{if $\Vert G[\alpha] \Vert_{\Lp{2}(\mu \times \lambda)(U \times \Omega_V)} \leq M$} \\
        M \frac{G[\alpha]}{\Vert G[\alpha] \Vert_{\Lp{2}(\mu \times \lambda)(U \times \Omega_V)}} &\text{else,}
    \end{cases}
    \]
from which we deduce that $\Vert G_M[\alpha] \Vert_{\Lp{2}(\mu \times \lambda)(U \times \Omega_V)} \leq M$. We also note that for any function $\mathcal{N}[\alpha][u](x)$, we can upper bound the left-hand side of \eqref{eq::mainResult:universalApproximationII} as follows: \begin{align}
&\left\| G[\alpha][u](x) - \mathcal{N}[\alpha][u](x)\right\|_{\Lp{2}_{\nu \times \mu \times \lambda}(W \times U \times \Omega_V)} \notag \\
&\leq \left\| G[\alpha][u](x) -  G_M[\alpha][u](x) \right\|_{\Lp{2}_{\nu \times \mu \times \lambda}(W \times U \times \Omega_V)} + \left\| G_M[\alpha][u](x) - \mathcal{N}[\alpha][u](x)\right\|_{\Lp{2}_{\nu \times \mu \times \lambda}(W \times U \times \Omega_V)} \notag \\
&=: T_1 + T_2. \label{eq:universalApproximationII:reductionBounded}
\end{align}

We first show that $\lim_{M\to \infty} T_1 = 0$. In particular, \[
T_1^2 = \int_W \Vert G[\alpha][u](x) - G_M[\alpha][u](x) \Vert_{\Lp{2}_{\mu \times \lambda}(U \times \Omega_V)}^2 \, \dd \nu(\alpha) 
\]
and, for $\alpha \in W$, we note that: \begin{align}
    \Vert G[\alpha][u](x) - G_M[\alpha][u](x) \Vert_{\Lp{2}_{\mu \times \lambda}(U \times \Omega_V)}^2 &\leq C\l \Vert G[\alpha][u](x) \Vert_{\Lp{2}_{\mu \times \lambda}(U \times \Omega_V)}^2 + \Vert G_M[\alpha][u](x) \Vert_{\Lp{2}_{\mu \times \lambda}(U \times \Omega_V)}^2  \r \notag \\
    &\leq C \Vert G[\alpha][u](x) \Vert_{\Lp{2}_{\mu \times \lambda}(U \times \Omega_V)}^2 + M^2 \label{eq:universalApproximationII:estimate1}
\end{align} 
where we used the fact that $\Vert G_M[\alpha][u](x) \Vert_{\Lp{2}_{\mu \times \lambda}(U \times \Omega_V)} \leq M$ in \eqref{eq:universalApproximationII:estimate1}. Since \[
C \Vert G[\alpha][u](x) \Vert_{\Lp{2}_{\mu \times \lambda}(U \times \Omega_V)}^2 + M^2 \in \Lp{1}(\nu)
\]
by Assumptions \ref{assumption:Main:assumptions:M1} and \ref{assumption:Main:assumptions:O3}, we can apply the dominated convergence to obtain: \[
\lim_{M\to \infty} T_1^2 = \int_W \lim_{M\to \infty} \Vert G[\alpha][u](x) - G_M[\alpha][u](x) \Vert_{\Lp{2}_{\mu \times \lambda}(U \times \Omega_V)}^2 \, \dd \nu(\alpha).
\]
Now, for $\alpha \in W$, \begin{align}
    \Vert G[\alpha][u](x) - G_M[\alpha][u](x) \Vert_{\Lp{2}_{\mu \times \lambda}(U \times \Omega_V)} &\leq \Vert G[\alpha][u](x) \Vert_{\Lp{2}_{\mu \times \lambda}(U \times \Omega_V)} \one_{\{\Vert G[\alpha][u](x) \Vert_{\Lp{2}_{\mu \times \lambda}(U \times \Omega_V)} \geq M\}} \notag
\end{align}
where $\one$ is the indicator function. By Assumption \ref{assumption:Main:assumptions:O3}, we have that $G[\alpha][u](x) \in \Lp{2}_{\nu \times \mu \times \lambda}(W \times U \times \Omega_V)$ which implies that, $\nu$-a.e., $\Vert G[\alpha][u](x) \Vert_{\Lp{2}_{\mu \times \lambda}(U \times \Omega_V)} < \infty$. Hence, $\nu$-a.e., \begin{align*}
    \lim_{M\to \infty} \Vert G[\alpha][u](x) - G_M[\alpha][u](x) \Vert_{\Lp{2}_{\mu \times \lambda}(U \times \Omega_V)} &\leq \lim_{M\to \infty} \Vert G[\alpha][u](x) \Vert_{\Lp{2}_{\mu \times \lambda}(U \times \Omega_V)} \one_{\{\Vert G[\alpha][u](x) \Vert_{\Lp{2}_{\mu \times \lambda}(U \times \Omega_V)} \geq M\}} \\
    &=0
\end{align*}
from which we deduce that $\lim_{M\to \infty} T_1 = 0$: we can therefore pick $M = M(\eps/3)>\frac{2\eps}{9\nu(W)^{1/2}}$ large enough so that \begin{equation} \label{eq:universalApproximation:estimateT1}
    T_1 \leq \frac{\eps}{3}.
\end{equation}

We now tackle the $T_2$ term. We first note that the set $\Ck{0}(U,V)$ is a Polish space by \cite[Theorem 4.19]{kechris1995} since $U$ is a compact subset of the metric space $\Ck{0}(\Omega_U)$ and $V$ is Polish. By Assumptions  \ref{assumption:Main:assumptions:O1} and \ref{assumption:Main:assumptions:O3}, this implies that $G_M: W \mapsto \Ck{0}(U,V)$ is a Borel measurable map from a Polish space into another one. Define $\delta_1 = \frac{4\eps}{ (3M)^2}$: by \cite[Theorem 7.1.13]{Bogachev}, 
we can therefore find a compact set $W_K \subseteq W$ with $\nu(W\setminus W_K) < \delta_1 \eps$ such that $G_M: W_K \mapsto \Ck{0}(U,V)$ is continuous. Define $\delta_2 = 2(81 \nu(W) \mu(U) \lambda(\Omega_V))^{-1/2}$: for the latter map, we can apply Theorem \ref{thm:mainResult:universalApproximationI} to obtain a $\UAPNetwork$ network such that \begin{align}
    &\sup_{\alpha \in W_K} \left\| G_M[\alpha][u](x) - \UAPNetwork[\alpha][u](x)\right\|_{\Lp{2}_{\mu \times \lambda}(U \times \Omega_V)}\notag \\
    &\leq \left\| G_M[\alpha][u](x) - \UAPNetwork[\alpha][u](x)\right\|_{\Lp{\infty}_{\nu \times \mu \times \lambda}(W_K \times U \times \Omega_V)} \l \mu(U) \lambda(\Omega_U) \r^{1/2} \notag \\
    &\leq \delta_2 \eps (\mu(U) \lambda(\Omega_U))^{1/2} = \frac{2\eps}{9\nu(W)^{1/2}}\label{eq:universalApproximation:approximationNetworkK}
\end{align}
By Lemma \ref{lem:orthogonal}, the $\UAPNetwork$ network may be re-written as \[
\UAPNetwork[\alpha][u](x) = \sum_{k=1}^N \sum_{i=1}^M \tau_k(x) b_{ki}(u) L_{ki}(\alpha)
\]
where $\{\tau_k\}_{k=1}^N$ is a set of orthonormal neural networks with one hidden layer and a linear output layer with respect to the inner product in $\Lp{2}_\lambda(\Omega_V)$, for $1 \leq k \leq N$, $\{b_{ki}\}_{i=1}^{M}$ is a set of orthonormal neural networks with one hidden layer and a linear output layer with respect to the $\Lp{2}((U,\mu),\bbR)$ inner product and $\{L_{ki}\}_{i=1}^{M}$ is a set of neural networks with one hidden layer and a linear output layer.

In particular, this implies that for all $\alpha \in W$, 
\begin{align}
    \Vert \UAPNetwork[\alpha][u](x) \Vert_{\Lp{2}_{\mu \times \lambda}( U \times \Omega_V)}^2 &= \sum_{k=1}^N \sum_{i=1}^{M} \sum_{l=1}^{N} \sum_{j=1}^{M} L_{ki}(\alpha) L_{lj}(\alpha) \int_U b_{ki}(u)b_{lj}(u) \, \dd \mu(u) \int_{\Omega_V} \tau_{k}(x)\tau_l(x) \, \dd \lambda(x) \notag \\
    &= \sum_{k=1}^N \sum_{i=1}^{M} \sum_{j=1}^{M} L_{ki}(\alpha) L_{kj}(\alpha) \int_U b_{ki}(u)b_{kj}(u) \, \dd \mu(u) \label{eq:universalApproximation:orthonormalTau} \\
    &= \sum_{k=1}^N \sum_{i=1}^{M} L_{ki}(\alpha)^2 \label{eq:universalApproximation:orthonormalBeta} \\
    &=: \Vert L(\alpha) \Vert_{\ell^2(\bbR^{N\cdot M})}^2 \notag
\end{align}
where we used the orthonomality of $\{\tau_k\}_{k=1}^N$ for \eqref{eq:universalApproximation:orthonormalTau} and the fact that, for $1 \leq k \leq N$, $\{b_{k\bar{\ell}}\}_{\bar{\ell}=1}^{M}$ is a set of orthonormal functionals for \eqref{eq:universalApproximation:orthonormalBeta}.

Define $\delta_3 = \frac{2}{9\nu(W)^{1/2}}$ and the network \[
\overline{\UAPNetwork}[a][u](y) = \sum_{k=1}^N \sum_{i=1}^M \tau_k(x) b_{ki}(u) \gamma_{ki}(L_{ki}(\alpha))
\] 
where $\gamma(x):\mathbb{R}^{N \cdot M} \mapsto \bbR^{N \cdot M}$ is a ReLU neural network with coordinates $\{\gamma_{ki}\}_{k=1,i=1}^{k=N,i=M}$ such that \[
\begin{cases}
    \Vert \gamma(x) - x \Vert_{\ell^2(\bbR^{N\cdot M})} < \eps \delta_3 &\text{if $\Vert x \Vert_{\ell^2(\bbR^{N \cdot M})} \leq M + \frac{2\eps}{9\nu(W)^{1/2}}$} \\
    \Vert \gamma(x) \Vert_{\ell^2(\bbR^{N\cdot M})} \leq 2M &\text{for all $x\in \bbR^{N\cdot M}$}
\end{cases}
\] 
which exists due to \cite[Lemma C.2]{Lanthaler2022} and the fact that $M > \frac{2\eps}{9\nu(W)^{1/2}}$. We now estimate as follows, starting from the $T_2$ term in \eqref{eq:universalApproximationII:reductionBounded} where $\mathcal{N}[\alpha][u](x) = \overline{\UAPNetwork}[a][u](x)$: \begin{align}
    T_2 &= \left\| G_M[\alpha][u](x) - \overline{\UAPNetwork}[a][u](x) \right\|_{\Lp{2}_{\nu \times \mu \times \lambda}(W \times U \times \Omega_V)} \notag \\
    & \leq \left\| G_M[\alpha][u](x) - \overline{\UAPNetwork}[a][u](x) \right\|_{\Lp{2}_{\nu \times \mu \times \lambda}(W_K \times U \times \Omega_V)} \notag \\
    & \qquad+ \left\| G_M[\alpha][u](x) - \overline{\UAPNetwork}[a][u](x) \right\|_{\Lp{2}_{\nu \times \mu \times \lambda}((W\setminus W_K) \times U \times \Omega_V)} \notag \\
    & \leq \left\| G_M[\alpha][u](x) - \UAPNetwork[a][u](x) \right\|_{\Lp{2}_{\nu \times \mu \times \lambda}(W_K \times U \times \Omega_V)} \notag \\
    & \qquad+ \left\| \overline{\UAPNetwork}[a][u](x) - \UAPNetwork[a][u](x) \right\|_{\Lp{2}_{\nu \times \mu \times \lambda}(W_K \times U \times \Omega_V)} \notag \\
    &\qquad+ \left\| G_M[\alpha][u](x) - \overline{\UAPNetwork}[a][u](x) \right\|_{\Lp{2}_{\nu \times \mu \times \lambda}((W\setminus W_K) \times U \times \Omega_V)} \notag \\
    & =: T_3 + T_4 + T_5. \label{eq:universalApproximation:T2}
\end{align} 
By \eqref{eq:universalApproximation:approximationNetworkK}, we have that 
\begin{equation} \label{eq:universalApproximation:T3}
    T_3 \leq \sup_{\alpha \in W_K} \left\| G_M[\alpha][u](x) - \UAPNetwork[\alpha][u](x)\right\|_{\Lp{2}_{\mu \times \lambda}(U \times \Omega_V)} \nu(W)^{1/2} \leq  \frac{2\eps}{9}.
\end{equation}
For $T_4$, we start by computing the following: 
for $\alpha \in W_K$, 
\begin{align}
    \Vert \UAPNetwork[\alpha][u](x) \Vert_{\Lp{2}_{\mu \times \lambda}( U \times \Omega_V)} &= \l \sum_{k=1}^N \sum_{i=1}^{M} L_{ki}(\alpha)^2 \r^{1/2} \notag \\
    &\leq \Vert G_M[\alpha][u](x) - \UAPNetwork[\alpha][u](x) \Vert_{\Lp{2}_{\mu \times \lambda}( U \times \Omega_V)} + \Vert G_M[\alpha][u](x) \Vert_{\Lp{2}_{\mu \times \lambda}( U \times \Omega_V)} \notag \\
    &\leq \frac{2\eps}{9\nu(W)^{1/2}} + M \label{eq:universalApproximation:boundNetworkK1}
\end{align}
where we used \eqref{eq:universalApproximation:approximationNetworkK} for \eqref{eq:universalApproximation:boundNetworkK1}.
Then, we estimate: \begin{align}
    T_4 &\leq \sup_{\alpha \in W_k}  \left\| \overline{\UAPNetwork}[a][u](x) - \UAPNetwork[a][u](x) \right\|_{\Lp{2}_{ \mu \times \lambda}(U \times \Omega_V)} \nu(W_K)^{1/2} \notag \\
    &\leq \nu(W)^{1/2} \sup_{\alpha \in W_K} \Vert L(\alpha) - \gamma(L(\alpha)) \Vert_{\ell^2(\bbR^{N\cdot M})} \label{eq:universalApproximation:l2norm} \\
    &\leq \nu(W)^{1/2} \sup_{\Vert x \Vert_{\ell^2(\bbR^{N\cdot M})} \leq M + \frac{2\eps}{9\nu(W)^{1/2}} } \Vert x - \gamma(x) \Vert_{\ell^2(\bbR^{N\cdot M})} \label{eq:universalApproximation:gamma1} \\
    &\leq \nu(W)^{1/2} \eps \delta_3 = \frac{2\eps}{9} \label{eq:universalApproximation:T4}
\end{align}
where we used the same computation to obtain \eqref{eq:universalApproximation:orthonormalBeta} for \eqref{eq:universalApproximation:l2norm}, \eqref{eq:universalApproximation:boundNetworkK1} for \eqref{eq:universalApproximation:gamma1} and the definition of $\gamma$ for \eqref{eq:universalApproximation:T4}.

For $T_5$, we estimate as follows: \begin{align}
    T_5 &\leq \nu(W\setminus W_K)^{1/2} \l \sup_{\alpha \in W \setminus W_K } \Vert \overline{\UAPNetwork}[\alpha][u](x) \Vert_{\Lp{2}_{\mu \times \lambda}( U \times \Omega_V)} + \sup_{\alpha\in W\setminus W_K } \Vert G[\alpha][u](x) \Vert_{\Lp{2}_{\mu \times \lambda}( U \times \Omega_V)} \r \notag \\
    &\leq \nu(W\setminus W_K)^{1/2} \l  \Vert \gamma(L(\alpha)) \Vert_{\ell^2(\bbR^{N \cdot M})} + M \r \label{eq:universalApproximation:l2norm2} \\
    &\leq 3M\nu(W\setminus W_K)^{1/2} \label{eq:universalApproximation:gamma2} \\
    &\leq 3M\l \eps\delta_1 \r^{1/2} = \frac{2\eps}{9} \label{eq:universalApproximation:T5}
\end{align}
where we proceeded analogously to \eqref{eq:universalApproximation:orthonormalBeta} for \eqref{eq:universalApproximation:l2norm2} and used the definition of $\gamma$ for \eqref{eq:universalApproximation:gamma2}. 

Combining our estimates \eqref{eq:universalApproximation:T3}, \eqref{eq:universalApproximation:T4} and \eqref{eq:universalApproximation:T5}, by \eqref{eq:universalApproximation:T2}, we deduce that $T_2 \leq \frac{2}{3}\eps$. Using \eqref{eq:universalApproximation:estimateT1} and \eqref{eq:universalApproximationII:reductionBounded} allows us to conclude that \[
\left\| G[\alpha][u](x) - \overline{\UAPNetwork}[\alpha][u](x)\right\|_{\Lp{2}_{\nu \times \mu \times \lambda}(W \times U \times \Omega_V)} \leq \eps. \qedhere
\] 

\end{proof}

\subsection{Scaling laws proofs}
\subsubsection{Proof of Theorem \ref{thm:back:functionalApproximationLinfty}}
\label{sec_backfunctionalApproximationLinfty}
We first prove the functional approximation rate in $\Lp{\infty}$. The intuition behind the proof parallels our discussion in Section \ref{sec:back:scalingLaws}.

\begin{proof}[Proof of Theorem \ref{thm:back:functionalApproximationLinfty}]

Let $\delta > 0$ and $ \mathcal{C}_U = \{\mathcal{B}_{\delta}(c_m) \}_{ m  = 1}^{n_{c_U}}$ be a finite cover of $\Omega_U$ by $c_U$ Euclidean balls where $c_U$ can be further estimated by \cite[Corollary 2]{liu2024neuralscalinglawsdeep}.
By \cite[Lemma 1]{liu2024neuralscalinglawsdeep}, there exists a $\Ck{\infty}(\Omega_U) \subseteq \Ck{\infty}(\mathcal{C}_U)$ partition of unity 
$\{\omega_m(x):\Omega_U \mapsto \bbR\}_{m=1}^{n_{c_U}}$ subordinate to the cover $\mathcal{C}_U$. This allows us to consider a discrete-to-continuum lifting from $[-\beta_U, \beta_U]^{n_{c_U}}$ to $\Ck{\infty}(\Omega_U)$: we define the mapping $I_{\mathcal{C}_U}:[-\beta_U, \beta_U]^{n_{c_U}} \mapsto \Ck{\infty}(\Omega_U)$ by
\begin{align*}
        I_{\mathcal{C}_U}[z](x) = \sum_{m=1}^{n_{c_U}} [z]_m\omega_m(x)
\end{align*}
for all $z \in [-\beta_U, \beta_U]^{n_{c_U}}$ and $ x \in \Omega_U$. 
Conversely, we can define a continuum-to-discrete projection $P_{\mathcal{C}_U}:\Ck{0}(\Omega_U)\mapsto [-\beta_U, \beta_U]^{n_{c_U}}$ by $P_{\mathcal{C}_U}(z) = (z(c_1),\dots,z(c_{n_{C_U}}))^\top$ for $z \in \Ck{0}(\Omega_U)$.

 We note the following point-wise error approximation for any $u\in U$ and $x \in \Omega_U$:
    \begin{align*}
        |u(x)-I_{\mathcal{C}_U}[P_{\mathcal{C}_U}(u)](x)|\leq &\sum_{m=1}^{n_{c_U}} |u(x) - u(c_m)| |\omega_m(x)|\\
        =&\sum_{m: \|x - c_m\|_2\leq \delta} |u(x)-u(c_m)||\omega_m(x)| \leq  L_U\delta
    \end{align*}
implying that $\Vert u -  I_{\mathcal{C}_U}[P_{\mathcal{C}_U}(u)] \Vert_{\Lp{\infty}} \leq \delta L_U$.
Setting $\delta=\frac{\varepsilon}{2L_fL_U}$ and using the Lipschitz property of $f$, continuing from the above, we obtain 
    \begin{align}
       |f(u)-f(I_{\mathcal{C}_U}[P_{\mathcal{C}_U}(u)])|\leq L_f\|u-I_{\mathcal{C}_U}[P_{\mathcal{C}_U}(u)]\|_{\Lp{\infty}(\Omega_U)}\leq L_f L_U\delta =\frac{\varepsilon}{2}\label{eq:functionalApproximation:LipschitzBound}
    \end{align}
   
Next, 
we define $\hat{f}: [-\beta_U, \beta_U]^{n_{c_U}}\rightarrow \mathbb{R}$ such that $\hat{f}(z) = f(I_{\mathcal{C}_U}[z]) = f\l \sum_{m=1}^{n_{c_U}} [z]_m \omega_m(x)\r$. 
We claim that $\hat{f}$ is a Lipschitz function on $[-\beta_U, \beta_U]^{n_{c_U}}$. Indeed, let $z_1,z_2 \in [-\beta_U, \beta_U]^{n_{c_U}}$ and  estimate as follows:
    \begin{align*}
       |\hat{f}(z_1) - \hat{f}(z_2)| = & |f(I_{\mathcal{C}_U}[z_1])-f(I_{\mathcal{C}_U}[z_2])|\\
       \leq &L_f\|I_{\mathcal{C}_U}[z_1] - I_{\mathcal{C}_U}[z_2]\|_{\Lp{\infty}(\Omega_U)}\\
    \leq & L_f \sup_{x \in \Omega_U}  \sum_{m=1}^{n_{c_U}} \left| \left([z_1]_m-[z_2]_m\right)\omega_m(x)\right|\\
&\leq L_f \sup_{x \in \Omega_U}  \sqrt{\sum_{m=1}^{n_{c_U}} \left([z_1]_m-[z_2]_m\right)^2 } \sqrt{ \sum_{m=1}^{n_{c_U}} \left(\omega_m(x)\right)^2}\\
&\leq L_f \| z_1-z_2 \|_{\ell^2(\bbR^{n_{c_U}})} \sup_{x \in \Omega_U} \sqrt{ \sum_{m=1}^{n_{c_U}} \omega_m(x) \dd x}\\
&= L_f\| z_1-z_2 \|_{\ell^2(\bbR^{n_{c_U}})}.
    \end{align*}
Since $\hat{f}$ is Lipschitz continuous on the compact set $[-\beta_U, \beta_U]^{n_{c_U}}$, it is bounded by some constant $C_{\hat{f}}$ and 
we can deduce that $\hat{f} \in V(n_{c_U},\beta_U,L_f,C_{\hat{f}})$ for some set of functions $V$ (see Assumption \ref{assumption:Main:assumptions:S4}).

Consequently, we apply the function approximation Theorem \ref{thm:back:functionApproximation}. Specifically, for any  $\varepsilon_0>0$, there exists a constant $C$ depending on $\beta_U$ and $L_f$ such that the following holds. There exists 
\begin{itemize}
    \item a network class $\cF_{\rm NN}(n_{c_U},1,L,p,K,\kappa,R)$ whose parameters scale as \begin{align*}
    &L=\mathcal{O}\left(n_{c_U}^2\log(n_{c_U})+n_{c_U}^2\log(\varepsilon_0^{-1})\right),\quad  p = \mathcal{O}(1),\quad K = \mathcal{O}\left(n_{c_U}^2\log n_{c_U}+n_{c_U}^2\log(\varepsilon_0^{-1})\right), \\ &\kappa=\mathcal{O}(n_{c_U}^{n_{c_U}/2+1}\varepsilon_0^{-n_{c_U}-1}),\qquad \, R=1
    \end{align*}
    where the constants hidden in $\mathcal{O}$ depend on $\beta_U$ and $L_f$,
    \item networks $\{b_k\}_{k=1}^{H^{n_{c_U}}} \subset \cF_{\rm NN}(n_{c_U},1,L,p,K,\kappa,R)$ with $H := C \sqrt{n_{c_U}} \eps_0^{-1}$ and
    \item points $\{s_k\}_{k=1}^{H^{n_{c_U}}} \subset [-\beta_U, \beta_U]^{n_{c_U}}$
\end{itemize}
such that 
    \begin{align} \notag
        \sup_{z\in [-\beta_U, \beta_U]^{n_{c_U}}}\left|\hat{f}(z)-\sum_{k=1}^{H^{n_{c_U}}} \hat{f}(s_k) b_k(z)\right|\leq \varepsilon_0. 
    \end{align}
We note that $P_{\mathcal{C}_U}(U) \subset [-\beta_U, \beta_U]^{n_{c_U}}$ by the fact that $U$ satisfies $U(d_U,\gamma_U,L_U,\beta_U)$ and hence, setting $\varepsilon_0 = \eps/2$, this does not change the network class scalings,
    \begin{align}
        \sup_{u\in U}\left|\hat{f}(P_{\mathcal{C}_U}[u])-\sum_{k=1}^{H^{n_{c_U}}} \hat{f}(s_k) b_k(P_{\mathcal{C}_U}[u])\right|\leq \frac{\varepsilon}{2}. \label{eq:functionalApproximation:approximation} 
    \end{align}
We conclude as follows: using \eqref{eq:functionalApproximation:LipschitzBound} and \eqref{eq:functionalApproximation:approximation}, for any $u  \in U$, we have
    \begin{align*}
        \sup_{u \in U} &\left|f(u)-\sum_{k=1}^{H^{n_{c_U}}} \hat{f}(s_k) b_k(P_{\mathcal{C}_U}(u)) \right| \leq \sup_{u\in U} \left|f(u)-\hat{f}(P_{\mathcal{C}_U}(u)) \right|\\
        &\qquad \qquad + \sup_{u \in U}\left| \hat{f}(P_{\mathcal{C}_U}(u))-\sum_{k=1}^{H^{n_{c_U}}} \hat{f}(s_k) b_k(P_{\mathcal{C}_U}(u))\right|\\
        &\qquad \qquad  = \sup_{u\in U} \left|f(u)-f(I_{\mathcal{C}_U}[P_{\mathcal{C}_U}(u)]) \right| + \sup_{u \in U}\left| \hat{f}(P_{\mathcal{C}_U}[u])-\sum_{k=1}^{H^{n_{c_U}}} \hat{f}(s_k) b_k(P_{\mathcal{C}_U}[u])\right|\\
         & \qquad \qquad \leq \frac{\varepsilon}{2}+ \frac{\varepsilon}{2}=\varepsilon.
    \end{align*}
Recalling that $ \hat{f}(s_k) = f(I_{\mathcal{C}_U}[s_k])$, we set $u_k = I_{\mathcal{C}_U}[s_k]$ and obtain the claim of the theorem.
\end{proof}

\subsubsection{Proof of Theorem \ref{thm:main:multipleOperatorApproximation}}
\label{sec_main_multipleOperatorApproximation}
In this section, we derive the convergence rate in the multiple operator setting. In particular, the proof is an application of Theorems \ref{thm:back:functionalApproximationLinfty} and \ref{thm:back:operatorApproximation}. 

\begin{proof}[Proof of Theorem \ref{thm:main:multipleOperatorApproximation}]

For $u \in U$ and $x \in \Omega_V$, define the functional $f_{u,x}: \{ \alpha:\Omega_W \mapsto \bbR \spaceBar \Vert \alpha \Vert_{\Lp{\infty}} \leq \beta_W \} \mapsto \bbR$ as \[
    f_{u,x}(\alpha) = G[\alpha][u](x). 
    \]
    In particular, we have that \begin{align}
        \vert f_{u,x}(\alpha_1) - f_{u,x}(\alpha_2) \vert &= \vert G[\alpha_1][u](x) - G[\alpha_2][u](x) \vert \notag \\
        &\leq L_G \Vert \alpha_1 - \alpha_2 \Vert_{\Lp{r_G}(\Omega_W)} \label{eq:multipleOperatorApproximation:Lipschitz1} \\
        &\leq L_G \vert \Omega_W \vert^{1/(r_G)} \Vert \alpha_1 - \alpha_2 \Vert_{\Lp{\infty}(\Omega_U)}. \notag 
    \end{align}
where we use \eqref{eq:multipleOperatorApproximation:Lipschitz} for \eqref{eq:multipleOperatorApproximation:Lipschitz1}.
Therefore, we can apply Theorem \ref{thm:back:functionalApproximationLinfty}. Specifically, for any  $\varepsilon_0>0$, there exists constants $C''$ and $C_{\zeta}$ depending on $\beta_W, L_G \vert \Omega_W \vert^{1/r_G}$ and $L_G \vert \Omega_W \vert^{1/r_G},L_W$ respectively such that the following holds. 
There exists 
\begin{itemize}
    \item a constant $\zeta:=C_{\zeta}\varepsilon$ and points $\{y_m\}_{m=1}^{n_{c_W}}\subset \Omega_W$ so that $\{\mathcal{B}_{\zeta}(y_m) \}_{ m = 1}^{n_{c_W}}$ is a cover of $\Omega_W$ for some $n_{c_W}$,
    \item a network class $\cF_3 = \cF_{\rm NN}(n_{c_W},1,L_3,p_3,K_3,\kappa_3,R_3)$ whose parameters scale as \begin{align*}
    &L=\mathcal{O}\left(n_{c_W}^2+n_{c_W}\log(\varepsilon_0^{-1})\right),\quad  p = \mathcal{O}(1),\quad K = \mathcal{O}\left(n_{c_W}^2\log n_{c_W}+n_{c_W}^2\log(\varepsilon_0^{-1})\right), \\ &\kappa=\mathcal{O}(n_{c_W}^{n_{c_W}/2+1}\varepsilon_0^{-n_{c_W}-1}),\qquad \, R=1
    \end{align*}
    where the constants hidden in $\mathcal{O}$ depend on $\beta_W$ and $L_G \vert \Omega_W \vert^{1/r_G}$,
    \item networks $\{l_p\}_{p=1}^{P^{n_{c_W}}} \subset \cF_3$ with $P := C'' \sqrt{n_{c_W}} \eps_0^{-1}$ and
    \item functions $\{\alpha_p\}_{p=1}^{P^{n_{c_W}}} \subset \{ \alpha:\Omega_W \mapsto \bbR \spaceBar \Vert \alpha \Vert_{\Lp{\infty}} \leq \beta_W \}$
\end{itemize}
such that 
    \begin{align}
\sup_{\alpha \in W} \left| f_{u,x}(\alpha) - \sum_{p=1}^{P^{n_{c_W}}} f_{u,x}(\alpha_k)l_p(P_{\mathcal{C}_W}(\alpha)) \right| = \sup_{\alpha \in U} \left| G[\alpha][u](x) - \sum_{p=1}^{P^{n_{c_W}}} G[\alpha_p][u](x) l_p(P_{\mathcal{C}_W}(\alpha)) \right| &\leq \eps_0 \notag 
\end{align}
where $P_{\mathcal{C}_W}(\alpha)$ is defined in the proof of Theorem \ref{thm:back:functionalApproximationLinfty}.

By assumption, $G[\alpha_p] \in \mathcal{G}$ for all $1 \leq p \leq P^{n_{c_W}}$. This corresponds to the situation in \eqref{eq:rem:multipleOperatorSameDomainSameRange} where $G^{(p)} = G[\alpha_p]$, i.e. to the problem of approximating $P^{n_{c_W}}$ single operators. 
For $1 \leq p \leq P^{n_{c_W}}$, we therefore apply Theorem \ref{thm:back:operatorApproximation} 
and the rest of the argument to obtain \eqref{eq:main:multipleOperatorApproximation} is analogous to the proof of Theorem \ref{thm:back:operatorApproximation} in Remarks \ref{rem:uniformOperatorApproximation} and \ref{rem:balancingComplexity} (with $\eps_0 = \eps/2$ and $\eps_1 = \eps/(2P^{n_{c_W}})$).
\end{proof}

\section{Numerical Experiments}\label{sec:experiments}

\begin{figure*}[t]
    \centering
    \includegraphics[width=0.8\linewidth]{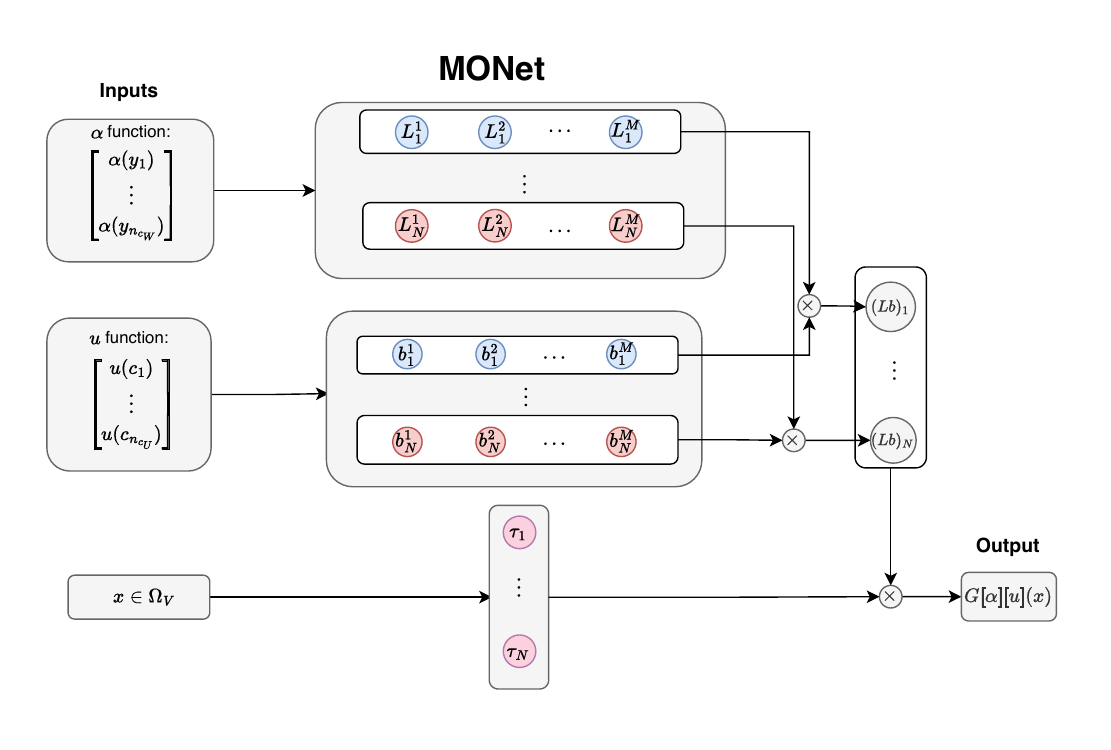}
\caption{\textbf{MONet architecture}: The $\alpha$ function  is the input for the parameter-approximation network. The $u$ function is the input for the function-approximation network. The spatial values $x\in\Omega_V$ are the input for the space-approximation network. 
} 
    \label{fig:monet}
\end{figure*}

\begin{figure*}[t]
    \centering
    \includegraphics[width=0.8\linewidth]{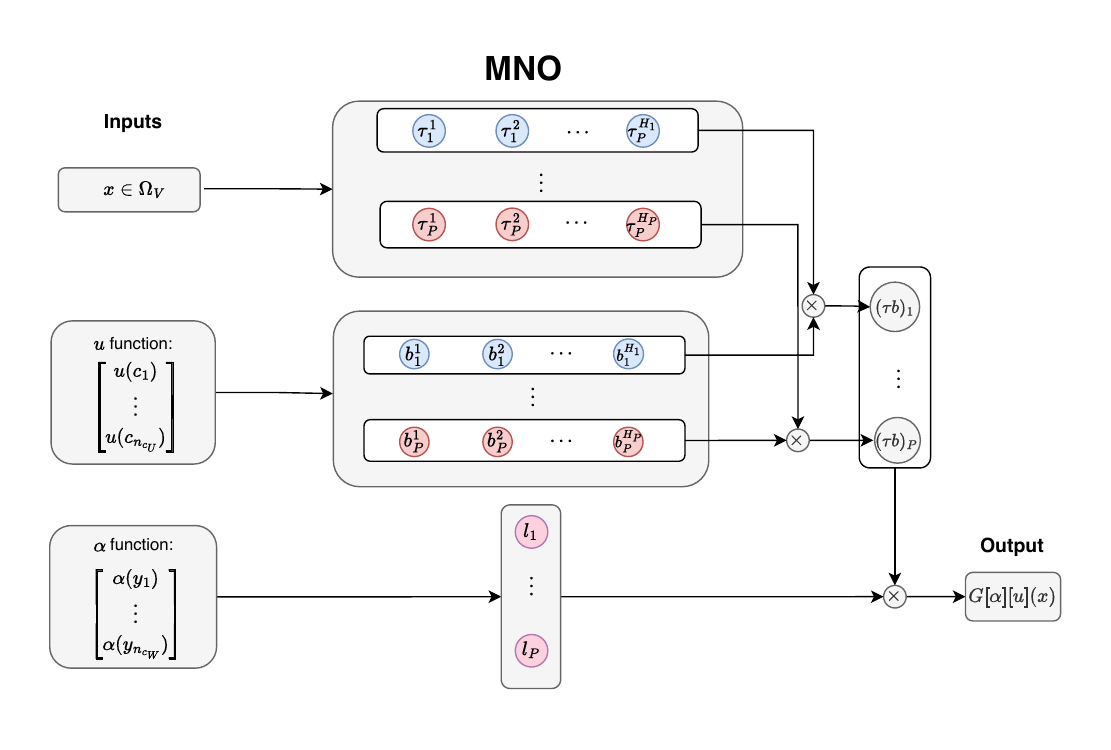}
\caption{\textbf{MNO architecture}: The $\alpha$ function  is the input for the parameter-approximation network. The $u$ function is the input for the function-approximation network. The spatial values $x\in\Omega_V$ are the input for the space-approximation network. }
    \label{fig:mno}
\end{figure*}

In this section, we refer to $\tau$ as the space-approximation network, $b$ as the function-approximation network, and $L$ or $l$ as the parameter-approximation network. 
To evaluate the versatility and effectiveness of $\ScalingNetwork$ and $\UAPNetwork$, we test both architectures (see Figures \ref{fig:mno} and \ref{fig:monet}) on five representative parametric PDEs, spanning settings in which the parameter~$\alpha$ is modeled either as a function or as a finite-dimensional vector in~$\mathbb{R}^{p}$. In all experiments, the objective is to predict the PDE solution at points $x = (t,x_{\text{spatial}}) \in (0,2] \times [0,2]$, given the parametric function~$\alpha$ and the initial condition~$u_0$.
 
We construct  50 initial conditions for each PDE following the sinusoidal formulation proposed in~\cite{takamoto2022pdebench}:
\begin{equation} \label{sineIC}
    u_0(x) = \sum_{i=1}^{4} A_i \sin(k_i x + \phi_i),
\end{equation}
where $k_i = \pi n_i$ and $n_i$ are uniformly sampled integers in $[1, 4]$.
The amplitudes $A_i$ are sampled uniformly from $[0,1]$, 
and $\phi_i$ are random phases drawn from $(0, 2\pi)$. Following the setup in~\cite{takamoto2022pdebench,sun2025foundation}, 
after computing~\eqref{sineIC}, each initial condition undergoes random post-processing: 
with 10\% probability, the absolute value of $u_0$ is taken, with 50\% probability, 
its sign is flipped (i.e., multiplied by $-1$), and with 10\% probability, it is multiplied by the indicator function of a randomly chosen smooth subdomain of~$[0,2]$.

The resulting initial conditions are then sampled at points 
$\{c_i\}_{i=1}^{n_{c_U}}$ over the domain~$[0,2]$ and provided as inputs 
to the function-approximation networks. In particular, we use:
 $n_{c_U} = 64$ and cell-center points (i.e. midpoints of uniform grid cells) $\{c_i\}_{i=1}^{64}$ .
The space-approximation networks take as input the spatiotemporal coordinates \((t,x) \in \mathbb{R}^2\).

All models are trained with mean squared loss (MSE). We evaluate them on a $32 \times 64$ spacetime grid over $[0,2] \times [0,2]$, 
and report the average relative $\Lp{2}$ error across all test cases:
\[
\frac{1}{N_{\text{test}}}\sum_{i=1}^{N_{\text{test}}}
\frac{\|u_{\text{pred}}^{(i)} - u_{\text{target}}^{(i)}\|_2}
{\|u_{\text{target}}^{(i)}\|_2 + \varepsilon},
\]
where $(u_{\text{pred}}^{(i)}, u_{\text{target}}^{(i)})$ corresponds to the $i$-th pair of predicted and reference solution functions, each evaluated at all points of the discretized spacetime grid, $\varepsilon = 10^{-5}$ and $N_{\text{test}} = 80 \times 50$, corresponding to 80 distinct parameter samples~$\alpha$ (i.e., 80 distinct parametrized PDEs), each evaluated with 50 different initial conditions~$u_0$.

We compare our models $\ScalingNetwork$ and $\UAPNetwork$, with DeepONet and MIONet with different configurations, as detailed in Table~\ref{tab:config}. We employ two network configurations for MNO: MNO-S (small) which uses 1.19M parameters and MNO-L (large) which uses 16.7M. This is computational feasible since MNO's tensor structure is more amenable to larger model complexity.  In ``DeepONet-C", we simply concatenate the $\alpha$ and $u$ inputs together and put them into a single function-approximation layer. This is also a theoretically valid approach to training multiple operators; however, as shown in the experiments, does not preform as well as $\ScalingNetwork$ and $\UAPNetwork$. Note that in the experiments, the training time for MNO-S, MONet, and DeepONet are comparable, while the training times for MNO-L and MIONet are larger as expected. Additional details on the experimental setup, including network hyperparameters and training times, are provided in Appendix~\ref{MOLEdetail}.

\begin{table}[H]
\centering
\small
\renewcommand{\arraystretch}{1.4}
\setlength{\tabcolsep}{2pt}
\begin{tabularx}{\linewidth}{>{\bfseries}p{2.2cm} *{6}{Y}}
\toprule
& DeepONet & DeepONet-C & MIONet & $\UAPNetwork$ & $\ScalingNetwork$-S & $\ScalingNetwork$-L \\
\midrule
\textbf{Number of parameters} 
& 1.47M 
& 1.47M 
& 1.50M 
& 1.15M 
& 1.19M 
& 16.7M \\
\midrule
\textbf{\# of $L$ or $l$ (P)} 
& \multicolumn{3}{c|}{\textbf{N/A}} 
& $1$
& 10 
& 40 \\
\textbf{Depth of $L$ or $l$ } 
& \multicolumn{3}{c|}{\textbf{N/A}} 
& 4 
& 4 
& 4 \\
\midrule
\textbf{\# of $b$ (M/H)} 
& 100
& 100
& 75 
& 100
& 20 
& 100 \\
\textbf{Depth of $b$} 
& 4 
& 4 
& 4 
& 4 
& 4 
& 4 \\
\midrule
\textbf{\# of $\tau$ (N)} 
& 20 
& 20 
& 75
& 20 
& 20
& 20 \\
\textbf{Depth of $\tau$} 
& 6 
& 6 
& 6 
& 6 
& 6 
& 6 \\
\bottomrule
\end{tabularx}
\caption{Model configurations and architectural details for all tested variants. 
Abbreviations: \textbf{S} refers to the small version; \textbf{L} to the large version; and the symbol \# denotes the number of corresponding elements. For reference, the number of parameter-approximation networks ($L$ or $l$) corresponds to P in Equations~\eqref{eq:mole1} and~\eqref{eq:main:multipleOperatorApproximation};
the number of function-approximation networks ($b$) corresponds to M in Equation~\eqref{eq:mole1} and H in Equation~\eqref{eq:main:multipleOperatorApproximation};
and the number of space-approximation networks ($\tau$) corresponds to N in both Equation~\eqref{eq:mole1} and~\eqref{eq:main:multipleOperatorApproximation}. For simplicity, all powers of $P$, $M$, $H$, and $N$ are omitted.}

\label{tab:config}
\end{table}

\subsection{Conservation Laws} 

We consider the following one-dimensional conservation law with periodic boundary conditions:
\begin{align*}
u_t + (\alpha_1 u + \alpha_2 u^2 + \alpha_3 u^3)_x &= \alpha_4 u_{xx}, \quad (t,x) \in [0,2] \times [0,2], \\
u(0,x) &= u_0(x), \\
u(t,0) &= u(t,2),
\end{align*}
where the parameter vector $\boldsymbol{\alpha} = [\alpha_1, \alpha_2, \alpha_3, \alpha_4]^\top$ is encoded within the parameter-approximation layers of both $\ScalingNetwork$ and $\UAPNetwork$.
The components of $\boldsymbol{\alpha}$ are sampled from the ranges $\alpha_i \in [0.9\alpha_i^c, 1.1 \alpha_i^c]$, with the reference values given by $\alpha^c = [1, 1, 1, 0.1]^\top$.

\begin{table}[htbp]
    \centering
    \caption{Performance comparison on conservation laws. In in-distribution (IN) experiments, we set $\alpha_i\in [0.9 \alpha_i^c, 1.1 \alpha_i^c]$, and in out-of-distribution (OOD) experiments, we set  $\alpha_i\in [0.8 \alpha_i^c, 1.2 \alpha_i^c]$
    }
    \label{tab:conresult}
    \begin{tabular}{c|c|c}
    \multirow{2}{*}{\textbf{Model}} & \multicolumn{2}{c}{\textbf{Relative $\Lp{2}$ Error}} \\ \cline{2-3}
    & IN & OOD \\\hline
    DeepONet & 6.59\% & 9.00\% \\
    DeepONet-C & 5.36\% & 6.59\% \\
    MIONet & 5.65\% & 8.48\% \\
    $\UAPNetwork$ & 5.67\% & 7.20\% \\
    $\ScalingNetwork$-S & 4.49\% & 6.64\% \\
    $\ScalingNetwork$-L & \textbf{3.84\%} & \textbf{5.92\%} \\
\end{tabular}
\end{table}

In this experiment, the family of operators emit solutions with similar (viscous) shock or rarefraction profiles, mainly differing in speeds. The space of potential solutions likely lie on a lower dimensional structure which shares commonalities between each randomly sampled PDE (i.e. each randomized flux). Thus we expect that the empirical rates and scalings are faster than the general rates proven in the previous sections.  We observe that $\ScalingNetwork$ and $\UAPNetwork$ outperform DeepONet-type architectures with comparable parameter counts, demonstrating the efficacy of its ${\alpha}$-encoding strategy over simple concatenation in the function-approximation networks. Our smaller networks produce in-distribution and out-of-distribution errors which are lower than the standard and concatenated DeepONet (see Table~\ref{tab:conresult}). Figure~\ref{fig:consoutput} shows that the (local) errors for DeepONet and MIONet are more concentrated in the shock formation and dynamics, while $\ScalingNetwork$ and $\UAPNetwork$ demonstrate a more even error distribution with relatively less error around regions of large gradients.

\begin{figure*}[t]
    \centering
    \includegraphics[width=0.8\linewidth]{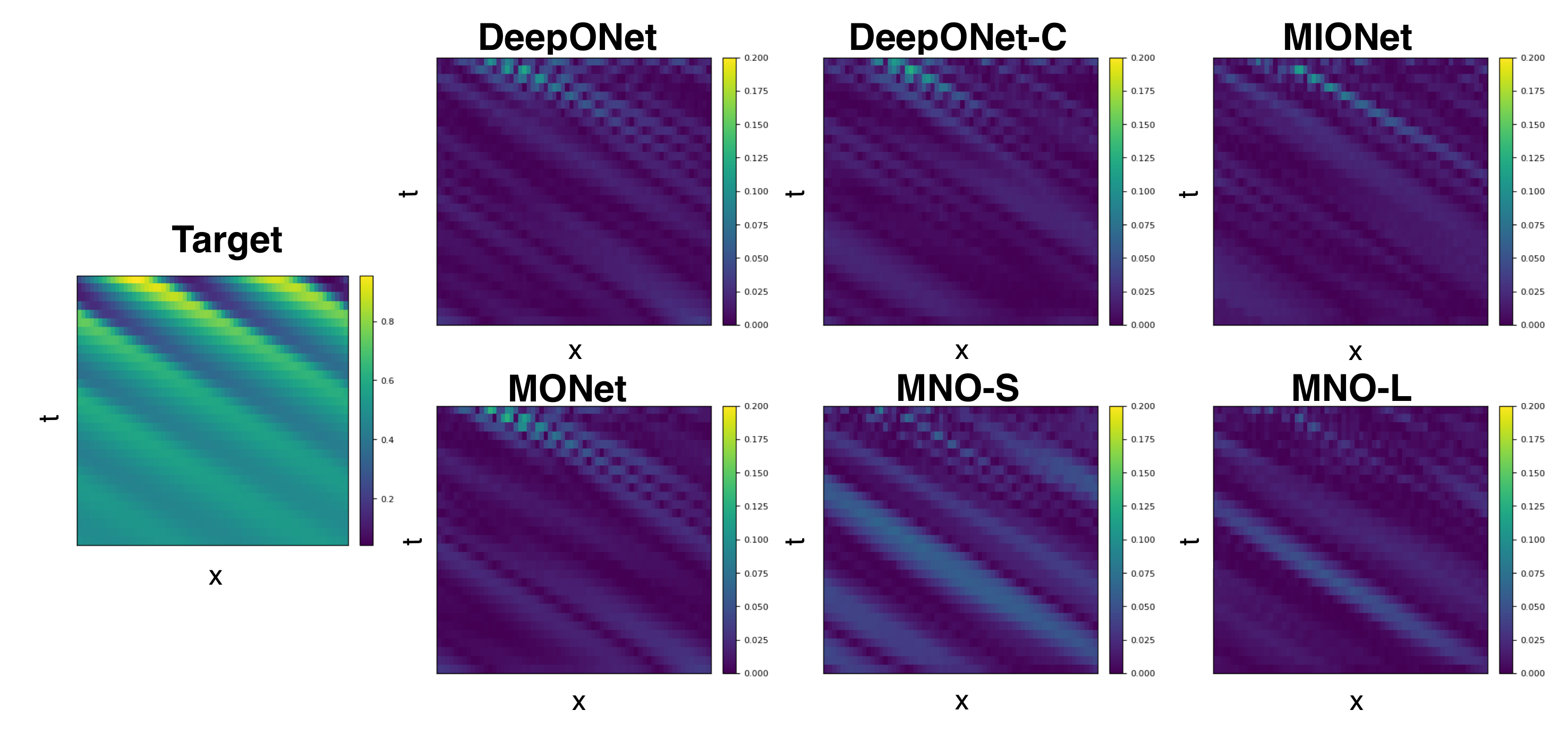}
    \caption{\textbf{Representative solution for conservation laws}:  The target solution (left) and error maps for  DeepONet,  DeepONet-C,  MIONet, $\UAPNetwork$, $\ScalingNetwork$-S and $\ScalingNetwork$-L. The instance-specific relative errors are 5.49\%, 4.82\%,  2.92\%, 5.11\%, 2.52\% and 1.81\%, respectively, aligning with the trends observed in  Table~\ref{tab:conresult}.}
    \label{fig:consoutput}
\end{figure*}

\subsection{Diffusion-Reaction-Advection Equation} 

We consider the following one-dimensional diffusion–reaction–advection equation:
\begin{align*}
u_t &= \alpha_1 u_{xx} + \alpha_2 u_x + \alpha_3 u^{\alpha_4}(1 - u^{\alpha_5}),
\quad (t,x) \in [0,2] \times [0,2], \\
u(0,x) &= u_0(x), \\
u(t,0) &= u(t,2),
\end{align*}
where the parameter vector $\boldsymbol{\alpha} = [\alpha_1, \alpha_2, \alpha_3, \alpha_4, \alpha_5]^\top$ is encoded within the parameter-approximation layers of both $\ScalingNetwork$ and $\UAPNetwork$.
The first three components are sampled from the ranges $\alpha_i \in [0.9\alpha_i^c, 1.1\alpha_i^c]$ for $1 \leq i \leq 3$, with reference values $\alpha^c = [0.01, 1, 1]^\top$, while $\alpha_4$ and $\alpha_5$ are drawn uniformly from $[1,3]$.

\begin{table}[b]
    \centering
    \caption{Performance comparison on diffusion-reaction-advection equation. For $1 \leq i \leq 3$, in in-distribution (IN) experiments, $\alpha_i\in [0.9\alpha_i^c, 1.1\alpha_i^c]$ whereas in out-of-distribution (OOD) experiments,  we set $\alpha_i\in [0.8\alpha_i^c, 1.2\alpha_i^c]$.}
    \label{tab:draresult}
    \begin{tabular}{c|c|c}
    \multirow{2}{*}{\textbf{Model}} & \multicolumn{2}{c}{\textbf{Relative $\Lp{2}$ Error}} \\ \cline{2-3}
    & IN & OOD \\\hline
    DeepONet & 13.63\% & 15.10\% \\
    DeepONet-C & 4.91\% & 7.07\% \\
    MIONet & 3.95\% & 7.06\% \\
    $\UAPNetwork$ & 3.80\% & 6.21\% \\
    $\ScalingNetwork$-S & 3.39\% & 5.47\% \\
    $\ScalingNetwork$-L & \textbf{2.51}\% & \textbf{4.27\%} \\
\end{tabular}
\end{table}

In this experiment, the parameters change the governing equation in a nonlinear fashion. This may be the cause for the larger error observed in the DeepONet models. From Table~\ref{tab:draresult}, we observe that $\ScalingNetwork$ and $\UAPNetwork$ variants produce more accurate solutions even with comparable parameter counts. The $\ScalingNetwork$ and $\UAPNetwork$ produce better in- and out-of-distribution predictions. The structured encoding in $\ScalingNetwork$ ensures more effective parameter sharing, which could be contributing to the lower error rates.  Figure~\ref{fig:draoutput} illustrates this performance difference: notably, $\ScalingNetwork$ is able to substantially reduce errors in regions that consistently show elevated error across all other methods. This shows an intrinsic difference between the underlying features learned by the family of models. 

\begin{figure*}[htbp]
    \centering
    \includegraphics[width=0.8\linewidth]{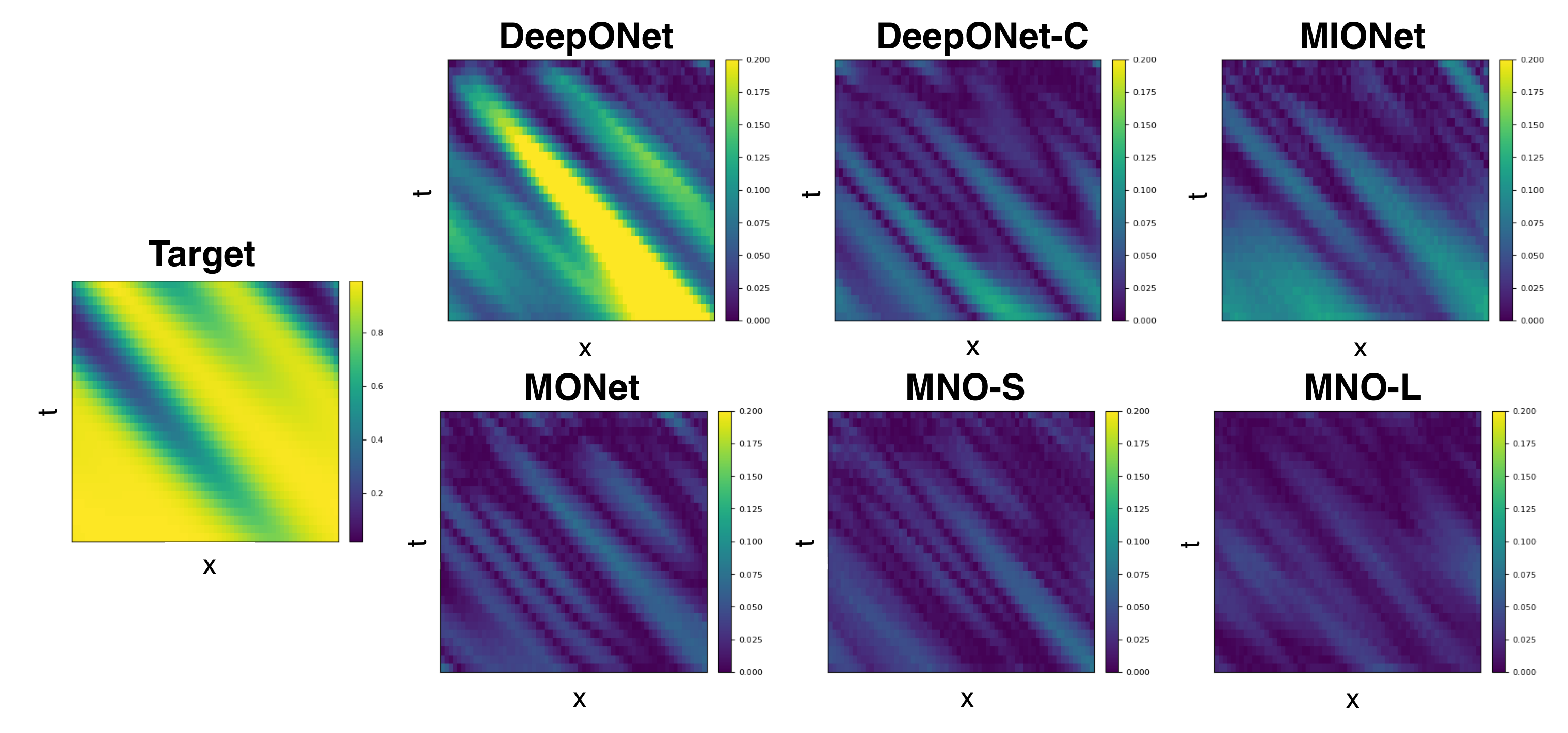}
    \caption{\textbf{Representative solution for diffusion-reaction-advection equation}: The target solution (left) and error maps for DeepONet,  DeepONet-C,  MIONet, $\UAPNetwork$, $\ScalingNetwork$, and $\ScalingNetwork$-L.  The instance-specific relative errors are  15.02\%, 5.26\% , 6.27\% , 5.26\%, 3.08\% and 2.38\%, respectively, aligning with the trends observed in Table~\ref{tab:draresult}.}
    \label{fig:draoutput}
\end{figure*}

\subsection{Nonlinear Klein-Gordon Equation} 

We consider the following nonlinear Klein–Gordon equation:
\begin{align*}
u_{tt} &= \alpha_1^2 u_{xx} - \alpha_2^2 \alpha_1^4 u - \alpha_3 u^3,
\quad (t,x) \in [0,2] \times [0,2], \\
u(0,x) &= u_0(x), \\
u_t(0,x) &= 0, \\
u(t,0) &= u(t,2).
\end{align*}
The parameter vector $\boldsymbol{\alpha} = [\alpha_1, \alpha_2, \alpha_3]^\top$ is encoded within the parameter-approximation layers of both $\ScalingNetwork$ and $\UAPNetwork$.
The components of $\boldsymbol{\alpha}$ are sampled from the ranges $\alpha_i \in [0.9,\alpha_i^c,,1.1,\alpha_i^c]$ with reference values $\alpha^c = [1,1,1]^\top$.

\begin{table}[htbp]
    \centering
    \caption{Performance comparison on the nonlinear Klein-Gordon equation. In in-distribution (IN) experiments, $\alpha_i\in [0.9\alpha_i^c, 1.1\alpha_i^c]$ whereas in out-of-distribution (OOD) experiments,  we set $\alpha_i\in [0.85\alpha_i^c, 1.15\alpha_i^c]$ for $i\in[1,2,3]$}
    \label{tab:nkgresult}
     \begin{tabular}{c|c|c}
    \multirow{2}{*}{\textbf{Model}} & \multicolumn{2}{c}{\textbf{Relative $\Lp{2}$ Error}} \\ \cline{2-3}
    & IN & OOD \\\hline
    DeepONet & 24.03\% & 33.82\% \\
    DeepONet-C & 5.67\% & 7.90\% \\
    MIONet & 7.73\% & 13.78\% \\
    $\UAPNetwork$ & 4.53\% & 7.87\% \\
    $\ScalingNetwork$-S & 3.56\% & 7.30\% \\
    $\ScalingNetwork$-L & \textbf{2.50}\% & \textbf{5.90\%} \\
\end{tabular}
\end{table}

In this experiment, the governing equation is a second-order hyperbolic PDE and thus produces wave-like solutions. Notably, Table~\ref{tab:nkgresult} shows that DeepONet-C achieves lower relative errors than MIONet on this task, while $\ScalingNetwork$ further improves performance, yielding substantially smaller errors overall. Figure~\ref{fig:nkgoutput} shows that most models' errors have coarse and low-frequency patterns appear while $\ScalingNetwork$ does not. Additionally, as the parameter counts increase, the error associated with $\ScalingNetwork$ decreases locally as well.

\begin{figure*}[htbp]
    \centering
    \includegraphics[width=0.8\linewidth]{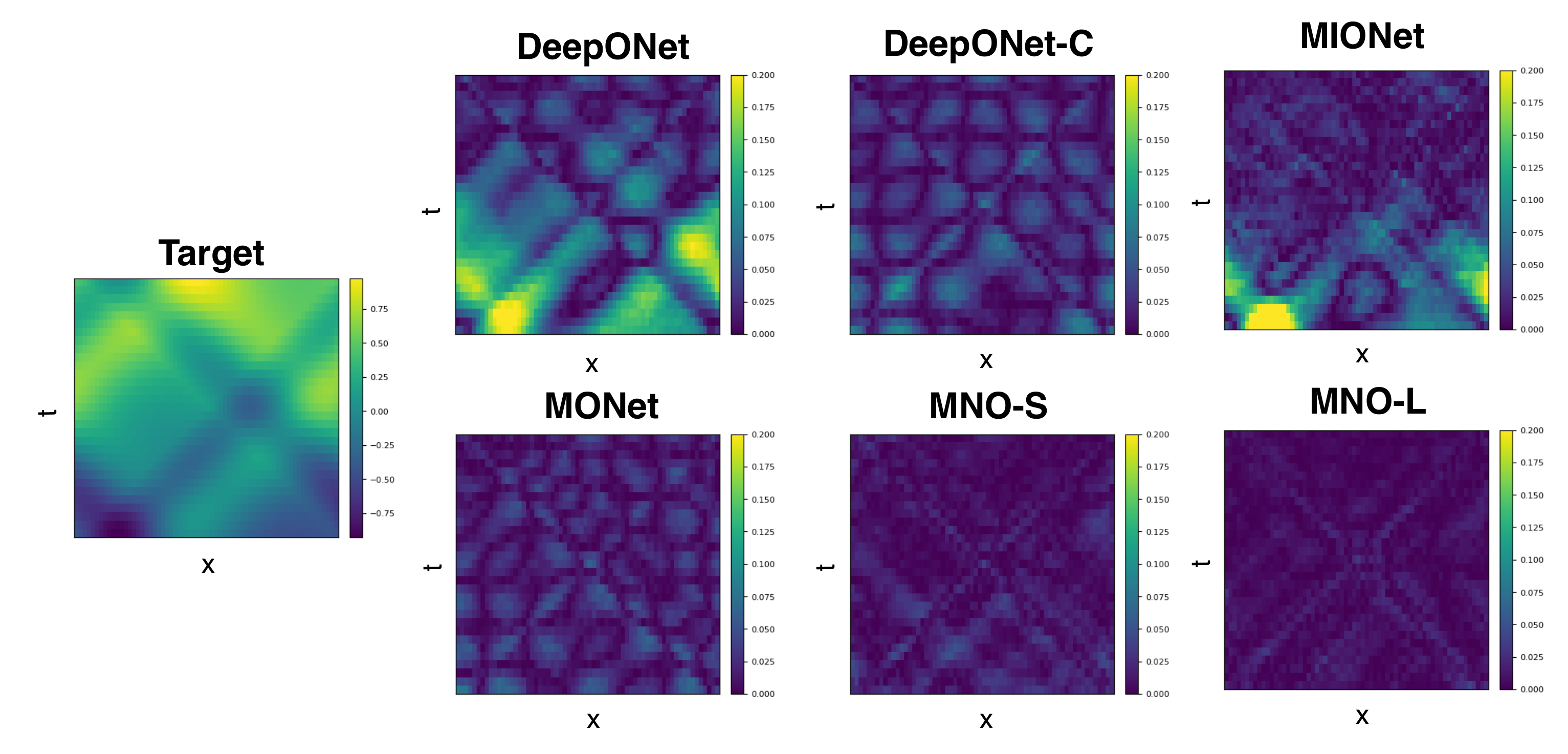}
    \caption{\textbf{Representative solution for the nonlinear Klein-Gordon equation}: The target solution (left) and error maps for DeepONet,  DeepONet-C,  MIONet, $\UAPNetwork$, $\ScalingNetwork$-S, and $\ScalingNetwork$-L.  The instance-specific relative errors are 19.84\%, 7.99\%,  14.26\%, 6.06\%, 3.76\% and 2.34\%,  respectively, aligning with the trends observed in Table~\ref{tab:nkgresult}.}
    \label{fig:nkgoutput}
\end{figure*}

\subsection{Parametric Diffusion-Reaction Equation} 

We consider the following parametric diffusion–reaction equation:
\begin{align*}
u_t &= (\alpha(x)u_{x})_x + u(1 - u), \quad (t,x) \in [0,2] \times [0,2], \\
u(0,x) &= u_0(x), \\
u(t,0) &= u(t,2),
\end{align*}
where the spatially varying diffusivity  $\alpha(x) $ 
is sampled from a Gaussian random process with variance $0.01^2$. 
The parametric function $\alpha(x)$ is evaluated at 129 sensor locations corresponding to the boundaries of uniformly spaced cells, $\{x_i^b\}_{i=1}^{129}$, and the resulting values ${\alpha(x_i^b)}$ are encoded within the parameter-approximation networks of $\ScalingNetwork$ and $\UAPNetwork$.

\begin{table}[b]
    \centering
    
   \caption{Performance comparison on the parametric diffusion-reaction equation (in-distribution).}
    \label{tab:pdrresult}
   \begin{tabular}{l|c}
    \textbf{Model} & \textbf{Relative $\Lp{2}$ Error} \\ \hline
    DeepONet & 9.68\% \\
    DeepONet-C & 6.59\% \\
    MIONet & 5.65\% \\
    $\UAPNetwork$ & 5.77\% \\
    $\ScalingNetwork$-S & 4.62\% \\
    $\ScalingNetwork$-L & \textbf{3.34\%} \\
\end{tabular}
\end{table}

This problem is more challenging since the parametric inputs are spatial dependent and are differentiated within the diffusion term. As shown in Table~\ref{tab:pdrresult}, $\ScalingNetwork$-L achieves the highest accuracy among all tested models.
This improvement stems from the structured parameter encoding introduced by the parameter-approximation layers, yielding substantially better performance than simply concatenating $\alpha$ with the function-approximation inputs.
Figure~\ref{fig:pdroutput} further illustrates that, although all models exhibit localized error regions near the bottom right of the domain, $\ScalingNetwork$ markedly reduces this region and yields significantly lower local errors.

 \begin{figure*}[htbp]
    \centering
    \includegraphics[width=0.8\linewidth]{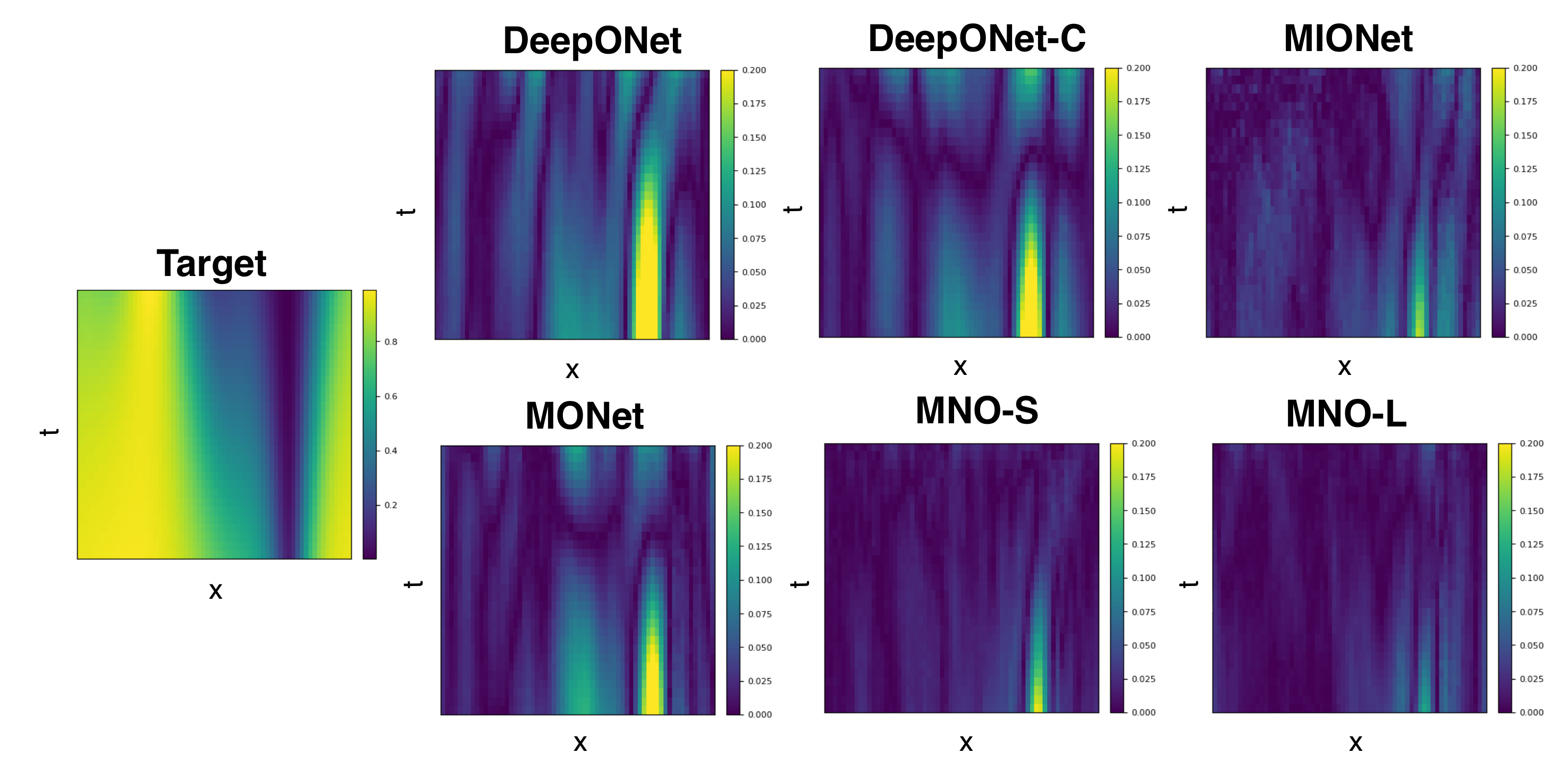}
 \caption{\textbf{Representative solution for the parametric diffusion-reaction equation}: The target solution (left) and error maps for  DeepONet,  DeepONet-C,  MIONet, $\UAPNetwork$,$\ScalingNetwork$-S and $\ScalingNetwork$-L. The instance-specific relative errors are 9.66\%, 7.52\%,  4.60\%, 6.84\%, 3.48\% and 2.83\%,  respectively, aligning with the trends observed in Table~\ref{tab:pdrresult}.}
   
    \label{fig:pdroutput}
\end{figure*}

\subsection{Parametric wave Equation} 

We consider the following parametric wave equation:
\begin{align*}
u_{tt} &= \alpha^2(t) u_{xx}, \quad (t,x) \in [0,2] \times [0,2], \\
u(0,x) &= u_0(x), \\
u_t(0,x) &= 0, \\
u(t,0) &= u(t,2),
\end{align*}
where the time-dependent parameter function $\alpha(t)$ is drawn from a Gaussian random process with variance $1$.
The parametric function $\alpha(t)$ is evaluated at 64 sensor locations corresponding to the boundaries of uniformly spaced cells, $\{t_i^b\}_{i=1}^{64}$, and the resulting values ${\alpha(t_i^b)}$ are encoded within the parameter-approximation networks of $\ScalingNetwork$ and $\UAPNetwork$.

\begin{table}[htbp]
    \centering
    
   \caption{Performance comparison on the parametric diffusion-reaction equation (in-distribution).}
    \label{tab:pwaveresult}
   \begin{tabular}{l|c}
    \textbf{Model} & \textbf{Relative $\Lp{2}$ Error} \\ \hline
    DeepONet & 56.37\% \\
    DeepONet-C & 9.31\% \\
    MIONet & 13.66\% \\
    $\UAPNetwork$ & 6.95\% \\
    $\ScalingNetwork$-S & 5.72\% \\
    $\ScalingNetwork$-L & \textbf{4.41}\% \\
\end{tabular}
\end{table}

Since the parametric function is the time-dependent wave speed, an error in capturing the dependence can lead to incorrect dynamics for all time. From Figure~\ref{fig:pwaveoutput} we see that $\ScalingNetwork$ and $\UAPNetwork$ demonstrate a more balanced and overall lower local error distribution (see also Table \ref{tab:pwaveresult}), whereas the remaining models show pronounced error concentrations and patterned error. The patterns likely indicate that larger features are missing in the model. In particular, in regions with higher contrast, the comparable models emit coarse scale errors that degrade their predictive capabilities. 

 \begin{figure*}[htbp]
    \centering
    \includegraphics[width=0.8\linewidth]{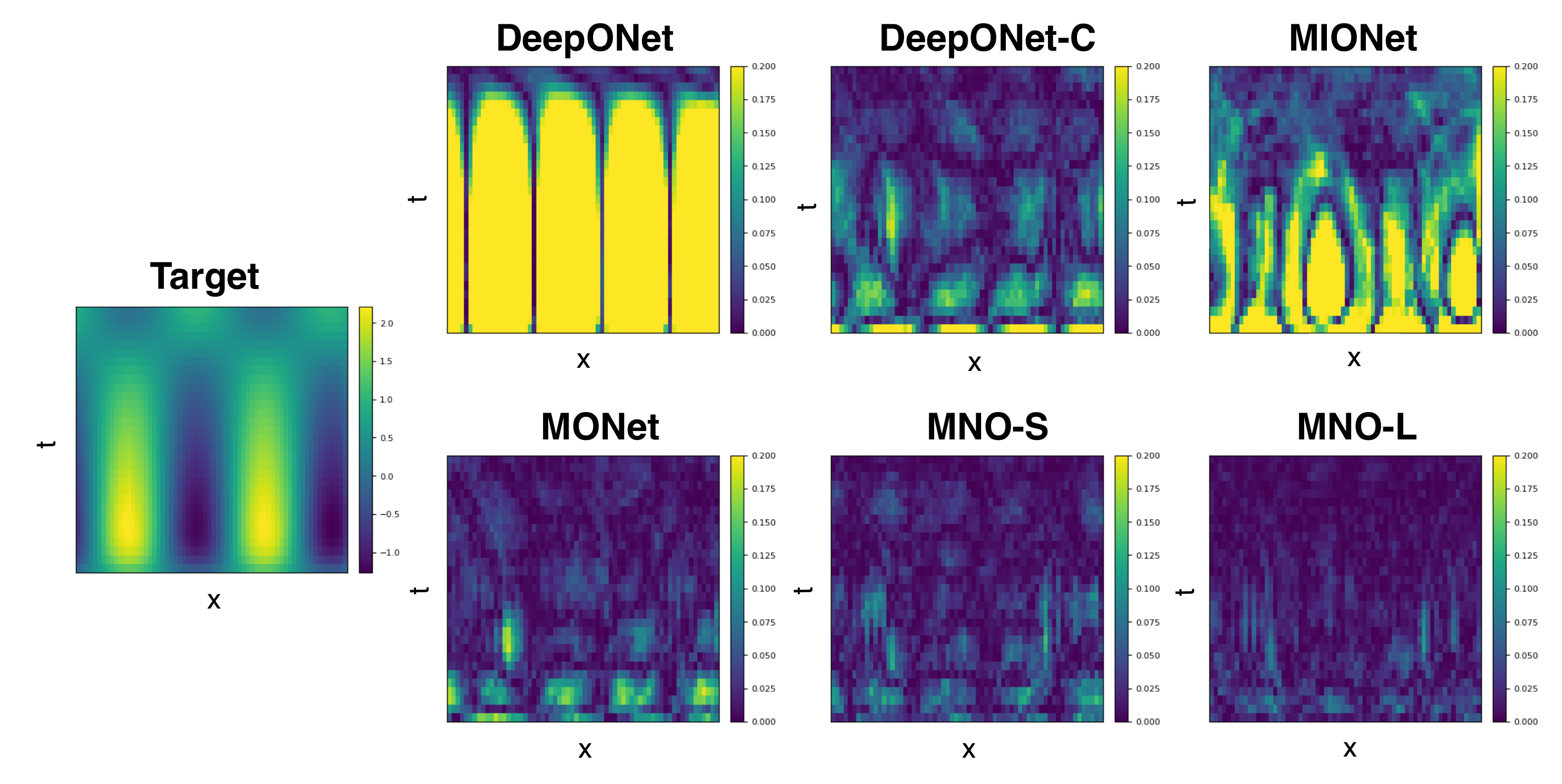}

 \caption{\textbf{Representative solution for the parametric wave equation}: The target solution (left) and error maps for  DeepONet,  DeepONet-C,  MIONet, $\UAPNetwork$, $\ScalingNetwork$-S and $\ScalingNetwork$-L, The instance-specific relative errors are 79.45\%, 6.99\%,   17.33\% , 4.83\%, 3.70\% and 2.47\%, respectively, aligning with the trends observed in Table~\ref{tab:pdrresult}.}
   
    \label{fig:pwaveoutput}
\end{figure*}

\section{Conclusion} \label{sec:discussion}

In this work, we provided theoretical insights into the problem of learning a collection of operators using neural networks. For the multiple operator learning setting, we introduced two new architectures, $\ScalingNetwork$ and $\UAPNetwork$, and established their universal approximation properties across different classes of operators. 
Our analysis covered continuous, integrable, and Lipschitz operators. 
In the latter case, we derived explicit scaling laws for $\ScalingNetwork$, quantifying how the network size must grow to achieve a prescribed approximation accuracy. 
We further empirically validated the effectiveness of both architectures on a range of parametric PDE problems, confirming their strong performance in practice. For the case of learning several single operators, we showed that the theoretical approximation order yields new insights into how computational complexity can be balanced among subnetworks and how overall scaling efficiency can be improved. 
This provides a principled framework for architectural design. 

Future research directions in the multiple operator learning context include establishing lower bounds on approximation and sample complexity similarly to \cite{lanthalerStuart}, developing a rigorous theory of generalization error as in \cite{liu2024neuralscalinglawsdeep}, extending the current analysis of approximation order, and exploring possible extensions to kernel-based operator learning frameworks \cite{BATLLE2024112549,jalalian2025dataefficientkernelmethodslearning}.

\section*{Acknowledgment}
AW and HS were supported in part by NSF DMS 2427558.  JS was supported in part by AFOSR FA9550-23-1-0445.
ZZ was supported by the U.S. Department of Energy (DOE) Office of Science Advanced Scientific Computing Research program DE-SC0025440.

\bibliographystyle{plain}
\bibliography{references}{}

\appendix

\section{Experiment Setup}
\label{MOLEdetail}
\subsection{Training}

The models are trained using the AdamW optimizer for 50 epochs where each epoch is 2K steps.  On 2 NVIDIA GeForce RTX 4090 GPUs with 24 GB memory, Table~\ref{tab:time} indicates the training time for different models and configurations.

 \begin{table}[htbp]
     \centering
          \caption{Training time for different models and configurations.}
     \label{tab:time}
     \begin{tabular}{c|c}
     \hline
       $\UAPNetwork$&  30 min  \\
       $\ScalingNetwork$-S & 30 min \\
       $\ScalingNetwork$-L   & 1 h 9 min\\
       DeepONet & 29 min \\
       DeepONet-C &  29 min\\
       MIONet & 47 min\\
       \hline
     \end{tabular}
 \end{table}
 
\subsection{Hyperparameters}

The optimizer hyperparameters are summarized in Table~\ref{tab:optim_hyperM}.

\begin{table}[htbp]
    \centering
    \caption{Optimizer hyperparameters.}
        \small
    \setlength{\tabcolsep}{2pt} 
\renewcommand{\arraystretch}{2}
    \begin{tabular}{l l | l l }
    \hline
    Learning rate & $10^{-4}$ &Gradient norm clip & 1.0 \\
    Scheduler & Cosine &  Weight decay & $10^{-4}$\\
    Batch data size& 150 & Warmup steps & 10\% of total steps\\ 
     Batch task size& 5 & \\
    \hline
    \end{tabular}
    \label{tab:optim_hyperM}
\end{table}

\end{document}